\theoremstyle{definition}
\newtheorem{theorem}{Theorem}
\newtheorem{lemma}{Lemma}
\newtheorem{assumption}{Assumption}
\newtheorem{remark}{Remark}
\newtheorem{definition}{Definition}
\newcommand\BibTeX{{\rmfamily B\kern-.05em \textsc{i\kern-.025em b}\kern-.08em
		T\kern-.1667em\lower.7ex\hbox{E}\kern-.125emX}}
\begin{document}
	\setstcolor{red}
	
	\runninghead{Lu \textit{et al}.}
	
	\title{Trajectory Generation and Tracking Control for Aggressive Tail-Sitter Flights}
	
	\author{Guozheng Lu, Yixi Cai, Nan Chen, Fanze Kong, Yunfan Ren
		and Fu Zhang}

	\affiliation{Department of Mechanical Engineering, The University of Hong Kong.}
	\corrauth{Fu Zhang, Department of Mechanical Engineering, The University of Hong Kong, HW 7-18,  Pokfulam, Hong Kong.}
	
	\email{fuzhang@hku.hk}

	\begin{abstract}
		We address the theoretical and practical problems related to the trajectory generation and tracking control of tail-sitter UAVs. Theoretically, we focus on the differential flatness property with full exploitation of actual UAV aerodynamic models, which lays a foundation for generating dynamically feasible trajectory and achieving high-performance tracking control. We have found that a tail-sitter is differentially flat with accurate (not simplified) aerodynamic models within the entire flight envelope, by specifying coordinate flight condition and choosing the vehicle position as the flat output. This fundamental property allows us to fully exploit the high-fidelity aerodynamic models in the trajectory planning and tracking control to achieve accurate tail-sitter flights. Particularly, an optimization-based trajectory planner for tail-sitters is proposed to design high-quality, smooth trajectories with consideration of kinodynamic constraints, singularity-free constraints and actuator saturation. The planned trajectory of flat output is transformed  into state trajectory in real-time with optional consideration of wind in environments. To track the state trajectory, a global, singularity-free, and minimally-parameterized on-manifold MPC is developed, which fully leverages the accurate aerodynamic model to achieve high-accuracy trajectory tracking within the whole flight envelope. The proposed algorithms are implemented on our quadrotor tail-sitter prototype, ``Hong Hu", and their effectiveness are demonstrated through extensive real-world experiments in both indoor and outdoor field tests, including agile SE(3) flight through consecutive narrow windows requiring specific attitude and with speed up to \SI{10}{m/s}, typical tail-sitter maneuvers (transition, level flight and loiter) with speed up to \SI{20}{m/s}, and extremely aggressive aerobatic maneuvers (Wingover, Loop, Vertical Eight and Cuban Eight) with acceleration up to \SI{2.5}{g}. The video demonstration is available at \url{https://youtu.be/2x_bLbVuyrk}.
	\end{abstract}
	
	\keywords{Differential flatness, trajectory generation, motion control, tail-sitter UAVs}
	
	\maketitle
	
	\renewcommand*{\thefootnote}{\arabic{footnote}}
	
	\section{Introduction}
	\label{sec_intro}
	A tail-sitter unmanned aerial vehicle (UAV) is a type of vertical takeoff and landing (VTOL) flying machine that takes off and lands vertically on its tail while tilts the entire airframe in a near horizontal attitude for forward flight. Its hybrid fixed-wing and rotary-wing design combines advantages of the VTOL capability, aerodynamic efficiency, and hence extends the power endurance and flight range. Compared to other hybrid designs of VTOL UAVs, like tilt-rotors \citep{carlson2014hybrid, ozdemir2014design}, tilt-wings \citep{ccetinsoy2011design}, rotor-wing \citep{mckenna2007one}, and dual-systems \citep{park2014arcturus, gu2017development}, tail-sitters have rotors fixed to the wing and use their thrust in all flight conditions, leading to a mechanically simple, lightweight and efficient airframe configuration, which is particularly important for small-scale, low-cost, portable UAVs. Such UAVs hold immense potentials for a wide range of industrial and civil applications, such as infrastructure inspection, geological surveying, environment mapping, and  post-disaster search and rescue. These exciting opportunities have attracted intensive research interests and led to the development of a variety of tail-sitter UAV prototypes, such as the single-propeller configuration \citep{frank2007hover, wang2017design, de2018design}, the shoulder-mounted twin-engine configuration \citep{bapst2015design, ritz2017global, sun2018design}, and the quadrotor configuration \citep{oosedo2013development, gu2018coordinate}.
	
	 To accommodate the escalating demand of  real-world applications, tail-sitter UAVs must be able to execute highly aggressive  maneuvers, including forward transition to level flight, back transition break, and quickly bank turns. The agile maneuverability is crucial for the UAV to navigate at high-speed through obstacle-dense environments.  Unlike conventional airplanes that fly in open space,  tail-sitter UAVs are subjected to more challenging flight conditions of fast-varying speed and attitude, asking a holistic design of the trajectory generation and tracking control, where the former aims to plan a smooth, dynamically-feasible, and collision-free trajectory and the latter should track the planned trajectory with small errors. 
	
	While the planning and control of multicopter UAVs have been comprehensively studied by leveraging the differential flatness property of the systems \citep{mellinger2011minimum, faessler2017differential}, thus stimulating a wealth of practical applications, like flying through narrow gaps \citep{mellinger2012trajectory, falanga2017aggressive, ren2022online}, perching on structures \citep{mellinger2012trajectory, hang2019perching}, autonomous safe navigation \citep{shen2011autonomous, zhou2019robust, zhang2020falco}, and drone racing \citep{foehn2021time},  the equivalent techniques for tail-sitter UAVs are relatively underdeveloped. The  differential flatness for tail-sitters,  which resolves the full states and inputs of the system from finite flat outputs and their derivatives, has not been rigorously investigated. A significant hurdle confronting this task lies in the complex nonlinear aerodynamics inherent to tail-sitter UAVs. While the wings of a tail-sitter can produce the desired lift force to enhance power efficiency, they also introduce highly nonlinear aerodynamic forces into the system dynamics.  Unlike fixed-wing airplanes that are primarily confined to a conservative level flight regime where the wing aerodynamics are well understood as being linear, tail-sitters usually operate within a large flight envelope with a wide range of angle of attack (AoA), where the wing aerodynamics exhibit extreme nonlinearity. Consequently, the study on differential flatness, as well as high-precision planning and control of tail-sitter UAVs that fully exploit aerodynamic models are significantly complicated and still remains an open question.
	
	Besides the theoretical difficulty, trajectory generation and tracking of tail-sitter UAVs are also confronted with  practical challenges.  For example, during outdoor long-range missions, a tail-sitter UAV often suffers from  model uncertainties and considerable wind disturbances. Other constraints such as actuator saturation, sensor noise and limited onboard computation resource also ask for high robustness and computation efficiency of the designed algorithms. 
	
	\subsection{Contributions} 
	In this work, we address the challenge of high-quality trajectory generation and high-performance tracking control of tail-sitters by leveraging the differential flatness property, aiming to enable agile tail-sitter flights within the whole envelope in real-world environments. Specifically, our contributions are outlined as follows.
	\begin{itemize}
		\item [1)]  We show that the tail-sitter is differentially flat in the coordinated flight condition, in considering the actual aerodyanamic model without any simplifications. 
		
		\item [2)] Based on the differential flatness, we develop an optimization-based trajectory generation  method  enabling aggressive flights while taking account of actuator constraints, flight time, dynamical feasibility, and singularity conditions in coordinated flight.
		
		\item [3)]   We propose a two-stage control architecture.  The first stage transforms the planned flat-output trajectory into a state-input trajectory while compensating wind effect and treating singularities. The second stage is a real-time state trajectory tracking controller.
		
		\item [4)] For the second stage, we develop a global, model-based, minimally-parameterized and singularity-free  model predictive control (MPC) for trajectory tracking within the entire tail-sitter flight envelope. 
		
		\item [5)] We demonstrate and validate our algorithms via extensive real-world experiments on an actual quadrotor tail-sitter prototype in both indoor and outdoor environments.  To our best knowledge, it is the first tail-sitter demonstration of flying through narrow tilted windows and outdoor aerobatics.
		
	\end{itemize}
	
	\subsection{Outline}
	The outline of the rest of the paper is as follows. Section \ref{sec_related_work} reviews the related work. The system dynamics including the aerodynamic model are introduced in Section \ref{sec_flight_dyn}. The fundamental property of differential flatness is proved in Section \ref{sec_diff_flat}. Section \ref{sec_ctrl_archi} describes the system architecture including high-level trajectory generation and tracking, and low-level control. Section \ref{sec_traj_gen} presents the optimization-based trajectory generation and its solver. Section \ref{sec_ctrl} derives the error-state dynamics along the reference trajectory leading to an on-manifold MPC. Section \ref{sec_experiment} presents  real-world experiments validating our approach. Finally, Section \ref{sec_diss_and_con} concludes the paper with  extensions and limitations.
	
	\section{Related work}
	\label{sec_related_work}
\begin{table*}[t!]
    \centering
    \small
    \setlength\tabcolsep{4.5pt}
    \caption{Comparison of the stat-of-the-art global control methods for tail-sitter UAVs. }
    \begin{threeparttable} 
	\begin{tabular}{|l l l l l l l|}
		\hline 
		\multirow{2}{*}{Study} & \multirow{2}{*}{Methodology} & \multirow{2}{*}{Aerodynamic}  & \multirow{2}{*}{Singularity} & \multirow{2}{*}{Flight} &  \multirow{2}{*}{Wind} & \multirow{2}{*}{Demo}\\
		& & Model &   &Condition  & Compensation & Flights \\ 
		\hline 
		\multirow{2}{*}{Ours} & \multirow{2}{*}{MPC $\&$} & \multirow{2}{*}{Classic} & \multirow{2}{*}{Specific Airspeed} & \multirow{2}{*}{Coordinated}  & \multirow{2}{*}{On Reference} & \multirow{2}{*}{$*,\dagger, \ddagger$}\\ 
		& Differential Flatness &  &   &  &  &  \\
		&  &  &   & &  &\\ [-2em]
		\multirow{2}{*}{\cite{tal2022global}} & \multirow{2}{*}{Cascaded PD $\&$ INDI $\&$ } & \multirow{2}{*}{$\phi$-Theory} & \multirow{2}{*}{Specific Airspeed} & \multirow{2}{*}{No Restriction}  & \multirow{2}{*}{On Input }  &\multirow{2}{*}{$*, \ddagger$}\\
		& Differential Flatness &  &   &  &   &  \\	
		&  &  &   & &  &\\ [-2em]
		\multirow{2}{*}{\cite{lustosa2017phi}} & \multirow{2}{*}{Scheduled LQR } & \multirow{2}{*}{$\phi$-Theory} & \multirow{2}{*}{None} & \multirow{2}{*}{No Restriction} & \multirow{2}{*}{None}  &\multirow{2}{*}{$*$}\\
		&  &  &   & &  &\\ [-1em]
		\multirow{2}{*}{\cite{ritz2017global}} & \multirow{2}{*}{Cascaded PID} & \multirow{2}{*}{Classic } & \multirow{2}{*}{Specific Airspeed} & \multirow{2}{*}{Coordinated} & \multirow{2}{*}{None}  &\multirow{2}{*}{$*$}\\
		&  &  &  & & &\\ [-1em]
		\multirow{2}{*}{\cite{smeur2020incremental}} & \multirow{2}{*}{INDI} & \multirow{2}{*}{Quasi-Static} & \multirow{2}{*}{Euler Angle} & \multirow{2}{*}{Not Specified} & \multirow{2}{*}{On  Input}  &\multirow{2}{*}{$*$}\\
		&  &  &  & & &\\ [-1em]
		\multirow{2}{*}{\cite{cheng2022transition}} & \multirow{2}{*}{Adaptive Control} & \multirow{2}{*}{-} & \multirow{2}{*}{Euler Angle} & \multirow{2}{*}{Not Specified} & \multirow{2}{*}{None}  &\multirow{2}{*}{$*$}\\
		& &  &   &   &  & \\
		\hline
	\end{tabular}
        \small 
        Symbols $*,\dagger, \ddagger$ indicate three different demonstrated maneuvers:  the typical maneuvers $*$ include common tail-sitter flights such as transition, level flight and loiter; the $SE(3)$ maneuvers $\dagger$ denote a whole-body flying motion with specified pose and velocity; the aerobatic maneuvers $\ddagger$ denotes aggressive maneuvers with large attitude variation and flight speeds. 
    \end{threeparttable} 
\label{tab_control_comparison}
 \vspace{-3mm}
\end{table*}

	\subsection{Tail-sitter control}	
	There is a wealth of research on tail-sitter control which can be generally categorized into two main strategies: the separated control strategy, which consists of several isolated controllers designed for respective flight phases, and the global control that regulates the vehicle maneuvers within the entire envelope under a unified controller. We will discuss these control approaches in the following content.
	
	Since the tail-sitter dynamics reduce to a rotary-wing model and a fixed-wing model in low-speed vertical flight and high-speed level flight respectively, separated control methods \citep{frank2007hover, oosedo2013development, lyu2017hierarchical} usually divide the flight process into three phases -- vertical flight (including takeoff, landing and hovering), transition and level flight -- and design controllers separately for each phase.  The vertical flight dynamics are linearized at the stationary hovering equilibrium \citep{frank2007hover, matsumoto2010hovering, lyu2017hierarchical}, and controlled by means of established control methods for quadrotors, such as loop-shaping, \citep{zhou2018frequency}, robust control \citep{lyu2018disturbance}, and MPC \citep{li2018development}. The level flight controllers are usually borrowed from the fixed-wing airplanes and UAVs, such as the total energy control system \citep{lambregts1983vertical} which is widely used in the open-source autopilot PX4 \citep{meier2015px4}.                  
	
	Transition control is a key challenge for the separated control strategy and there is rich literature addressing this issue. The aerodynamics become highly nonlinear during the transition due to the dramatic change of AoA. An intuitive linear control method is to feed a pre-designed profile of linearly decreasing or increasing pitch angle to the attitude controller with a constant altitude command \citep{verling2016full, lyu2017design}, forcing the vehicle to pitch down or up until triggering the mode-switching condition. Because of the nonlinear dynamics, gain-scheduling techniques \citep{kita2010transition, jung2012development} could be applied to enhance the stability margin. However, this linear method is not always dynamically feasible and usually  results in undesired altitude deviation. The altitude control performance can be improved either by a well-designed transition planner \citep{naldi2011optimal, oosedo2017optimal, wang2017model, li2020transition}  using accurate aerodynamic models or  a sophisticated altitude controller, such as iterative learning control \citep{xu2019acceleration}. A  limitation of these methods is  their focus on the altitude and pitch control to transit a tail-sitter to the level flight phase,  often neglecting the lateral motion or any maneuvers (e.g., bank turns) during the transition, which are necessary for obstacle avoidance in low-altitude cluttered environments. 

	To sum up, although the separated strategy eases the controller design and has widespread use in practice, the controller switching usually causes unexpected transient response, thereby degrading  control performance. Given that a tail-sitter would frequently enter and exit the transition regime (i.e., a specified range of pitch angle and airspeed) when performing aggressive maneuvers, a global control strategy that uses a unified system model and control law serving for the whole envelope without switching among different flight phases (e.g., hovering, transition, and level flight) is  more preferable. This direction has prompted a significant amount of research. 
	
	Model-free global control methods for tail-sitters do not rely on vehicle aerodynamic models, but manage to approximate the aircraft dynamics locally and stabilize the local approximation by using linear theory. For example, \cite{barth2020model} proposed a cascaded model-free global control framework based on quasi-static assumptions. The vehicle system is decoupled, approximated and estimated locally as a group of second-order piece-wise linear systems, and thus the reference thrust and attitude can be solved from the desired body velocity.  Similarly, \cite{cheng2022transition} employed an adaptive control law with an IMU-based thrust-attitude decoupling method, assuming zero gradient for the aerodynamic forces. Although model-free methods can estimate and compensate the unmodeled aerodynamics, they apply small control input  at each step to maintain the effective region of the state-input linearization. These approaches are not ideal to agile flights requiring more aggressive control inputs. Consequently, the control performance degrades (i.e., altitude error exceeds \SI{1}{m} during transition) during highly agile maneuvers as demonstrated in \cite{barth2020model, cheng2022transition}.
	
	To further improve the control performance, model-based global controllers of varying sophistication have been proposed.  For instance, \cite{ritz2017global} used a classic aerodynamics model to derive the desired attitude and thrust from the acceleration commands,  by specifying the coordinated flight condition. To enable real-time implementation on a low-cost microcontroller, the aerodynamic model was simplified based on first-principles derivations, leading to considerable tracking errors. \cite{zhou2017unified} also calculated the desired attitude but by solving a non-convex optimization using an accurate aerodynamic model obtained from wind tunnel test.  However, this controller is computational demanding, which precludes real-time implementation. When the airspeed is zero, the definitions of angle of attack and sideslip angle become invalid, introducing singularity into the classic aerodynamic model used by these two research. There are studies employing alternative aerodynamic characterizations to avoid this singularity.   \cite{pucci2013nonlinear} transformed a 2-dimension (2-D) planner VTOL (PVTOL) vehicle into an orientation-independent model, separating the computation of the vehicle thrust and orientation, thereby leading to a unified controller design \citep{pucci2012flight}. The author also derived the conditions, \textit{spherical equivalency}, that  airfoil aerodynamic characteristics must satisfy for the transformation to hold.  \cite{lustosa2017phi} proposed a polynomial-like global aerodynamic parameterization, termed as the $\phi$-theory model, and developed a linear quadratic regulator (LQR) based on the model. Their experiment results show that the LQR gain must be scheduled during the transition to avoid the instability in pitch angle caused by the model errors of $\phi$-theory.  Alternatively,  \cite{smeur2020incremental} designed a global incremental nonlinear dynamic inversion (INDI) controller by linearizing the system at the current control inputs. To design the INDI controller, it requires knowing the current aerodynamic force (and moment) applied to the UAV and its gradient with respect to (w.r.t.) the control input increment (pitch angle and velocity): the former one is obtained from inertial measurement units (IMUs), which suffer from either large measurement noise caused by constant propeller rotation or considerable filter delay; the latter is derived from a simple, heuristic aerodynamic model at a quasi-static condition where the flight path angle is zero. More recently, \cite{tal2021global} integrated the aforementioned $\phi$-theory model and INDI technique into a global cascade PD controller  applicable to both coordinated and uncoordinated flight. They also introduced  feedforward jerk and yaw rate to improve the tracking performance and demonstrated indoor aerobatics \citep{tal2022global}. Compared to the previous INDI method \citep{smeur2020incremental} with an over-simplified aerodynamic model, the $\phi$-theory model used in \cite{tal2021global, tal2022global} can significantly increase the control accuracy. However, similar issues, i.e., significant measurement noise or filter delay still persist in  INDI-based methods. Moreover, the $\phi$-theory models have limited fitting capability, leading to larger model errors  compared with the classic model, as confirmed by the authors in \cite{lustosa2019global}. To sum up, the existing mode-based global control methods typically make compromise between model fidelity and computational load. While  high-fidelity models are costly and impractical for real-time implementation, simplified models are relatively easy to be estimated from limited experimental data, but  tend to degrade the control accuracy to varying extents.

	Our method aims to fully exploit the UAV's actual aerodynamics to achieve high-accuracy and real-time control performance. Compared to the existing works, our proposed control scheme has the following advantages. \textbf{1)} Existing works either give up the vehicle aerodynamic model (e.g., model-free methods \citep{barth2020model, cheng2022transition}) or compromise to simplified models (e.g., simplified classic model \citep{ritz2017global}, spherical equivalence model \citep{pucci2013nonlinear}, steady-level-flight model \citep{smeur2020incremental} and $\phi$-theory model \citep{lustosa2017phi, tal2022global}), while our proposed controller leverages classic aerodynamic models without any simplification on its aerodynamic coefficients. The use of high-fidelity aerodynamic model is crucial to achieve higher control accuracy. \textbf{2)} Existing controllers either ignore wind effect in the environment (e.g. \cite{ritz2017global, lustosa2017phi}), or compensate the disturbance through incremental control updates from increased control error (e.g. \citep{smeur2020incremental, tal2022global}), while our proposed approach incorporates wind effect by adjusting the reference trajectory (e.g., attitude) to maintain coordinated flight based on the differential flatness, and then tracks the adjusted trajectory in real time. Given the  considerable aerodynamic efficiency loss of tail-sitter in windy conditions \citep{vourtsis2023wind}, our proposed feedforward strategy compensates the wind effect in an pre-emptive way before the control error actually accumulates. \textbf{3)} Existing controllers \citep{ritz2017global, tal2022global} simultaneously track trajectories and process singularities, while our work decouples singularities from the tracking controller, by the two-statge architecture. Such separation isolates the singularity treatment from the state tracking controller. \textbf{4)} An on-manifold MPC is proposed for trajectory tracking in high accuracy. MPC tracks full states by solving a finite-horizon optimization at each step to yield the best future behavior based on the system model \citep{borrelli2017predictive}. Its predictive nature that exploits the information of the future reference trajectory, contributes to a higher control bandwidth for trajectory tracking. In contrast, existing works \citep{ritz2017global, barth2020model, smeur2020incremental, tal2022global} usually track the position trajectory in a cascaded control structure (e.g., a position controller followed by an attitude controller), which simplifies the outer loop design, but simultaneously constrains the outer loop's bandwidth. Admittedly, MPC is more computationally demanding and its  convergence is challenging to guarantee, but its predictive nature and constraint handling capability have led to a wealth of successful robotic applications, such as the leading-edge  Boston Dynamics Atlas humanoid robot \citep{Marion2021Flipping}, drone racing \citep{foehn2021time} and aerobatics \citep{kaufmann2020RSS, lu2022manifold}.  In summary, a comparison of our work with those existing state-of-the-art global controllers is presented in Table \ref{tab_control_comparison}. 

    \subsection{Tail-sitter trajectory generation}
	Depending on the control strategy reviewed above, there are different trajectory generation algorithms for tail-sitters in literature. For separated control strategies, trajectories are generated separately for each phase. When the tail-sitter dynamics reduce to a rotary-wing model in low-speed vertical flight, well-established trajectory generation methods for quadrotors (or multicopters) \citep{mellinger2011minimum, mueller2015computationally} are applicable directly. Trajectory planners for quadrotors can be also applied for high-level autonomy, such as obstacle avoidance and autonomous navigation. Similarly, traditional fixed-wing planners \citep{park2004new, chitsaz2007time} can be adapted for tail-sitter in level flights. For example, the L1 guidance proposed by \cite{park2004new} has been widely used in prototype verification \citep{frank2007hover, jung2012development, verling2016full} and commercial Autopilots \citep{meier2015px4} for tail-sitter level flights. 
	
	Generating a transition trajectory between vertical and level flights is relatively challenging due to the nonlinear aerodynamics during the transition. The intuitive linear transition method, which designs linearly increasing or decreasing pitch angle and constant altitude command \citep{verling2016full, lyu2017design} as mentioned before, does not consider the dynamical feasibility, thus requiring a lot of empirical trials and errors. To incorporate dynamic feasibility,  trajectory generation is usually formulated into nonlinear optimization problems subject to different control objectives and constraints. For instance, \cite{kita2010transition} calculated a pitch angle and thrust profile achieving the shortest transition time. \cite{naldi2011optimal} considered a minimum-time and minimum-energy optimal transition problem, while \cite{oosedo2017optimal} and \cite{li2020transition} respectively minimized the flight time and  energy to maintain a constant altitude during the transition flight. However, solving these non-convex optimization problems is computationally expensive, preventing from onboard implementation and online replanning. These methods are also confined to straight-line transition that cannot be extended to other maneuvers like transition with banked turns to avoid obstacles.  Simplified dynamic models like the point-mass model \citep{mcintosh2022transition} can be used to expedite obstacle-free planning, but again, the dynamical feasibility is omitted. Overall,  existing separated trajectory generation approaches  generate  simple trajectories with  limited maneuverability, making them only suitable for flights in open areas. The aerodynamic simplification and kinodynamic limitation  prevent them from being extended to dynamically feasible and agile flights in cluttered environments.
 
	Compared to the separated trajectory generation, designing a dynamically feasible trajectory that spans the entire envelope is a significantly more complex task because the tail-sitter  is an under-actuated system with extremely nonlinear aerodynamics. For under-actuated mechanical systems, such as tail-sitters, the differential flatness is an essential property that can significantly ease trajectory generation. If a dynamic system is differentially flat, its full states and inputs can be determined by algebraic functions of flat outputs and their derivatives \citep{fliess1995flatness, murray1995differential}. This property simplifies the trajectory generation problem to a set of algebraic operations  in the flat-output space. This is a significant reduction in complexity compared to the state-space planning, which usually has to take into account on-manifold kinematic constraints. For example, the differential flatness property of quadrotors \citep{mellinger2011minimum, faessler2017differential} has been thoroughly studied and enabled a variety of applications in trajectory planning.

	Research on the differential flatness of tail-sitter UAVs is  scarce due to the extremely complicated, nonlinear aerodynamics mentioned above. Early research based on simplified models can trace back to 1990s. \cite{hauser1992nonlinear, martin1996different} studied the differential flatness and control of a simple 2-D PVTOL aircraft. \cite{van1998rapid} simply considered the transition dynamics as a nominal flat system where the aerodynamics are treated as perturbations. Recently,  \cite{tal2021global}  showed the differential flatness based on the $\phi$-theory aerodynamic model. The vehicle position and yaw angle are chosen as flat outputs, which allow for a global framework of trajectory optimization \citep{tal2022aerobatic}. The optimization could then be solved efficiently in the flat-output space, and the flatness transformation provides state projections (e.g., mapping acceleration to attitude) in a cascaded controller. Yet, this framework has certain theoretical limitations. First, the differential flatness is built on the coarse $\phi$-theory aerodynamic model, the model errors of which degrade the trajectory quality and the resultant control performance. Second, the $\phi$-theory model assumes a windless condition that only considers the vehicle attitude and velocity w.r.t. the fixed inertial frame, rather than the aerodynamic angles and airspeed. Lastly, the method must assume that the vehicle has no body or vertical rudder that produce side forces. Hence, this differential flatness property is not applicable to outdoor environments commonly with external winds or more general tail-sitter airframes.
	
	Contrasted with early studies based on simplified 2-D models \citep{hauser1992nonlinear, martin1996different},  our work considers the full 3D model of a real tail-sitter UAVs. Furthermore, in comparison to recent research that used simplified aerodynamic model, such as the spherical equivalence model \citep{pucci2013nonlinear} and the $\phi$-theory model \citep{lustosa2017phi}, or that required particular airframe, such as configurations without vertical surfaces necessitated by \citep{tal2022aerobatic}, we prove the differential flatness property on accurate aerodynamic models and more general tail-sitter airframes. Based on the proved differential flatness, we propose a systematic trajectory generation framework for tail-sitter UAVs. High-quality trajectories are optimized subjecting to actuator constraints, flight time and dynamical feasibility. 	

	It is interesting to note that, both \cite{tal2022global}, which assumes no vertical surfaces but with uncoordinated flight, and ours, which assumes coordinated flight, eventually lead to the same effect of avoiding lateral forces. The lateral force would dramatically complicate the solving of the UAV state (i.e., attitude and thrust) due to the highly nonlinear aerodynamic forces.  \cite{zhou2017unified} solves these highly nonlinear constraints by leveraging numerical approach, leading to high computational complexity not suitable for real-time implementation. Instead, avoiding such lateral force could effectively isolate and solve the angle of attack in our work (or Pitch angle in \cite{tal2022global}), hence the rest UAV states.

	\section{Flight dynamics}
	\label{sec_flight_dyn}
	This section introduces the dynamic models that describe the motion of tail-sitters. We define coordinate frames for tail-sitter modeling, trajectory generation, and tracking control in Section \ref{sec_coordinate_def}. The dynamic model of the tail-sitter is presented in Section \ref{sec_airframe_dyn} and Section \ref{sec_aerodyn} introduces the classic aerodynamic models.
	
	\subsection{Coordinate frames}
	\label{sec_coordinate_def}
	As shown in Fig. \ref{fig_coordinate}, the definition of coordinate frames follows the convention of traditional fixed-wing aircraft. The world frame $\{\mathbf{O}, \mathbf{x}, \mathbf{y}, \mathbf{z}\}$ denoted North-East-Down (NED), is considered as the inertial frame. The body frame $\{\mathbf{O}_b, \mathbf{x}_b, \mathbf{y}_b, \mathbf{z}_b\}$ is defined as Forward-Right-Down where the body axis $\mathbf{x}_b$ points along the nose of the aircraft and $\mathbf{O}_b$ is the vehicle center of gravity.  
	\begin{figure}[t!] 
		\centering
		\includegraphics[width=0.8\linewidth]{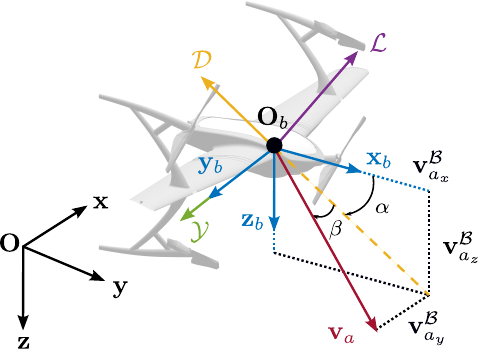}
		\caption{Coordinate frames: the world frame, body frame and aerodynamic forces.} 
		\label{fig_coordinate}
		\vspace{-3mm}
	\end{figure}
	
	\subsection{Airframe dynamics}
	\label{sec_airframe_dyn}
	We view the whole body of the tail-sitter as a rigid body.  Referring to the Newton-Euler equations, the translational and rotational dynamics of the aircraft is modeled as follows:
    \begin{subequations}
    \vspace{-4mm}
	\begin{align}
        \dot{\mathbf{p}} &= \mathbf{v} \label{e_translation_kin} \\
		\dot{\mathbf{v}} &= \mathbf{g} + a_T\mathbf{R}\mathbf{e}_1 + \frac{1}{m}\mathbf{R}\mathbf{f}_a  \label{e_translation_dyn} \\
		\dot{\mathbf{R}} &= \mathbf{R}\lfloor\boldsymbol{\omega}\rfloor \label{e_rotation_kin} \\
		\mathbf{J}\dot{\boldsymbol{\omega}} &= \boldsymbol{\tau} + \mathbf{M}_a - \boldsymbol{\omega} \times \mathbf{J} \boldsymbol{\omega} 
		\label{e_rotational_dyn}
	\end{align}
	\label{e_vehicle_dyn}
    \end{subequations}
	where $\mathbf{p}$ and $\mathbf{v}$ are respectively the vehicle position and velocity in the inertial frame, $\boldsymbol{\omega}$ is the angular velocity in the body frame, $\mathbf{R}$ denotes the rotation from the inertial frame to the body frame, $m$ is the total mass of the aircraft, $\mathbf{J}$ is the inertia matrix and $\mathbf{g} = [0 \ 0\ 9.8]^T$ is the gravity vector in the inertial frame. $a_T$ and $\boldsymbol{\tau}$ denote the thrust acceleration scalar and control moment vector produced by actuators (e.g., four motors for a quadrotor tail-sitter). $\mathbf{f}_a$ and $\mathbf{M}_a$ are the aerodynamic force and moment in the body frame, respectively. The notation $\lfloor \mathbf a \rfloor$ converts a 3-D vector $\mathbf a$ into a skew-symmetric matrix such that $\mathbf a \times \mathbf b = \lfloor \mathbf a\rfloor \mathbf b, \forall \mathbf a, \mathbf b \in \mathbb{R}^3$. $\mathbf{e}_1 = [1 \ 0 \ 0]^T, \mathbf{e}_2 = [0 \ 1 \ 0]^T, \mathbf{e}_3 = [0 \ 0 \ 1]^T$ are unit vectors used in the remaining of the paper. 

 Collecting all the state and input elements of the dynamics (\ref{e_vehicle_dyn}) leads to the system state and input below: 
	\begin{subequations}
		\begin{align}
			\mathbf x_{\rm full} &= (\mathbf p, \mathbf v, \mathbf R, \boldsymbol{\omega}) \in \mathbb R^3 \times  \mathbb R^3 \times SO(3)  \times \mathbb R^3 \\
			\mathbf u_{\rm full} &= (a_T, \boldsymbol{\tau}) \in \mathbb R \times \mathbb R^3
		\end{align}
		\label{e_full_xu}
		\vspace{-3mm}
	\end{subequations}

	Note that in the above model, we assume that the thrust direction is aligned to the body X axis $\mathbf{x}_b$. For cases where the thrust has a fixed installation angle, it can be trivially handled by re-defining the body frame. 

	\subsection{Aerodynamics}
	\label{sec_aerodyn}

	Referring to \citep{etkin1959dynamics}, the aerodynamic force  $\mathbf{f}_a$ is modeled in the body frame as follows:
	\begin{align}
		\label{e_aerodynmics}
		\mathbf{f}_a =  \begin{bmatrix}
			\mathbf f_{a_x} \\
			\mathbf f_{a_y} \\
			\mathbf f_{a_z}
		\end{bmatrix} = \begin{bmatrix}
			-\cos \alpha && 0 && \sin \alpha \\
			0 && 1 && 0 \\
			-\sin \alpha && 0 && -\cos \alpha
		\end{bmatrix} \begin{bmatrix}
			\mathcal{D} \\ \mathcal{Y} \\ \mathcal{L}
		\end{bmatrix}  
	\end{align}
	where $\alpha$ is the angle of attack. The force components $\mathcal{L}, \mathcal{D}, \mathcal{Y}$ are the  lift, drag, and side force, respectively. The aerodynamic moment vector $\mathbf{M}_a$  consists of rolling $L$, pitching $M$ and yawing  $N$  moment along the body axis $\mathbf{x}_b, \mathbf{y}_b, \mathbf{z}_b$:
	\begin{equation}
		\mathbf{M}_a = \begin{bmatrix}
			L & M & N
		\end{bmatrix}^T
	\end{equation}
 
	 \begin{figure}[t!] 
	 	\centering
	 	\includegraphics[width=0.85\linewidth]{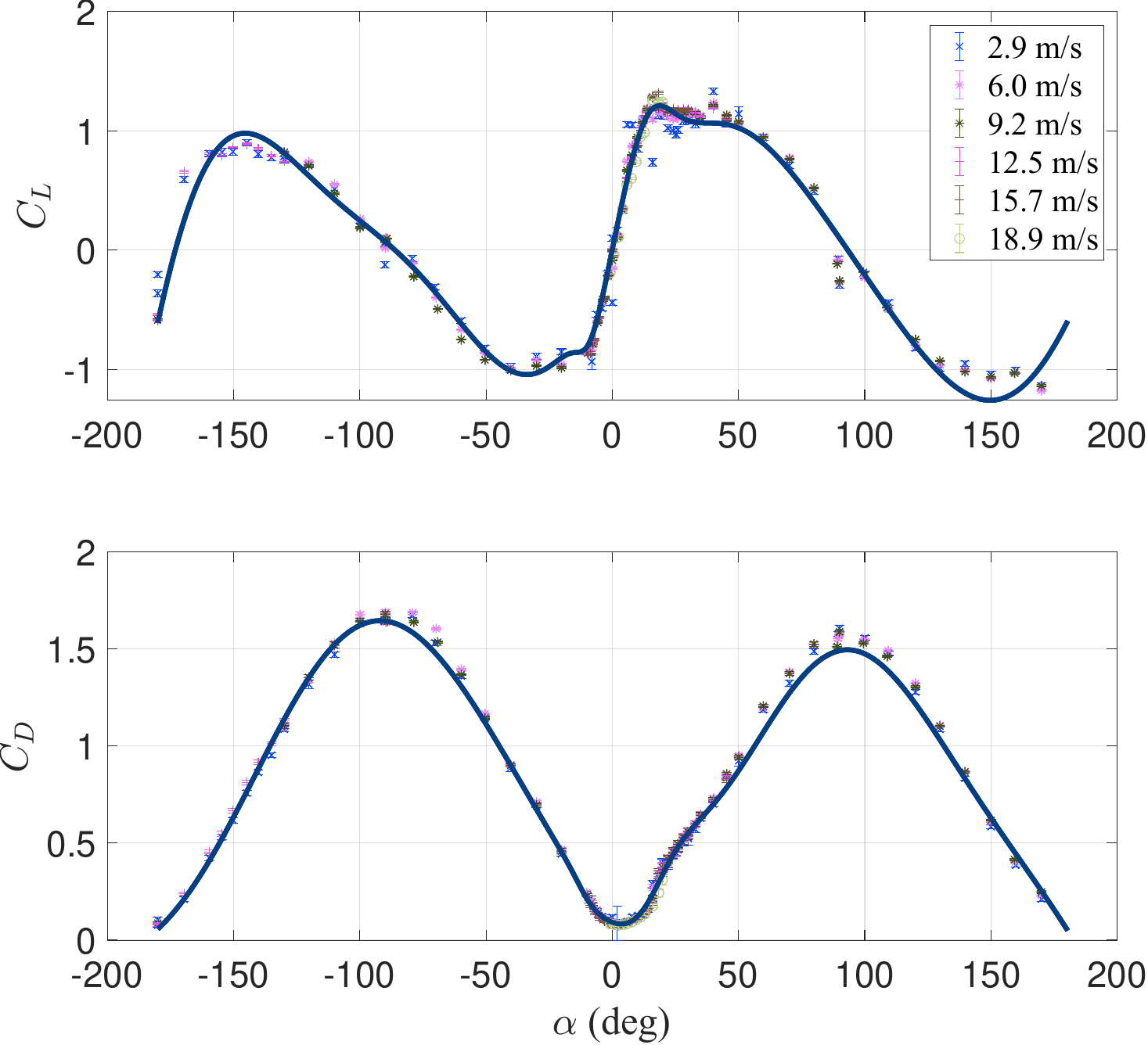} 
	 	\caption{Longitudinal aerodynamic coefficients $C_L$ and  $C_D$ of our previous quadrotor tail-sitter UAV prototype, identified by wind tunnel tests \citep{lyu2018simulation}.} 
	 	\label{fig_aero_coef}
	 	\vspace{-2mm}
	 \end{figure}

	The force and moment components $\mathcal{L}, \mathcal{D}, \mathcal{Y}, L, M, N$  can be written as products of non-dimensional coefficients, dynamic pressure $\frac{1}{2} \rho V^2$, the reference area $S$ (e.g., the wing area), and the characteristic length $\bar{c}$ (e.g., the mean aerodynamic chord), as follows: 
	\begin{equation}
		\begin{aligned}
			\mathcal{L} &= \frac{1}{2} \rho V^2 SC_L(\alpha,\beta) \\
			\mathcal{D} &= \frac{1}{2} \rho V^2 SC_D(\alpha,\beta) \\
			\mathcal{Y} &= \frac{1}{2} \rho V^2 SC_Y(\alpha,\beta)  
		\end{aligned} \quad,
		\begin{aligned}
		M &= \frac{1}{2} \rho V^2 S\bar{c}C_l(\alpha,\beta) \\
		N &= \frac{1}{2} \rho V^2 S\bar{c}C_m(\alpha,\beta) \\
		L &= \frac{1}{2} \rho V^2 S\bar{c}C_n(\alpha,\beta)
		\end{aligned}
		\label{e_aerodyn}
	\end{equation}
	where $\rho$ is the air density and $V = \| \mathbf v_a \|$ is the norm of the airspeed. $C_L, C_D, C_Y$ are the lift, drag, and side force coefficients, while $C_l, C_m, C_n$ are the rolling, pitching, and yawing moment coefficients. The aerodynamic coefficients are functions of the angle of attack $\alpha$ and the sideslip angle $\beta$, depending on the design of the airfoil profile and the overall airframe. The accurate aerodynamic coefficients are usually identified by wind tunnel tests \citep{lyu2018simulation}. For readability, the total aerodynamic force $\mathbf{f}_a$ in (\ref{e_aerodynmics}) can be rewritten as
	\begin{equation}
		\mathbf{f}_a = \frac{1}{2} \rho V^2 S\mathbf{c}(\alpha,\beta)
		\label{e_aero_force}
	\end{equation} 
	where
	\begin{subequations}
	    \begin{align}
		\mathbf{c}(\alpha,\beta) &= \begin{bmatrix}
			\mathbf c_x(\alpha,\beta) &
			\mathbf c_y(\alpha,\beta) &
			\mathbf c_z(\alpha,\beta)
		\end{bmatrix}^T \\
		\mathbf c_x(\alpha,\beta) &= 	-C_D(\alpha, \beta)\cos\alpha + C_L(\alpha, \beta)\sin\alpha \\
		\mathbf c_y(\alpha,\beta) &= C_Y(\alpha, \beta) \\
		\mathbf c_z(\alpha,\beta) &= -C_D(\alpha, \beta)\sin\alpha - C_L(\alpha, \beta)\cos\alpha
	\end{align}
	\label{e_c_dfn}
	\end{subequations}

	Given the vehicle ground velocity $\mathbf{v}$ and wind speed $\mathbf{w}$ defined in the inertial frame, the airspeed $\mathbf{v}_a$, the angle of attack $\alpha$ and the sideslip angle $\beta$ are calculated as follows:
	\begin{align}
		\mathbf{v}_a &= \mathbf{v} - \mathbf{w} ,\ 
        \mathbf{v}_a^\mathcal{B} = \mathbf{R}^T\mathbf{v}_a = \begin{bmatrix}
			\mathbf{v}_{a_x}^\mathcal{B} &
			\mathbf{v}_{a_y}^\mathcal{B} &
			\mathbf{v}_{a_z}^\mathcal{B}
		\end{bmatrix}^T,  \\
		V &= \| \mathbf v_a \|, \
		\alpha = \tan^{-1} \left( \frac{\mathbf{v}_{a_z}^\mathcal{B}}{\mathbf{v}_{a_x}^\mathcal{B}}\right)  ,\  
		\beta = \sin^{-1} \left(\frac{\mathbf{v}_{a_y}^\mathcal{B}}{V} \right) 
        \label{e_V_alpha_beta}
	\end{align}

    We further assume that the airframe is symmetric to the body X-Z plane, which implies 
    \begin{subequations}
    	\begin{align}
    		C_L(\alpha, \beta) &= C_L(\alpha, - \beta),\  \forall \alpha, \beta \\
    		C_D(\alpha, \beta) &= C_D(\alpha, - \beta),\  \forall \alpha, \beta \\
    		C_Y(\alpha, \beta) &= -C_Y(\alpha, - \beta),\  \forall \alpha, \beta
    	\end{align}
    \end{subequations}
	and hence $\forall \alpha$
	\begin{subequations}\label{e_axial_symmetric}
		\begin{align}
			&C_Y\! (\alpha, 0) \! = \!  0,  \left.\frac{\partial C_L \! (\alpha, \beta)}{\partial \beta} \! \right\vert_{\beta = 0} \!=\! \left.\frac{\partial C_D \! (\alpha, \beta)}{\partial \beta} \! \right\vert_{\beta = 0} \!=\! 0, \\
			&\frac{\partial \mathbf{c}(\alpha, \beta)}{\partial \beta}\rvert_{\beta = 0} = \begin{bmatrix}
				0 & \left.\frac{\partial \mathbf c_y (\alpha, \beta)}{\partial \beta}\right\vert_{\beta = 0} & 0
			\end{bmatrix}^T, \label{e_pc_palpha} \\
			&\frac{\partial \mathbf{c}(\alpha, \beta)}{\partial \alpha}\rvert_{\beta = 0} = \begin{bmatrix}
				\frac{\partial \mathbf{c}_x(\alpha, 0)}{\partial \alpha} & 0 & \frac{\partial \mathbf{c}_z(\alpha, 0)}{\partial \alpha}
			\end{bmatrix}^T.
		\end{align}
	\end{subequations}

	\section{Differential flatness in coordinated flight}
	\label{sec_diff_flat}
	In this section, we aim to investigate the fundamental differential flatness property which is the theoretical foundation for trajectory generation and tracking control.  We prove that the tail-sitter is differentially flat in a flight condition known as the coordinated flight.
	
	\subsection{The coordinated flight}
	An aircraft in coordinated flight indicates a flight condition without sideslip (e.g., $\beta = 0, \mathbf v_{a_y}^{\mathcal{B}} = 0$) \citep{clancy1975aerodynamics}. This flight condition does not restrict the degree-of-freedom of the tail-sitter, which is still able to reach any position in the entire 3-D space. Moreover, the coordinated flight is usually preferred over uncoordinated flight \citep{stevens2015aircraft} for several practical reasons: $\bf 1)$ the coordinated flight condition ideally achieves  maximum aerodynamic efficiency and also minimizes  undesirable aerodynamic moment that could cause spins. $\bf 2)$ it is naturally required when the navigation sensors (e.g., cameras) mounted on the vehicle's nose have a limited FoV. $\bf 3)$ restricting the sideslip angle around zero reduces the efforts for aerodynamic model identification by only requiring the longitudinal aerodynamic coefficients around $\beta = 0$ (see Fig. \ref{fig_aero_coef}). 

    \begin{figure}[t] 
	\centering		
        \includegraphics[width=1\linewidth]{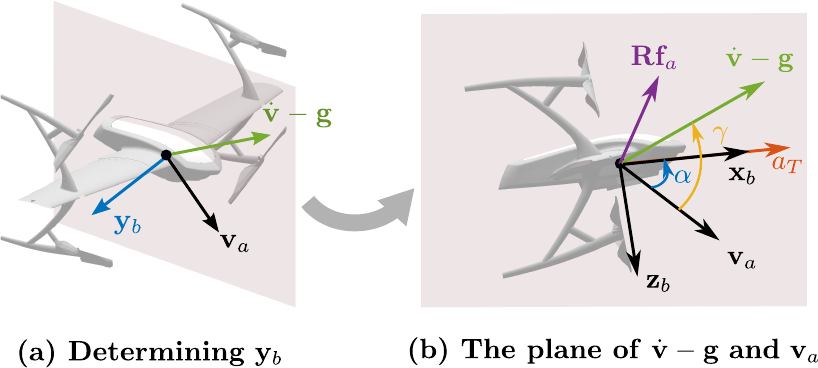} 
        \caption{A tail-sitter UAV in coordinated flight: (a) the axis $\mathbf{y}_b$ is perpendicular to both of $\mathbf{v}_a$ and $\dot{\mathbf{v}} - \mathbf{g}$; (b) the angle of attack $\alpha$ and thrust acceleration $a_T$ are determined on the longitudinal plane by the fact that the total acceleration comprising the drag acceleration $\mathcal{D}/m$, the lift acceleration $\mathcal{L}/m$, the $a_T$, and gravity $\mathbf g$ is equal to $\dot{\mathbf{v}}$.}
        \label{fig_coordinated_flight}
    \end{figure}

	\subsection{The differential flatness}
	\label{sec_flat_func}
	\begin{definition}
		\citep{fliess1995flatness}
			A system $\dot{\mathbf x} = \mathbf f \left(\mathbf x, \mathbf u \right), \mathbf x \in \mathbb R^n, \mathbf u \in \mathbb R^m,  \frac{\partial \mathbf f(\mathbf x, \mathbf u)}{\partial \mathbf u} = m$,  is differentially flat, if  there exists a flat output  $\mathbf y \in \mathbb R^m$ of the form
			\begin{equation}
				\mathbf y = \mathbf{y} \left(\mathbf x, \mathbf u,  \dot{\mathbf u}, \cdots, \mathbf u^{(p)} \right)
			\end{equation}
			such that the system state can be expressed explicitly by functions of the flat output and a finite number of its derivatives:
			\begin{align}
				\mathbf x &= \mathbf{x} \left(\mathbf y, \dot{\mathbf y}, \cdots, \mathbf y^{(q)} \right) \\
				\mathbf u &= \mathbf{u} \left(\mathbf y, \dot{\mathbf y}, \cdots, \mathbf y^{(q)} \right)
		\end{align} 
	\vspace{-5mm}
	\end{definition}
	
	The definition of differential flatness formally requires an equal dimension of the control input and the selected flat output for a system with independent inputs. However, the control input $\mathbf u_{\rm full}$ defined in (\ref{e_full_xu}) is not independent, due to the coordinated flight condition.
	
	\vspace{-2mm}
	\begin{theorem}
		\label{theorem_rank_pfpu}
		Given the system dynamics in (\ref{e_vehicle_dyn}) and definition of state and input in (\ref{e_full_xu}), when the UAV performs coordinated flight, it holds that
		\begin{equation}
			\frac{\partial \mathbf f(\mathbf x_{\rm full}, \mathbf u_{\rm full})}{\partial \mathbf u_{\rm full}} = 3 
			\label{e_rank_pfpu}
		\end{equation}
	\end{theorem}
	\begin{proof}
		The proof is given in Appendix \ref{app_rank_pfpu}.
	\end{proof}	

	It is seen in Theorem \ref{theorem_rank_pfpu} and its proof that two elements of the body angular velocity and consequently the control moment $\boldsymbol{\tau}$ are coupled, and the control input $\mathbf u_{\rm full}$ reduced by one degree-of-freedom to maintain the coordinated flight condition. The reduced input dimension decreases the rank of derivative $\frac{\partial \mathbf f(\mathbf x_{\rm full}, \mathbf u_{\rm full})}{\partial \mathbf u_{\rm full}} $ by one, resulting in a flat output vector with a dimension of three only.
		
	Our choice of the flat output is the vehicle position $\mathbf{p} \in \mathbb R^3$ in the inertial frame. In the following, we prove that all of the vehicle states $\mathbf x_{\rm full}$ and inputs $\mathbf u_{\rm full}$ can be expressed by functions of $\mathbf{p} $ and its derivatives.
	
	The position $\mathbf p$ and velocity $\mathbf v$ are simply $\mathbf{p}$ itself and its first-order derivatives, respectively. To express the attitude $\mathbf{R}$ as a function of $\mathbf p$ and its derivatives, we observe that in the coordinate flight, $\bf 1)$ there is no airspeed along the body Y axis, implying that $\mathbf{y}_b$ is perpendicular to the airspeed $\mathbf{v}_a$; and $\bf 2)$ because the aerodynamic sideslip  force $\mathcal Y$ is zero (due to coordinated flight and symmetric airframe) and the thrust is in the body X-Z plane, there is no force (and hence acceleration) except gravity along the body Y axis. That is being said, the total acceleration excluding gravity, $\dot{\mathbf{v}} - \mathbf{g}$, has no projection on the body Y axis (i.e., $\mathbf{y}_b$ is perpendicular to $\dot{\mathbf{v}} - \mathbf{g}$). As shown in Fig. \ref{fig_coordinated_flight}\textcolor{red}{(a)}, being perpendicular to both $\dot{\mathbf{v}} - \mathbf{g}$ and $\mathbf{v}_a$, $\mathbf y_b$ can only be in one of two opposite directions. We choose the one closest to the body Y axis determined at the previous time step, denoted as $\mathbf{y}_b^{\rm prev}$,  to prevent drastic attitude change: 
    \begin{align}   
            r &= \text{sign}\left( \left(\mathbf{v}_a \times \left(\dot{\mathbf{v}} - \mathbf{g}\right)\right)\cdot \mathbf{y}_b^{\rm prev} \right)\\
            \mathbf{y}_b &= r \frac{\mathbf{v}_a \times \left(\dot{\mathbf{v}} - \mathbf{g}\right)}{\Vert \mathbf{v}_a \times \left(\dot{\mathbf{v}} - \mathbf{g}\right)\Vert} ,\quad \text{if}\ \Vert \mathbf{v}_a \times \left(\dot{\mathbf{v}} - \mathbf{g}\right)\Vert \neq 0
            \label{e_yb}
    \end{align}
	where $\text{sign}(a)$ denotes the sign of $a \in \mathbb{R}$ and the scalar $r$ denotes the direction of the body Y axis, ensuring that $\mathbf{y}_{b} \cdot \mathbf{y}_b^{\rm prev} \geq 0$ (the angle between $\mathbf y_b$ and $\mathbf{y}_b^{\rm prev}$ is always less than $90^{\circ}$). $\Vert \mathbf{v}_a \times \left(\dot{\mathbf{v}} - \mathbf{g}\right)\Vert = 0$ is a singularity condition that will be discussed in Section \ref{sec_singularities}. 
	
	Next, we show how to solve the body Z axis $\mathbf z_b$ and body X axis $\mathbf x_b$. We note that the sideslip force is zero due to the coordinated flight, hence the aerodynamic force $\mathbf f_a$ reduces to $\mathbf f_a = \begin{bmatrix}
	    \mathbf f_{a_x} & 0 & \mathbf f_{a_z}
	\end{bmatrix}^T$ and $\mathbf R \mathbf f_a = \mathbf x_b \mathbf f_{a_x} + \mathbf z_b \mathbf f_{a_z}$. Substituting $\mathbf R \mathbf f_a$ into (\ref{e_translation_dyn}) leads to: 
	\begin{equation}
		 a_T \mathbf x_b + \frac{\mathbf{f}_{a_x}}{m} \mathbf x_b + \frac{\mathbf{f}_{a_z}}{m} \mathbf z_b + \mathbf g = \dot{\mathbf v}
		\label{e_force_balance}
	\end{equation}
    Decomposing the equation along the direction of $\mathbf{x}_b$ and $\mathbf z_b$ respectively, we have (see Fig. \ref{fig_coordinated_flight}\textcolor{red}{(b)})
    \begin{subequations}
        \begin{align}
            &a_T \!=\! \mathbf x_b^T (\dot{\mathbf{v}} - \mathbf{g}) - \mathbf{f}_{a_x} / m
            &\mathbf z_b^T (\dot{\mathbf{v}} - \mathbf{g}) =  \mathbf{f}_{a_z}/m         
        \end{align}
    	\label{e_decomp_force_xz}
    \end{subequations}
	Since  $\mathbf x_b$, $\mathbf z_b$, $\dot{\mathbf v} - \mathbf g$ and $\mathbf v_a$ are all perpendicular to $\mathbf y_b$, they should lie in the same plane (see Fig. \ref{fig_coordinated_flight}\textcolor{red}{(b)}). Hence we have $\mathbf x_b^T (\dot{\mathbf{v}} - \mathbf{g}) = \Vert\dot{\mathbf{v}} - \mathbf{g}\Vert \cos\left(\gamma \!-\! \alpha\right)$, $\mathbf z_b^T (\dot{\mathbf{v}} - \mathbf{g}) = \Vert\dot{\mathbf{v}} - \mathbf{g}\Vert \sin\left(\gamma \!-\! \alpha\right)$, and
    \begin{subequations}
        \begin{align}
            &a_T \!=\! \Vert\dot{\mathbf{v}} - \mathbf{g}\Vert \cos\left(\gamma \!-\! \alpha\right) - \mathbf{f}_{a_x} / m
    		\label{e_aT}\\
            &\Vert\dot{\mathbf{v}} - \mathbf{g}
    		\Vert \sin\left(\gamma - \alpha\right) =  -\mathbf{f}_{a_z}/m
    		\label{e_alpha_func}       
        \end{align}
    	\label{e_decomp_force}
    \end{subequations}
    where
    \begin{equation}   
        \gamma =  
            r \cdot {\rm atan2} \left( \| (\dot{\mathbf{v}} \!-\! \mathbf{g}) \! \times \! \mathbf{v}_a \|, (\dot{\mathbf{v}} \!-\! \mathbf{g}) \cdot \mathbf{v}_a \right), \text{if}\ \Vert \mathbf{v}_a \Vert \neq 0 
        \label{e_gamma}
    \end{equation}
    and $r$ denotes the angle direction (the positive direction of $\gamma$ and $\alpha$ is defined such that rotating $\mathbf v_a$ along $\mathbf y_b$ will reach $\dot{\mathbf v} - \mathbf g$ and $\mathbf x_b$, respectively), while $\Vert\mathbf{v}_a\Vert \neq 0$ has been specified in $\Vert \mathbf{v}_a \times \left(\dot{\mathbf{v}} - \mathbf{g}\right)\Vert \neq 0$ above. 
    
	\begin{figure}[t] 
		\centering
		\includegraphics[width=1\linewidth]{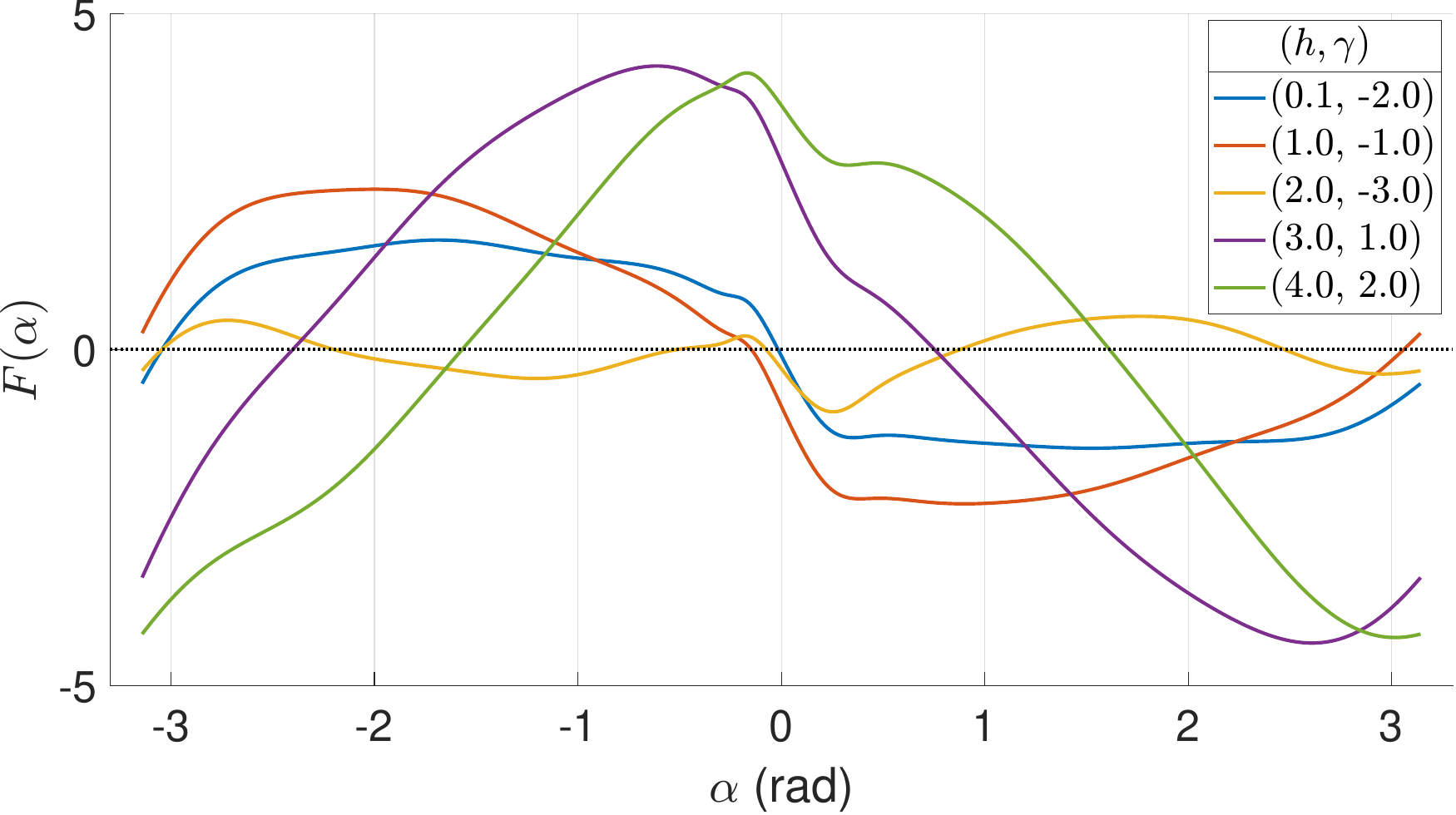} 
		\caption{Numerical examples of the root-finding problem of $F(\alpha) = 0$ in (\ref{e_solve_alpha}) for five pairs of $(h, \gamma)$ and the longitudinal aerodynamic coefficients shown in Fig. \ref{fig_aero_coef}.} 
		\label{fig_solve_alpha}
	\end{figure}

	It is noticed that (\ref{e_alpha_func}) only involves the known flat derivatives and the angle of attack $\alpha$, which can hence be solved. Specifically, (\ref{e_alpha_func}) can be written as a nonlinear root-finding problem in terms of $\alpha$:
    \begin{equation}
        F(\alpha)  = h \sin(\gamma - \alpha) + \mathbf{c}_z(\alpha, 0) = 0 
        \label{e_solve_alpha}
    \end{equation}
    where
    \begin{align}
        h = \frac{2m\Vert\dot{\mathbf{v}} - \mathbf{g}\Vert}{\rho V^2 S}
    \end{align}
	
	In the function of $F(\alpha)$, the variables $h$ and $\gamma$ is completely determined by the flight trajectory (and wind gust), while $\mathbf{c}_z(\alpha, 0)$ is the third element of $\mathbf{c}$ in (\ref{e_c_dfn}),  which is completely determined by the actual aerodynamic configuration of the UAV. It should be also noted that $\gamma$ and $h$ are independent because they are respectively the angle and length ratio between $\dot{\mathbf{v}} - \mathbf{g}$ and $\mathbf{v}_a$. These properties allow us to investigate the shape of $F(\alpha)$, hence the solution of $\alpha$, for a given pair of $(h, \gamma)$. An example of such function $F(\alpha)$ is presented in Fig. \ref{fig_solve_alpha}. As can be seen, the equation $F(\alpha) = 0$ is highly nonlinear due to the nonlinear aerodynamic model $\mathbf c_z(\alpha, 0)$, hence no closed-form solution can be found in general. In practice, the equation can be solved numerically, such as {Newton–Raphson method using $\mathbf c_z (\alpha)$ and $\partial \mathbf c_z(\alpha, 0) / \partial \alpha$ identified in advance}. Moreover, the extreme nonlinearity in $F(\alpha)$ also results in multiple solutions of $\alpha$ in most cases. To avoid the ambiguity and prevent drastic change of $\alpha$,  $\alpha^{\rm prev}$, the value of $\alpha$ determined at the previous time step, could be used as the initial guess for the numerical solver, to find a solution close to $\alpha^{\rm prev}$. 
		
	With the solved angle of attack $\alpha$, the body X axis $\mathbf{x}_b$, and hence the rotation matrix $\mathbf R$, can be determined as
	\begin{subequations}
	\label{eq:bodyX_attitude}
	\begin{align}
		\mathbf{x}_b = \text{Exp}\left(\alpha \mathbf{y}_b \right) \frac{\mathbf{v}_a}{\Vert\mathbf{v}_a\Vert}, \quad \text{if}\ \Vert\mathbf{v}_a\Vert \neq 0, \label{e_xb} \\
		\mathbf{R} = \begin{bmatrix}
			\mathbf{x}_b & \mathbf{y}_b & {\mathbf z_b}
		\end{bmatrix}, \quad \mathbf z_b = \mathbf{x}_b \times\mathbf{y}_b.
		\label{e_determine_R}
	\end{align}
	\end{subequations}
	where $\text{Exp}(\cdot)$ is the exponential map on $SO(3)$ and $\Vert\mathbf{v}_a\Vert \neq 0$ has been specified in the singularity condition $\Vert \mathbf{v}_a \times \left(\dot{\mathbf{v}} - \mathbf{g}\right)\Vert \neq 0$ above. With the solved $\alpha$ and $\beta = 0$, the aerodynamic force $\mathbf f_a$ and system input $a_T$ are determined by (\ref{e_aero_force}) and (\ref{e_aT}), respectively.
	
	Next, to show that the body angular velocity $\boldsymbol{\omega}$ is a function of the flat output, we take the time derivative of the translational dynamics (\ref{e_translation_dyn}) as follows:
	\begin{equation}
		\begin{aligned}
			\ddot{\mathbf{v}} &= \left(\dot{a}_T\mathbf{R}  + a_T\mathbf{R} \lfloor \boldsymbol{\omega} \rfloor \right)\mathbf{e}_1  \\
			&\quad + \frac{1}{m}\mathbf{R}\left(\lfloor \boldsymbol{\omega} \rfloor \mathbf{f}_a + \frac{\partial \mathbf{f}_a}{\partial \left(\mathbf{R}^T\mathbf{v}_a\right)} \frac{d}{dt}\left(\mathbf{R}^T\mathbf{v}_a\right)\right)  \\
			&= \frac{1}{m}\mathbf{R}\frac{\partial \mathbf{f}_a}{\partial \mathbf{v}_a^\mathcal{B}}\mathbf{R}^T\dot{\mathbf{v}}_a + \dot{a}_T\mathbf{R}\mathbf{e}_1 \\
			&\quad +\mathbf{R}\left(-\left \lfloor \left( a_T \mathbf{e}_1 + \frac{\mathbf f_a }{m} \right) \right\rfloor  + \frac{1}{m} \frac{\partial \mathbf{f}_a}{\partial \mathbf{v}_a^\mathcal{B}} \lfloor \mathbf{v}_a^\mathcal{B} \rfloor \right)\boldsymbol{\omega}
		\end{aligned}
		\label{e_d_trans_dyn}
	\end{equation}
	where $\partial \mathbf{f}_a / \partial \mathbf{v}_a^\mathcal{B}$ is evaluated at $\beta = 0$ and can be obtained by taking derivative of (\ref{e_aero_force}) as below.
	\begin{theorem}
		\label{theorem_pfa_pvb}
		Given the aerodynamic coefficients $\mathbf c(\alpha, \beta)$ of a symmetric airframe configuration satisfying (\ref{e_axial_symmetric}), the partial derivative $\partial \mathbf{f}_a / \partial \mathbf{v}_a^\mathcal{B}$ at $\beta = 0$ is
	   \begin{equation}
                \frac{\partial \mathbf{f}_a}{\partial \mathbf{v}_a^\mathcal{B}} = \frac{\rho S}{2}\left(2\mathbf{c} \mathbf{v}_a^{\mathcal{B}^T} +  \frac{\partial \mathbf{c}}{\partial \alpha}\mathbf{v}_a^{\mathcal{B}^T} \lfloor \mathbf{e}_2 \rfloor 
                 + V\frac{\partial \mathbf{c}}{\partial \beta}\mathbf{e}_2^T \right)
		\label{e_pfa_pvb}
	   \end{equation}
	\end{theorem}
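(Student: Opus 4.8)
The plan is to differentiate the closed form $\mathbf{f}_a=\tfrac12\rho V^2 S\,\mathbf{c}(\alpha,\beta)$ of (\ref{e_aero_force}) with respect to the body-frame airspeed $\mathbf{v}_a^\mathcal{B}=[\mathbf{v}_{a_x}^\mathcal{B}\ \mathbf{v}_{a_y}^\mathcal{B}\ \mathbf{v}_{a_z}^\mathcal{B}]^T$ by the chain rule, treating $V$, $\alpha$, $\beta$ as the intermediate scalar functions of $\mathbf{v}_a^\mathcal{B}$ given in (\ref{e_V_alpha_beta}), and only at the very end substituting the coordinated-flight value $\beta=0$ (equivalently $\mathbf{v}_{a_y}^\mathcal{B}=0$). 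Splitting $\dfrac{\partial\mathbf{f}_a}{\partial\mathbf{v}_a^\mathcal{B}}=\dfrac{\rho S}{2}\Big(\mathbf{c}\,\dfrac{\partial (V^2)}{\partial\mathbf{v}_a^\mathcal{B}}+V^2\,\dfrac{\partial\mathbf{c}}{\partial\mathbf{v}_a^\mathcal{B}}\Big)$, the first term is immediate from $\partial(V^2)/\partial\mathbf{v}_a^\mathcal{B}=2\,\mathbf{v}_a^{\mathcal{B}^T}$, producing the $2\mathbf{c}\,\mathbf{v}_a^{\mathcal{B}^T}$ contribution; for the second term I would expand $\partial\mathbf{c}/\partial\mathbf{v}_a^\mathcal{B}=(\partial\mathbf{c}/\partial\alpha)(\partial\alpha/\partial\mathbf{v}_a^\mathcal{B})+(\partial\mathbf{c}/\partial\beta)(\partial\beta/\partial\mathbf{v}_a^\mathcal{B})$, so everything reduces to the two row-vector Jacobians $\partial\alpha/\partial\mathbf{v}_a^\mathcal{B}$ and $\partial\beta/\partial\mathbf{v}_a^\mathcal{B}$.

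Those two Jacobians are the heart of the computation. Differentiating $\alpha=\tan^{-1}(\mathbf{v}_{a_z}^\mathcal{B}/\mathbf{v}_{a_x}^\mathcal{B})$ gives $\partial\alpha/\partial\mathbf{v}_a^\mathcal{B}=\big[-\mathbf{v}_{a_z}^\mathcal{B}\ \ 0\ \ \mathbf{v}_{a_x}^\mathcal{B}\big]\big/\big((\mathbf{v}_{a_x}^\mathcal{B})^2+(\mathbf{v}_{a_z}^\mathcal{B})^2\big)$, whose middle entry is already identically zero; at $\mathbf{v}_{a_y}^\mathcal{B}=0$ the denominator equals $V^2$, and since $\mathbf{v}_a^{\mathcal{B}^T}\lfloor\mathbf{e}_2\rfloor=[-\mathbf{v}_{a_z}^\mathcal{B}\ \ 0\ \ \mathbf{v}_{a_x}^\mathcal{B}]$ (to be checked against the sign convention for $\lfloor\cdot\rfloor$), this is exactly $\tfrac{1}{V^2}\,\mathbf{v}_a^{\mathcal{B}^T}\lfloor\mathbf{e}_2\rfloor$. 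Likewise, differentiating $\beta=\sin^{-1}(\mathbf{v}_{a_y}^\mathcal{B}/V)$ gives a term with $\partial V/\partial\mathbf{v}_a^\mathcal{B}$ weighted by $\mathbf{v}_{a_y}^\mathcal{B}$, which vanishes at $\mathbf{v}_{a_y}^\mathcal{B}=0$, leaving $\partial\beta/\partial\mathbf{v}_a^\mathcal{B}=\tfrac1V\mathbf{e}_2^T$. Substituting back, the factor $V^2$ cancels against $1/V^2$ and $1/V$, yielding precisely $\tfrac{\partial\mathbf{c}}{\partial\alpha}\mathbf{v}_a^{\mathcal{B}^T}\lfloor\mathbf{e}_2\rfloor+V\tfrac{\partial\mathbf{c}}{\partial\beta}\mathbf{e}_2^T$, so that the three contributions add up to (\ref{e_pfa_pvb}).

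I do not expect a genuine obstacle; the main point requiring care is the order of operations. Because $V$, $\alpha$, $\beta$ are not independent coordinates on $\mathbb{R}^3$, one must compute all partials at a generic $\mathbf{v}_a^\mathcal{B}$ and impose $\mathbf{v}_{a_y}^\mathcal{B}=0$ only afterward — in particular the cross sensitivities $\partial\alpha/\partial\mathbf{v}_{a_y}^\mathcal{B}$ and $\partial(\mathbf{v}_{a_y}^\mathcal{B}/V)/\partial\mathbf{v}_{a_y}^\mathcal{B}$ must be evaluated, not discarded, before setting $\mathbf{v}_{a_y}^\mathcal{B}=0$. The symmetry hypothesis (\ref{e_axial_symmetric}) is not needed to derive the formula itself but makes the result consistent with the physical picture and with the reduced sparsity of $\partial\mathbf{c}/\partial\alpha$ and $\partial\mathbf{c}/\partial\beta$ at $\beta=0$ recorded in (\ref{e_pc_palpha}): the $\alpha$-variation perturbs only the longitudinal components $\mathbf{c}_x,\mathbf{c}_z$, while the $\beta$-variation perturbs only the side-force component $\mathbf{c}_y$. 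A final sanity check is to confirm that the skew-matrix repackaging $\partial\alpha/\partial\mathbf{v}_a^\mathcal{B}=\tfrac1{V^2}\mathbf{v}_a^{\mathcal{B}^T}\lfloor\mathbf{e}_2\rfloor$ carries the correct sign under the stated convention $\mathbf a\times\mathbf b=\lfloor\mathbf a\rfloor\mathbf b$.
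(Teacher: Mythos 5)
Your proposal is correct and follows essentially the same route as the paper's proof in Appendix A: chain-rule differentiation of $\mathbf{f}_a=\tfrac12\rho V^2S\,\mathbf{c}(\alpha,\beta)$ through the intermediates $V^2$, $\alpha$, $\beta$, with the same Jacobians $\partial\alpha/\partial\mathbf{v}_a^\mathcal{B}=\mathbf{v}_a^{\mathcal{B}^T}\lfloor\mathbf{e}_2\rfloor/V^2$ and $\partial\beta/\partial\mathbf{v}_a^\mathcal{B}=\mathbf{e}_2^T/V$ evaluated using $\mathbf{v}_{a_y}^\mathcal{B}=0$. Your sign check on $\mathbf{v}_a^{\mathcal{B}^T}\lfloor\mathbf{e}_2\rfloor=[-\mathbf{v}_{a_z}^\mathcal{B}\ 0\ \mathbf{v}_{a_x}^\mathcal{B}]$ goes through under the stated convention, and your observation that the symmetry hypothesis only fixes the sparsity of $\partial\mathbf{c}/\partial\alpha$ and $\partial\mathbf{c}/\partial\beta$ rather than being needed for the formula itself is accurate.
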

	\begin{proof}
		The proof is given in Appendix \ref{app_pfa_pvb}.
	\end{proof}
	
	With the $\ddot{\mathbf v}, \dot{\mathbf v}_a, \mathbf {R}, \mathbf f_a, a_T$ and $\partial \mathbf{f}_a / \partial \mathbf{v}_a^\mathcal{B}$ solved above, the equation (\ref{e_d_trans_dyn}) forms three linear functions for $\dot{a}_T$ and $\boldsymbol{\omega}$. To solve $\dot{a}_T$ and $\boldsymbol{\omega}$ uniquely, we need to find one more equation. Recall that in coordinated flight the tail-sitter has no lateral airspeed:
	the condition requires zero lateral airspeed:
	\begin{equation}
		\mathbf v_{a_y}^{\mathcal{B}} = \mathbf{e}_2^T\mathbf{R}^T \mathbf{v}_a \equiv 0
		\label{e_vb_y_zero}
	\end{equation}
	which leads to the derivative on the both sides:
	\begin{equation}
		\begin{aligned}
			- \mathbf{e}_2^T \lfloor\boldsymbol{\omega}\rfloor \mathbf{R}^T \mathbf{v}_a+ \mathbf{e}_2^T \mathbf{R}^T \dot{\mathbf{v}}_a = 0 \\
			\Rightarrow \quad -\mathbf{v}^T_a \mathbf{R} \lfloor\mathbf{e}_2 \rfloor \boldsymbol{\omega} + \mathbf{y}^T_b \dot{\mathbf{v}}_a = 0      
		\end{aligned}
		\label{e_d_v_y}
	\end{equation}
	
	Combing (\ref{e_d_v_y}) and (\ref{e_d_trans_dyn}), we obtain four linear equations in terms of the $\dot{a}_T$ and $\boldsymbol{\omega}$, which can hence be solved as:
	\begin{equation}
		\begin{bmatrix}
			\dot{a}_T \\ \boldsymbol{\omega}
		\end{bmatrix} = \mathbf{N}^{-1}\mathbf{h} = \begin{bmatrix}
			\mathbf{N}_1 \\ \mathbf{N}_2
		\end{bmatrix}^{-1} \begin{bmatrix}
			\mathbf{h}_1 \\ \mathbf{h}_2
		\end{bmatrix}, \ \text{if}\ \text{rank}(\mathbf{N}) = 4
		\label{e_solve_omega}
	\end{equation}
	where $\text{rank}(\mathbf{N}) < 4$ is the second singularity condition that will be discussed in Section \ref{sec_singularities}, and
	\begin{subequations}
		\label{e_h_N}
		\begin{align}
			&\mathbf{h}_1 = \mathbf{y}^T_b \dot{\mathbf{v}}_a \label{e_h1}\\ 
			&\mathbf{h}_2 = \ddot{\mathbf{v}} - \frac{1}{m}\mathbf{R}\frac{\partial \mathbf{f}_a}{\partial \mathbf{v}_a^\mathcal{B}}\mathbf{R}^T\dot{\mathbf{v}}_a \label{e_h2} \\
			&\mathbf{N}_1 = \begin{bmatrix}
				0 & \mathbf{v}^T_a \mathbf{R} \lfloor\mathbf{e}_2 \rfloor
			\end{bmatrix}  \label{e_N1}\\
			&\mathbf{N}_2 = \begin{bmatrix}
				\mathbf{R}\mathbf{e}_1 & \mathbf{R}\! \left(\! -\left \lfloor \left( a_T \mathbf{e}_1 + \frac{\mathbf f_a }{m} \right) \right\rfloor \! + \! \frac{1}{m} \!  \frac{\partial \mathbf{f}_a}{\partial \mathbf{v}_a^\mathcal{B}}\lfloor\mathbf{v}_a^\mathcal{B}\rfloor \! \right) 
			\end{bmatrix} \label{e_N2}
		\end{align}
	\end{subequations}

	Furthermore, the angular acceleration $\dot{\boldsymbol{\omega}}$ can be attained by further taking the derivative of (\ref{e_solve_omega}):
	\begin{equation}
		\begin{bmatrix}
			\ddot{a}_T \\ \dot{\boldsymbol{\omega}}
		\end{bmatrix} = \frac{d}{dt} \left(\mathbf{N}^{-1}\mathbf{h}\right) = -\mathbf{N}^{-1}\dot{\mathbf{N}}\mathbf{N}^{-1}\mathbf{h} + \mathbf{N}^{-1}\dot{\mathbf{h}} 
		\label{e_d_omega}
	\end{equation}
	where the matrix derivative $\dot{\mathbf{N}}$ and $\dot{\mathbf{h}}$ are given in Appendix \ref{app_dNH_dt}. It is noted that the coefficient gradients $\partial^2 \mathbf c_z (\alpha,0) / \partial \alpha^2$ (hence $\partial^2 C_L(\alpha,0) / \partial \alpha^2$ and $\partial^2 C_D(\alpha,0) / \partial \alpha^2$) should be further provided. Then the control moment $\boldsymbol{\tau}$, is solved from (\ref{e_rotational_dyn}) as
	\begin{equation}
		\boldsymbol{\tau} = \mathbf{J}\dot{\boldsymbol{\omega}} - \mathbf{M}_a + \boldsymbol{\omega} \times \mathbf{J} \boldsymbol{\omega}
		\label{e_tau}
	\end{equation}
	where the aerodynamic moments $\mathbf M_a$ is calculated from (\ref{e_aerodyn}) based on $\beta = 0$ and the $\alpha$ solved above. 

\begin{remark}
	Formally, the flatness functions are real-analysis by the classic definition. However, when deriving the flatness function of angle of attack $\alpha$, we cannot find its closed-form solution for a general aerodynamic model due to the extreme nonlinearity. Fortunately, we reduce this problem into a one-dimensional root-finding problem as shown in Fig. \ref{fig_solve_alpha}, that can be solved efficiently by numerical methods in real-time computation. Except $\alpha$, the remaining flatness functions are all given explicitly.
\end{remark}

 \begin{remark}
 In the aerodynamic model (\ref{e_aero_force}) and the differential flatness derivation above, we assumed that the aerodynamic force $\mathbf f_a$ depends only on the vehicle states (i.e., airspeed and attitude) but not the control inputs (i.e., moment $\boldsymbol{\tau}$ and thrust $a_T$). This is generally true for quadrotor tail-sitter VTOL UAVs where no extra flaps are used and the propellers are distant from wing hence the wing aerodynamic force $\mathbf f_a$ not depending on the propeller airflow. For tail-sitter UAVs whose moment $\boldsymbol{\tau}$ is produced by flaps at the trailing edge of wings, such as the twin-rotor tail-sitter UAV in \cite{tal2022global}, the flaps deflection and propeller airflow would change the aerodynamic force $\mathbf f_a$, causing the aerodynamic force $\mathbf f_a$ to depend on the control inputs and preventing the solving of (\ref{e_solve_alpha}). This issue could be overcome practically by a strategy similar to \cite{tal2022global}, which assumes very small changes of control inputs (i.e., flap deflections and propeller thrust) at each step, so that aerodynamic force $\mathbf f_a$ can be evaluated at the last flap angle and propeller thrust, and then used to solve $\boldsymbol{\omega}$ and $\boldsymbol{\tau}$ as detailed above. 
 
 \end{remark}

	\subsection{Singularity conditions}
	\label{sec_singularities}
    
	We discuss the two conditions that singularities occur in the above flatness functions, one is $\Vert \mathbf{v}_a \times \left(\dot{\mathbf{v}} - \mathbf{g}\right)\Vert = 0$ as specified in (\ref{e_yb}) and the other is $\text{rank}(\mathbf{N}) < 4$ as specified in (\ref{e_solve_omega}). We first investigate the possible singularity condition where $\text{rank}(\mathbf{N}) < 4$, by calculating the determinant of $\mathbf{N}$ as follows:
	\begin{theorem}
		\label{theorem_det_N}
		Given the aerodynamic coefficients $\mathbf c(\alpha, \beta)$ of a symmetric airframe configuration satisfying (\ref{e_axial_symmetric}), the determinant of $\mathbf{N}$ defined in (\ref{e_h_N}) is calculated as follows.
		\begin{equation}
			\det(\mathbf{N}) = - \frac{\rho SV^2}{2m} \frac{\partial F(\alpha)}{\partial \alpha} \| \mathbf{v}_a \times (\dot{\mathbf{v}} -\mathbf{g}) \|
			\label{e_det_N}
		\end{equation}
	\end{theorem}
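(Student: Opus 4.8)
The plan is to reduce $\det(\mathbf{N})$ to a product of two scalars — one proportional to $\partial F/\partial\alpha$ and the other to $\|\mathbf{v}_a\times(\dot{\mathbf{v}}-\mathbf{g})\|$ — by passing to the body frame, exploiting a block/sparsity structure, and invoking Theorem~\ref{theorem_pfa_pvb} together with the symmetry relations (\ref{e_axial_symmetric}). First I would strip off the rotation: writing $\mathbf{N}=\mathrm{diag}(1,\mathbf{R})\,\widetilde{\mathbf{N}}$, where $\widetilde{\mathbf{N}}$ has first row $\begin{bmatrix}0 & \mathbf{v}_a^T\mathbf{R}\lfloor\mathbf{e}_2\rfloor\end{bmatrix}$ and last three rows $\begin{bmatrix}\mathbf{e}_1 & \mathbf{B}\end{bmatrix}$ with $\mathbf{B}:=-\lfloor a_T\mathbf{e}_1+\mathbf{f}_a/m\rfloor+\tfrac{1}{m}\bigl(\partial\mathbf{f}_a/\partial\mathbf{v}_a^{\mathcal{B}}\bigr)\lfloor\mathbf{v}_a^{\mathcal{B}}\rfloor$ (so that $\mathbf{N}_2=\mathbf{R}\begin{bmatrix}\mathbf{e}_1 & \mathbf{B}\end{bmatrix}$), and using $\det\mathbf{R}=1$ gives $\det(\mathbf{N})=\det(\widetilde{\mathbf{N}})$. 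Expanding $\det(\widetilde{\mathbf{N}})$ along the first column — whose only nonzero entry is the leading $1$ of $\mathbf{e}_1$ — turns the $4\times4$ determinant into a $3\times3$ scalar triple product; with $\mathbf{v}_a^T\mathbf{R}\lfloor\mathbf{e}_2\rfloor=(\mathbf{v}_a^{\mathcal{B}}\times\mathbf{e}_2)^T$ this reads $\det(\mathbf{N})=-(\mathbf{v}_a^{\mathcal{B}}\times\mathbf{e}_2)\cdot(\mathbf{b}_2\times\mathbf{b}_3)$, where $\mathbf{b}_2,\mathbf{b}_3$ are the last two rows of $\mathbf{B}$ viewed as column vectors.

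Next I would make $\mathbf{B}$ explicit under coordinated flight. There $\beta=0$, so $\mathbf{v}_{a_y}^{\mathcal{B}}=0$ and, by the airframe symmetry, $C_Y(\alpha,0)=0$, hence $\mathbf{f}_{a_y}=0$; thus $\mathbf{v}_a^{\mathcal{B}}$ and $a_T\mathbf{e}_1+\mathbf{f}_a/m=\mathbf{R}^T(\dot{\mathbf{v}}-\mathbf{g})$ both lie in the body $x$--$z$ plane. Substituting $\partial\mathbf{f}_a/\partial\mathbf{v}_a^{\mathcal{B}}$ from Theorem~\ref{theorem_pfa_pvb} and using (\ref{e_axial_symmetric}) — namely that $\partial\mathbf{c}/\partial\alpha|_{\beta=0}$ has a zero $y$-component and $\partial\mathbf{c}/\partial\beta|_{\beta=0}$ is purely along $\mathbf{e}_2$ — one checks, with the cross terms in $\mathbf{v}_{a_x}^{\mathcal{B}},\mathbf{v}_{a_z}^{\mathcal{B}}$ collapsing via $(\mathbf{v}_{a_x}^{\mathcal{B}})^2+(\mathbf{v}_{a_z}^{\mathcal{B}})^2=V^2$, that $\mathbf{B}$ is ``checkerboard'': it has nonzero entries only at positions $(1,2),(2,1),(2,3),(3,2)$. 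In particular $B_{32}=-(a_T+\mathbf{f}_{a_x}/m)+\tfrac{\rho S V^2}{2m}\,\partial\mathbf{c}_z(\alpha,0)/\partial\alpha$, whereas $B_{21}$ and $B_{23}$ equal $-\mathbf{f}_{a_z}/m$ and $a_T+\mathbf{f}_{a_x}/m$ each perturbed by a term proportional to $\partial C_Y/\partial\beta|_{\beta=0}$.

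Finally I would collapse and identify. By the sparsity, $\mathbf{b}_2\times\mathbf{b}_3$ has a single surviving component and $\det(\mathbf{N})=-B_{32}\bigl(\mathbf{v}_{a_x}^{\mathcal{B}}B_{21}+\mathbf{v}_{a_z}^{\mathcal{B}}B_{23}\bigr)$; the two $\partial C_Y/\partial\beta$ contributions cancel, leaving $\mathbf{v}_{a_x}^{\mathcal{B}}B_{21}+\mathbf{v}_{a_z}^{\mathcal{B}}B_{23}=\mathbf{v}_{a_z}^{\mathcal{B}}(a_T+\mathbf{f}_{a_x}/m)-\mathbf{v}_{a_x}^{\mathcal{B}}\mathbf{f}_{a_z}/m$, which is the $\mathbf{e}_2$-component of $\mathbf{v}_a^{\mathcal{B}}\times(a_T\mathbf{e}_1+\mathbf{f}_a/m)=\mathbf{R}^T\bigl(\mathbf{v}_a\times(\dot{\mathbf{v}}-\mathbf{g})\bigr)$; since this vector is parallel to $\mathbf{e}_2$, that component equals $\|\mathbf{v}_a\times(\dot{\mathbf{v}}-\mathbf{g})\|$ up to the orientation sign fixed by (\ref{e_yb})--(\ref{e_gamma}). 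For the other factor, (\ref{e_aT})--(\ref{e_alpha_func}) give $a_T+\mathbf{f}_{a_x}/m=\|\dot{\mathbf{v}}-\mathbf{g}\|\cos(\gamma-\alpha)$, whence $B_{32}=\tfrac{\rho S V^2}{2m}\bigl(\partial\mathbf{c}_z(\alpha,0)/\partial\alpha-h\cos(\gamma-\alpha)\bigr)=\tfrac{\rho S V^2}{2m}\,\partial F(\alpha)/\partial\alpha$ by the definitions of $F$ and $h$. Multiplying the two factors gives (\ref{e_det_N}).

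The main obstacle will be the second step: deriving the explicit checkerboard form of $\mathbf{B}$ from Theorem~\ref{theorem_pfa_pvb} and (\ref{e_axial_symmetric}) with every sign correct — in particular, recognizing that the $\partial C_Y/\partial\beta$ terms entering $B_{21}$ and $B_{23}$ are arranged precisely to cancel after contraction with $\mathbf{v}_a^{\mathcal{B}}$, and that $B_{32}$ reassembles into $\partial F/\partial\alpha$ once the thrust/angle-of-attack balance (\ref{e_aT})--(\ref{e_alpha_func}) is substituted. Everything else — the cofactor expansion, the identity $(\mathbf{R}\mathbf{a})\times(\mathbf{R}\mathbf{b})=\mathbf{R}(\mathbf{a}\times\mathbf{b})$, and the Pythagorean collapse $(\mathbf{v}_{a_x}^{\mathcal{B}})^2+(\mathbf{v}_{a_z}^{\mathcal{B}})^2=V^2$ — is routine bookkeeping.
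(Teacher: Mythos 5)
Your proposal is correct and follows essentially the same route as the paper's Appendix~\ref{app_det_N}: factor out $\mathrm{diag}(1,\mathbf{R})$, establish the checkerboard form of $\boldsymbol{\Psi}=-\lfloor a_T\mathbf{e}_1+\mathbf{f}_a/m\rfloor+\tfrac{1}{m}\tfrac{\partial\mathbf{f}_a}{\partial\mathbf{v}_a^{\mathcal B}}\lfloor\mathbf{v}_a^{\mathcal B}\rfloor$ from Theorem~\ref{theorem_pfa_pvb} and (\ref{e_axial_symmetric}), observe the cancellation of the $\partial\mathbf{c}_y/\partial\beta$ terms after contraction with $\mathbf{v}_a^{\mathcal B}$, and identify $\psi_{32}$ with $\tfrac{\rho SV^2}{2m}\partial F/\partial\alpha$. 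The only differences are cosmetic (cofactor expansion plus a scalar triple product instead of row/column reduction, and recognizing the surviving contraction as the $\mathbf{e}_2$-component of $\mathbf{R}^T(\mathbf{v}_a\times(\dot{\mathbf{v}}-\mathbf{g}))$ rather than invoking the identity $\cos(\gamma-\alpha)\sin\alpha+\sin(\gamma-\alpha)\cos\alpha=\sin\gamma$), and your treatment of the orientation sign is no less careful than the paper's.
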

	\begin{proof}
		The proof is given in Appendix \ref{app_det_N}.
	\end{proof}	

	As can be seen in (\ref{e_det_N}), there are two cases that make $\mathbf{N}$ singular, one is $\frac{\partial F(\alpha)}{\alpha} = 0$ and the other is $\| \mathbf{v}_a \times (\dot{\mathbf{v}} -\mathbf{g}) \| = 0$. Because the angle of attack $\alpha$ is solved from ${F}(\alpha) = 0$ in (\ref{e_solve_alpha}), the former condition essentially requires $F(\alpha)$ passing trough zero with a zero slope, a condition that rarely occurs for actual aerodynamic configuration $\mathbf c_z(\alpha, \beta)$ (see Fig.\ref{fig_solve_alpha}). Therefore, the singularity condition $\text{rank}(\mathbf{N}) < 4$ reduces to the first singularity condition $\| \mathbf{v}_a \times (\dot{\mathbf{v}} -\mathbf{g}) \| = 0$, which has to be considered. This singularity condition breaks into the following three sub-conditions:
	\begin{subequations}
	    \begin{align}
	    	&\Vert \dot{\mathbf{v}} - \mathbf{g} \Vert = 0 \\
	    	&\Vert \mathbf{v}_a \Vert = 0 \\
	    	&  \gamma  = 0             
	    \end{align}
    	\label{e_sing_sub_cond}
	\end{subequations}
	We investigate the corresponding flight status for these three condition as follows.
	
	\subsubsection{Singularity sub-conditions 1. $\Vert \dot{\mathbf{v}} - \mathbf{g} \Vert = 0$} \label{sec:singularity_case1}
	
	The sub-condition $\|\dot{\mathbf{v}} - \mathbf{g} \| = 0$ is the case where the vehicle is free falling, which is undesired in usual flights and should be avoided in the trajectory planning. Therefore, this sub-condition would not be encountered in practice.
	
	\begin{figure}[t!] 
		\centering	
		\subfigure[Singularity case $\|\mathbf{v}_a\| = 0$] { \label{fig_sing_va0}      
			\includegraphics[width=0.46\linewidth]{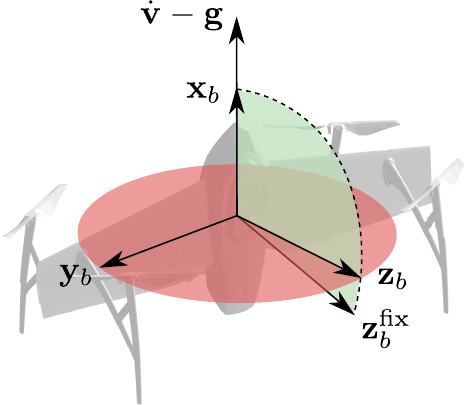}     
		}
		\subfigure[Singularity case $\gamma = 0$] { 
			\label{fig_sing_gamma0}     
			\includegraphics[width=0.44\linewidth]{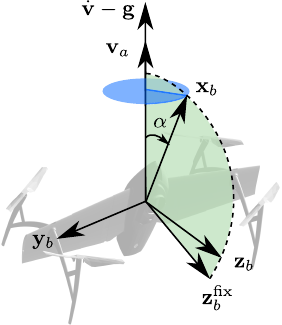}     
		} 
		\caption{Determination of the vehicle body Y axis $\mathbf{y}_b$ (or body Z axis $\mathbf{z}_b$) under two singular conditions (a) $\|\mathbf{v}_a\| = 0$ (e.g., near hovering flights) and (b) $\gamma = 0$ (e.g., vertical takeoff or landing). In both figures, the green plane denotes the plane of $\dot{\mathbf{v}} - \mathbf{g}$ and $\mathbf{z}_b^{\rm fix}$. In (a), the red circular plane perpendicular to $\mathbf{x}_b$ denotes all possible directions of $\mathbf{z}_b$. To minimize the yaw effort, $\mathbf{z}_b$ should the intersecting line of the green and red plane. In (b), the blue disk denotes all possible directions of $\mathbf x_b$. For each direction of $\mathbf x_b$, $\mathbf z_b$ could further rotate along $\mathbf x_b$ freely. To minimize the yaw effort, both $\mathbf x_b$ and $\mathbf z_b$ should be within the green plane. }
		\vspace{-3mm}
	\end{figure}

	\subsubsection{Singularity sub-conditions 2. $\Vert \mathbf{v}_a \Vert = 0$}~
	\label{sec_sing_va0}
	
	The second sub-condition $\Vert \mathbf{v}_a \Vert = 0$ corresponds to zero airspeed, which occurs when the vehicle hovers in windless environments such as indoor places, or flies in the same velocity as the wind in outdoor environments. When the airspeed $\mathbf{v}_a$ is zero, (\ref{e_yb}) becomes singular and hence cannot determine $\mathbf{y}_b$. Actually, even when $\mathbf{v}_a$ is close to zero, (\ref{e_yb}) will be ill-conditioned, where a small change in $\mathbf v_a$ may cause drastic orientation change in $\mathbf y_b$. To avoid this ill condition, we choose a small velocity threshold $v_{\rm min}$ (e.g. $v_{\rm min} = $\SI{0.5}{m/s}). When $\| \mathbf v_a \| < v_{\rm min}$, the aerodynamic force $\mathbf f_a$, which is quadratic to $\| \mathbf v_a \|$, can be safely ignored. Substituting $\mathbf f_a = \mathbf 0$ into (\ref{e_force_balance}) leads to
	
	\begin{equation}
		\mathbf{x}_b = \frac{\dot{\mathbf{v}} - \mathbf{g}}{\|\dot{\mathbf{v}} - \mathbf{g}\|},\quad a_T = \|\dot{\mathbf{v}} - \mathbf{g}\|
		\label{e_xb_aT_sing_va0}
	\end{equation}
	
	For the axis $\mathbf y_b$ (or equivalently, $\mathbf z_b$), it could be any direction perpendicular to $\mathbf x_b$ without affecting the solution in (\ref{e_xb_aT_sing_va0}) (see Fig. \ref{fig_sing_va0}). To minimize the unnecessary efforts for yawing control, we fix the vehicle yaw angle at the value of yaw angle just before $\| \mathbf v_a \| < v_{\rm min}$ took place (e.g., when the vehicle decelerates to hover) or the value of yaw angle at initial time (e.g., when the vehicle just took off from the ground).  Since the yaw angle is represented by the body Z axis, we hope to find a $\mathbf z_b$ that has the smallest angle with $\mathbf{z}_b^{\rm fix}$, the vehicle body Z axis just before $\| \mathbf v_a \| < v_{\rm min}$ took place or at initial time. This essentially causes $\mathbf z_b$ to lie on the plane formed by $\mathbf x_b$ and $\mathbf z_b^{\rm fix}$ (see Fig. \ref{fig_sing_va0}), which, in return, leads $\mathbf y_b$ to be perpendicular to $\mathbf x_b$ (i.e., $\dot{\mathbf{v}} - \mathbf{g}$) and $\mathbf{z}_b^{\rm fix}$:
	\begin{equation}
		\mathbf{y}_b = \frac{\mathbf{z}_b^{\rm fix} \times (\dot{\mathbf{v}} - \mathbf{g})}{\|\mathbf{z}_b^{\rm fix} \times (\dot{\mathbf{v}} - \mathbf{g}) \|}
        \label{e_yb_sing_v0}
	\end{equation}
	With $\mathbf x_b$ and $\mathbf y_b$, the vehicle attitude can be determined by (\ref{e_determine_R}).
	
	Next, to determine the body angular velocity, we notice $\left(\mathbf{z}_b^{\rm fix} \times (\dot{\mathbf{v}} - \mathbf{g})\right)^T \mathbf{z}_b \equiv 0$ always holds. Taking time derivative on both sides and recalling that ${\mathbf z}_b^{\rm fix}$ is a prescribed constant vector, we have
	\begin{equation}
		\begin{aligned}
			\left(\lfloor \mathbf{z}_b^{\rm fix} \rfloor \ddot{\mathbf{v}}\right)^T\mathbf{z}_b &= \| \mathbf{z}_b^{\rm fix} \times (\dot{\mathbf{v}} - \mathbf{g}) \| \mathbf{y}_b^T\mathbf{R} \lfloor \mathbf{e}_3\rfloor \boldsymbol{\omega} \\
			&= \| \mathbf{z}_b^{\rm fix} \times (\dot{\mathbf{v}} - \mathbf{g}) \| \mathbf{e}_1^T \boldsymbol{\omega} 
		\end{aligned}
		\label{e_d_sing_cond_va0}
	\end{equation}
    Moreover, neglecting the aerodynamics, the derivative of translational dynamics in (\ref{e_d_trans_dyn}) can be rewritten as
	\begin{equation}
		\ddot{\mathbf{v}} = \left(\dot{a}_T\mathbf{R}  + a_T\mathbf{R} \lfloor \boldsymbol{\omega} \rfloor \right)\mathbf{e}_1 
		\label{e_d_trans_dyn_va0}
	\end{equation}
	
	Combining (\ref{e_d_sing_cond_va0}) and (\ref{e_d_trans_dyn_va0}), both the $\dot{a}_T$ and $\boldsymbol{\omega}$ can be solved from the a 4-D linear equations in the same form as (\ref{e_solve_omega}) with sub-matrices of $\mathbf{h}$ and $\mathbf{N}$ are rewritten as follows:
	\begin{subequations}
		\begin{align}
			&\mathbf{h}_1 = \left(\lfloor \mathbf{z}_b^{\rm fix} \rfloor \ddot{\mathbf{v}}\right)^T\mathbf{z}_b \label{e_h1_sing_va0}\\
			&\mathbf{h}_2 = \ddot{\mathbf{v}} \\
			&\mathbf{N}_1 = \begin{bmatrix}
				0 & \| \mathbf{z}_b^{\rm fix} \times (\dot{\mathbf{v}} - \mathbf{g}) \| \mathbf{e}_1^T
			\end{bmatrix} \label{e_N1_sing_va0}\\
			&\mathbf{N}_2 = \begin{bmatrix}
				\mathbf{R}\mathbf{e}_1 & -a_T\mathbf{R} \lfloor \mathbf{e}_1 \rfloor
			\end{bmatrix} \label{e_N2_sing_va0}
		\end{align}
		\label{e_h_N_sing_va0}
	\end{subequations}
	\begin{theorem}
		\label{theorem_sing_va0}
		The determinant of $\mathbf{N}$ defined in (\ref{e_h_N_sing_va0}) is calculated as
		\begin{equation}
			\det(\mathbf{N}) = - a_T^2 \| \mathbf{z}_b^{\rm fix} \times (\dot{\mathbf{v}} - \mathbf{g}) \| 
		\end{equation}
	\end{theorem}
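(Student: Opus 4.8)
The plan is to evaluate $\det(\mathbf{N})$ directly by a single cofactor expansion, after rewriting the blocks in (\ref{e_h_N_sing_va0}) in terms of the body axes. First I would read off from (\ref{e_N1_sing_va0}) that the first row of $\mathbf{N}$ is $\begin{bmatrix} 0 & \|\mathbf{z}_b^{\rm fix}\times(\dot{\mathbf{v}}-\mathbf{g})\|\,\mathbf{e}_1^T\end{bmatrix}$, i.e.\ the row vector $\big(0,\ \|\mathbf{z}_b^{\rm fix}\times(\dot{\mathbf{v}}-\mathbf{g})\|,\ 0,\ 0\big)$, so the only nonzero entry in the first row is the $(1,2)$ entry. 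Consequently the Laplace expansion along the first row collapses to a single term, $\det(\mathbf{N}) = -\|\mathbf{z}_b^{\rm fix}\times(\dot{\mathbf{v}}-\mathbf{g})\|\cdot M_{12}$, where $M_{12}$ is the $3\times 3$ minor obtained by deleting row $1$ and column $2$ — that is, the minor built from $\mathbf{N}_2$ in (\ref{e_N2_sing_va0}) with its second column removed.

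Next I would simplify $\mathbf{N}_2$. Writing $\mathbf{R} = \begin{bmatrix}\mathbf{x}_b & \mathbf{y}_b & \mathbf{z}_b\end{bmatrix}$ and using the convention $\lfloor\mathbf{a}\rfloor\mathbf{b} = \mathbf{a}\times\mathbf{b}$, the three columns of $\lfloor\mathbf{e}_1\rfloor$ are $\mathbf{e}_1\times\mathbf{e}_1 = \mathbf{0}$, $\mathbf{e}_1\times\mathbf{e}_2 = \mathbf{e}_3$, and $\mathbf{e}_1\times\mathbf{e}_3 = -\mathbf{e}_2$, so $-a_T\mathbf{R}\lfloor\mathbf{e}_1\rfloor = \begin{bmatrix}\mathbf{0} & -a_T\mathbf{z}_b & a_T\mathbf{y}_b\end{bmatrix}$. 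Combined with $\mathbf{R}\mathbf{e}_1 = \mathbf{x}_b$ this gives $\mathbf{N}_2 = \begin{bmatrix}\mathbf{x}_b & \mathbf{0} & -a_T\mathbf{z}_b & a_T\mathbf{y}_b\end{bmatrix}$. Deleting its second column (the zero column, which is exactly why the minor comes out clean) leaves $M_{12} = \det\begin{bmatrix}\mathbf{x}_b & -a_T\mathbf{z}_b & a_T\mathbf{y}_b\end{bmatrix} = -a_T^2\,\det\begin{bmatrix}\mathbf{x}_b & \mathbf{z}_b & \mathbf{y}_b\end{bmatrix}$, after pulling the scalars $-a_T$ and $a_T$ out of the last two columns.

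Finally, since $\mathbf{R}\in SO(3)$ we have $\det\begin{bmatrix}\mathbf{x}_b & \mathbf{y}_b & \mathbf{z}_b\end{bmatrix} = \det(\mathbf{R}) = 1$, and transposing the last two columns flips the sign, so $\det\begin{bmatrix}\mathbf{x}_b & \mathbf{z}_b & \mathbf{y}_b\end{bmatrix} = -1$ and hence $M_{12} = a_T^2$. Substituting back yields $\det(\mathbf{N}) = -a_T^2\,\|\mathbf{z}_b^{\rm fix}\times(\dot{\mathbf{v}}-\mathbf{g})\|$, which is the claim. There is no genuine obstacle in this argument — the entire computation is forced once $\mathbf{N}_2$ is expressed through the body axes — so the only care required is the bookkeeping of the three sign sources: the cofactor sign $(-1)^{1+2}$, the two scalar factors $\pm a_T$, and the single column transposition that turns $\det\begin{bmatrix}\mathbf{x}_b & \mathbf{z}_b & \mathbf{y}_b\end{bmatrix}$ into $-\det(\mathbf{R})$. (As in Theorem~\ref{theorem_det_N}, the practical takeaway is that $\mathbf{N}$ stays nonsingular as long as $a_T\neq 0$ and $\mathbf{z}_b^{\rm fix}$ is not aligned with $\dot{\mathbf{v}}-\mathbf{g}$, i.e.\ away from free fall.)
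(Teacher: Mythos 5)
Your proof is correct, and it is essentially the same elementary determinant computation as the paper's: the paper first factors out the block-diagonal rotation $\operatorname{diag}(1,\mathbf{R})$ and diagonalizes the remaining matrix by row/column operations to $\operatorname{diag}(1,\ \|\mathbf{z}_b^{\rm fix}\times(\dot{\mathbf{v}}-\mathbf{g})\|,\ a_T,\ -a_T)$, whereas you keep $\mathbf{R}$ in place, expand along the first row, and invoke $\det(\mathbf{R})=1$ at the end. Both routes exploit the same structure (a single nonzero entry in the first row and a zero column in $-a_T\mathbf{R}\lfloor\mathbf{e}_1\rfloor$), and your sign bookkeeping is right.
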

	\begin{proof}
		The proof is given in Appendix \ref{app_sing_v0_det_N}.
	\end{proof}
	
	From (\ref{e_xb_aT_sing_va0}), we have $a_T = \|\dot{\mathbf{v}} - \mathbf{g}\|$, which is not zero in practice (see Section \ref{sec:singularity_case1}). Therefore, the only requirement for both (\ref{e_yb_sing_v0}) and $\det(\mathbf{N}) \neq 0$ is $ \| \mathbf{z}_b^{\rm fix} \times (\dot{\mathbf{v}} - \mathbf{g}) \|  \neq 0 $, a condition that is always true because at the moment $\| \mathbf v_a \| \approx v_{\rm min}$, the body X axis $\mathbf x_b^{\rm fix}$ is almost aligned with $\dot{\mathbf v}- \mathbf g$ (the aerodynamic force is negligible and the thrust must provide most of the special acceleration $\dot{\mathbf v}- \mathbf g$), meaning that $\mathbf z_b^{\rm fix}$ cannot be parallel to $\dot{\mathbf v}- \mathbf g$. 
	
	Finally, the angular acceleration and control moment are also solved from (\ref{e_d_omega}) and (\ref{e_tau}), where the derivatives $\dot{\mathbf{h}}$ and $\dot{\mathbf{N}}$ are recalculated in Appendix \ref{app_sing_v0_dh_dN}.

	\subsubsection{Singularity sub-conditions 3. $\gamma = 0$}~
	
	When the airspeed $\mathbf{v}_a$ and the acceleration $\dot{\mathbf{v}} - \mathbf{g}$ is parallel, the singularity sub-condition $\gamma = 0$ occurs. A common possible case is that the vehicle performs vertical takeoff and landing when the wind speed is zero. In this case, the angle of attack $\alpha$ and thrust acceleration $a_T$ can still be solved from (\ref{e_solve_alpha}) and (\ref{e_aT}) respectively, but the body Y axis cannot be determined from (\ref{e_yb}), which is singular. Actually, even when $\gamma$ is close to zero, (\ref{e_yb}) will be ill-conditioned, where a small change in $\mathbf v_a$ or $\dot{\mathbf v} - \mathbf g$ may cause drastic orientation change in $\mathbf y_b$. To avoid this ill condition, we choose a small angle threshold $\gamma_{\rm min}$ (e.g. $\gamma_{\rm min} = 5^\circ$). When $|\gamma | < \gamma_{\rm min}$, we minimize the unnecessary yaw control efforts by restricting the axes $\mathbf x_b$ and $\mathbf z_b$ within the plane formed by $\mathbf v_a$ (or $\dot{\mathbf v} - \mathbf g$) and $\mathbf z_b^{\rm fix}$, the vehicle body Z axis just before $|\gamma| < \gamma_{\rm min}$ occurs or at the initial time. 
	As a result, the body Y axis is perpendicular to $\mathbf v_a$ (or $\dot{\mathbf v} - \mathbf g$) and $\mathbf z_b^{\rm fix}$ and is hence determined from (\ref{e_yb_sing_v0}). With $\mathbf y_b$, the body X axis and the vehicle attitude are determined from (\ref{eq:bodyX_attitude}). 

	 To solve the body angular velocity, we take the time derivative to the constraint $\left(\mathbf{z}_b^{\rm fix} \times (\dot{\mathbf{v}} - \mathbf{g}) \right)^T \mathbf{z}_b \equiv 0 $ which is identical to (\ref{e_d_sing_cond_va0}). Combing this constraints with the derivative of the translational dynamics in (\ref{e_d_trans_dyn}), the body angular velocity is solved in the same form as (\ref{e_solve_omega}) where the sub-matrices are given from (\ref{e_h_N}) for $\mathbf h_2, \mathbf N_2$ and (\ref{e_h_N_sing_va0}) for $\mathbf h_1, \mathbf N_1$:
    \begin{subequations}
        \begin{align}
            &\mathbf{h}_1 = \left(\lfloor \mathbf{z}_b^{\rm fix} \rfloor \ddot{\mathbf{v}}\right)^T\mathbf{z}_b \label{e_h1_sing_gamma0}\\
            &\mathbf{h}_2 = \ddot{\mathbf{v}} - \frac{1}{m}\mathbf{R}\frac{\partial \mathbf{f}_a}{\partial \mathbf{v}_a^\mathcal{B}}\mathbf{R}^T\dot{\mathbf{v}}_a \label{e_h2_sing_gamma0} \\
			&\mathbf{N}_1 = \begin{bmatrix}
				0 & \| \mathbf{z}_b^{\rm fix} \times (\dot{\mathbf{v}} - \mathbf{g}) \| \mathbf{e}_1^T
			\end{bmatrix} \label{e_N1_sing_gamma0}\\
			&\mathbf{N}_2 = \begin{bmatrix}
				\mathbf{R}\mathbf{e}_1 & \mathbf{R}\! \left(\! -\left \lfloor \left( a_T \mathbf{e}_1 + \frac{\mathbf f_a }{m} \right) \right\rfloor \! + \! \frac{1}{m} \!  \frac{\partial \mathbf{f}_a}{\partial \mathbf{v}_a^\mathcal{B}}\lfloor\mathbf{v}_a^\mathcal{B}\rfloor \! \right) 
			\end{bmatrix} \label{e_N2_sing_gamma0}
        \end{align}
        \label{e_h_N_sing_gamma0}
    \end{subequations}
	
	\begin{theorem}
		\label{theorem_sing_gamma0}
		The determinant of $\mathbf{N}$ defined in (\ref{e_h_N_sing_gamma0}) is calculated as
		\begin{equation}
			\det(\mathbf{N}) = \frac{\rho SV^2}{2m} \frac{\partial F(\alpha)}{\partial \alpha} \|\mathbf{z}_b^{\rm fix} \times (\dot{\mathbf{v}} - \mathbf{g})\| \psi_{23}		
		\end{equation}
		where
        \vspace{-2mm}
	\begin{equation}
		\psi_{23} = \|\dot{\mathbf{v}} - \mathbf{g} \|\cos(\gamma - \alpha) -\frac{\rho S V^2}{2m}\frac{\partial \mathbf{c}_y}{\partial \beta}\cos\alpha
        \label{e_psi_23_sing_gamma0}
	\end{equation}
	\end{theorem}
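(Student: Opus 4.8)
The plan is to follow the same route that established Theorem~\ref{theorem_det_N} in Appendix~\ref{app_det_N}, because the matrix $\mathbf{N}$ in (\ref{e_h_N_sing_gamma0}) differs from the one in (\ref{e_h_N}) only in its first row: the lower block $\mathbf{N}_2$ is identical, while $\mathbf{N}_1 = [\,0 \quad \|\mathbf{z}_b^{\rm fix}\times(\dot{\mathbf{v}}-\mathbf{g})\|\,\mathbf{e}_1^T\,]$ now comes from the $\gamma = 0$ resolution rather than from the coordinated-flight constraint. First I would strip the rotation out of the lower block by writing $\mathbf{N} = \mathrm{diag}(1,\mathbf{R})\,\bar{\mathbf{N}}$, where $\mathrm{diag}(1,\mathbf{R})$ is the $4\times 4$ block-diagonal matrix and
\begin{equation*}
\bar{\mathbf{N}} = \begin{bmatrix} 0 & \|\mathbf{z}_b^{\rm fix}\times(\dot{\mathbf{v}}-\mathbf{g})\|\,\mathbf{e}_1^T \\ \mathbf{e}_1 & \mathbf{B} \end{bmatrix}, \quad \mathbf{B} := -\lfloor a_T\mathbf{e}_1 + \tfrac{1}{m}\mathbf{f}_a\rfloor + \tfrac{1}{m}\frac{\partial \mathbf{f}_a}{\partial \mathbf{v}_a^\mathcal{B}}\lfloor \mathbf{v}_a^\mathcal{B}\rfloor .
\end{equation*}
Since $\mathbf{R}\in SO(3)$ we have $\det(\mathbf{N}) = \det(\bar{\mathbf{N}})$, so it suffices to evaluate the determinant in body coordinates.

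Next I would expand $\det(\bar{\mathbf{N}})$ along its first column, which is $\mathbf{e}_1$ with a leading zero prepended; exactly one cofactor survives, so the $4\times 4$ determinant collapses to $\det(\bar{\mathbf{N}}) = -\|\mathbf{z}_b^{\rm fix}\times(\dot{\mathbf{v}}-\mathbf{g})\|\,(\mathbf{B}_{22}\mathbf{B}_{33}-\mathbf{B}_{23}\mathbf{B}_{32})$, i.e. everything reduces to the $2\times2$ ``pitch--yaw'' minor of $\mathbf{B}$ on rows and columns $2,3$. This is the structural reason the extra scalar $\psi_{23}$, absent from Theorem~\ref{theorem_det_N}, appears here: in the non-singular case the analogous row $\mathbf{v}_a^T\mathbf{R}\lfloor\mathbf{e}_2\rfloor$ had two nonzero entries and the collapse went through trigonometric identities, whereas the lone nonzero entry $\mathbf{e}_1^T$ isolates precisely this $2\times2$ block.

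Then I would evaluate that block in coordinated, near-vertical flight. There $\beta=0$; since $\mathbf{y}_b\perp\mathbf{v}_a$ and $\mathbf{x}_b=\text{Exp}(\alpha\mathbf{y}_b)\mathbf{v}_a/\|\mathbf{v}_a\|$ from (\ref{eq:bodyX_attitude}), we have $\mathbf{v}_a^\mathcal{B} = V[\cos\alpha,\,0,\,\sin\alpha]^T$ and $\mathbf{f}_a = \tfrac{1}{2}\rho V^2 S[\mathbf{c}_x(\alpha,0),\,0,\,\mathbf{c}_z(\alpha,0)]^T$. Substituting $\partial\mathbf{f}_a/\partial\mathbf{v}_a^\mathcal{B}$ from (\ref{e_pfa_pvb}) together with the symmetry relations (\ref{e_axial_symmetric}), the diagonal entries $\mathbf{B}_{22},\mathbf{B}_{33}$ vanish, and the off-diagonal ones simplify — after a single Pythagorean cancellation using $(V\cos\alpha)^2+(V\sin\alpha)^2=V^2$ inside the contraction $\tfrac{1}{m}(\partial\mathbf{f}_a/\partial\mathbf{v}_a^\mathcal{B})\lfloor\mathbf{v}_a^\mathcal{B}\rfloor$ — to $\mathbf{B}_{23} = (a_T+\mathbf{f}_{a_x}/m) - \tfrac{\rho SV^2}{2m}\tfrac{\partial\mathbf{c}_y}{\partial\beta}\cos\alpha$ and $\mathbf{B}_{32} = -(a_T+\mathbf{f}_{a_x}/m) + \tfrac{\rho SV^2}{2m}\tfrac{\partial\mathbf{c}_z}{\partial\alpha}$. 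Using (\ref{e_aT}) in the form $a_T+\mathbf{f}_{a_x}/m = \|\dot{\mathbf{v}}-\mathbf{g}\|\cos(\gamma-\alpha)$ and differentiating (\ref{e_solve_alpha}) so that $\partial F(\alpha)/\partial\alpha = -h\cos(\gamma-\alpha)+\partial\mathbf{c}_z(\alpha,0)/\partial\alpha$ with $h = 2m\|\dot{\mathbf{v}}-\mathbf{g}\|/(\rho V^2 S)$, these become exactly $\mathbf{B}_{23}=\psi_{23}$ (cf. (\ref{e_psi_23_sing_gamma0})) and $\mathbf{B}_{32}=\tfrac{\rho SV^2}{2m}\,\partial F(\alpha)/\partial\alpha$. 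Plugging back into $\det(\mathbf{N}) = \|\mathbf{z}_b^{\rm fix}\times(\dot{\mathbf{v}}-\mathbf{g})\|\,\mathbf{B}_{23}\mathbf{B}_{32}$ then yields the claimed formula.

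I expect the third step to be the main obstacle: propagating the symmetry constraints (\ref{e_axial_symmetric}) through the $\partial\mathbf{f}_a/\partial\mathbf{v}_a^\mathcal{B}$ contraction, spotting the cancellation that converts the aerodynamic-derivative clutter into $\partial F(\alpha)/\partial\alpha$, and tracking every sign through the cofactor expansion. Because this bookkeeping is essentially the same as in the proof of Theorem~\ref{theorem_det_N}, most of it can be imported rather than redone; only the final cofactor expansion closes differently, since the $\gamma = 0$ constraint supplies the single-entry row $\mathbf{e}_1^T$ in place of $\mathbf{v}_a^T\mathbf{R}\lfloor\mathbf{e}_2\rfloor$.
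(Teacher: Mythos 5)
Your proposal is correct and follows essentially the same route as the paper's Appendix proof: factor out $\mathrm{diag}(1,\mathbf{R})$, reuse the structure of $\boldsymbol{\Psi}$ (zero $(2,2)$ and $(3,3)$ entries, $\psi_{23}$ and $\psi_{32}$ off-diagonal) established for Theorem \ref{theorem_det_N}, and reduce the $4\times 4$ determinant to $\|\mathbf{z}_b^{\rm fix}\times(\dot{\mathbf{v}}-\mathbf{g})\|\,\psi_{23}\,\psi_{32}$ with $\psi_{32}=\tfrac{\rho SV^2}{2m}\,\partial F(\alpha)/\partial\alpha$. The only cosmetic difference is that you use a cofactor expansion along the first column where the paper uses elementary row and column operations; both yield the same result with the same sign.
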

	\begin{proof}
		The proof is given in Appendix \ref{app_sing_gamma0_det_N}.
	\end{proof}

	It is seen in (\ref{e_det_N}) that three possible cases making $\mathbf{N}$ singular, $\frac{\partial F(\alpha)}{\alpha} = 0$, $\|\mathbf{z}_b^{\rm fix} \times (\dot{\mathbf{v}} - \mathbf{g})\| = 0$, or $\psi_{23} = 0$. As the discussion to Theorem \ref{theorem_det_N}, since the angle of attack $\alpha$ is solved from $F(\alpha) = 0$, the former condition requires $F(\alpha)$ to pass through zero with a zero slope, which rarely occurs for actual aerodynamic configuration $\mathbf c_z(\alpha, \beta)$. For the second condition $\|\mathbf{z}_b^{\rm fix} \times (\dot{\mathbf{v}} - \mathbf{g})\| = 0$, since the current singularity case occurred at the vertical ascending or descending flights, the thrust should provide the major special acceleration $\dot{\mathbf v}- \mathbf g$. Since the thrust is aligned with body X axis, the direction $\dot{\mathbf v}- \mathbf g$ should be most similar to $\mathbf x_b^{\rm fix}$, not $\mathbf z_b^{\rm fix}$, which rules out the condition $\|\mathbf{z}_b^{\rm fix} \times (\dot{\mathbf{v}} - \mathbf{g})\| = 0$. For the third condition $\psi_{23} = 0$, it requires a special $\frac{\partial \mathbf c_y (\alpha, \beta)}{\partial \beta}\rvert_{\beta = 0}$ that satisfies both of $F(\alpha) = 0$ and (\ref{e_psi_23_sing_gamma0}), which generally does not hold in actual aerodynamic configurations. Therefore, the matrix $\mathbf{N}$ is non-singular in practice.

	Finally, the angular acceleration and control moment are also solved from (\ref{e_d_omega}) and (\ref{e_tau}), but the derivatives $\dot{\mathbf{h}}$ and $\dot{\mathbf{N}}$ are recalculated in Appendix \ref{app_sing_gamma0_d_h_N}.
	
	\begin{remark}
		Singularity conditions  $||\mathbf v_a|| = 0$  and  $\gamma = 0$, is resolved in a unified manner of assigning $\mathbf z_b$ closest to a fixed direction $\mathbf z_b^{\rm fix}$,  which is equivalent to fixing the yaw angle.  If the singularity conditions are caused by vehicles at low speed vertical flights (e.g., hovering, vertical take-off and landing), such fixing of yaw angle is unnecessary. For example, an extra yaw angle can be specified by assigning $\mathbf z_b^{\rm fix}$, to achieve sensor-pointing and sideways maneuvering.
	\end{remark}

 	\subsection{Differential flatness transform}
 	\label{sec_diff_flat_tran}
 	In this section, we present a complete differential flatness transform that maps a flat-output trajectory to a state-input trajectory, based on the flatness functions in Section \ref{sec_flat_func} with treatments for singularity conditions presented in Section \ref{sec_singularities}. 
 	Since the flatness functions and singularity treatments are all based on $\mathbf v_a$, the airspeed, they could naturally incorporate the wind speed $\mathbf w$ into the inertial speed $\mathbf v$. In practice, we compute the airspeed as $\mathbf v_a = \mathbf v - \bar{\mathbf w}$, where $\bar{\mathbf w}$ is a surrogate wind speed to be compensated. In case of full wind speed compensation, we set $\bar{\mathbf w} = \mathbf w$ or else $\bar{\mathbf w} = \mathbf 0$.
	
 	Combining all elements above, the complete differential flatness transform can be obtained as shown in Algorithm \ref{alg_proc_diff_flat}. With the transform, any flat-output trajectories can be mapped to the system state $\mathbf x_{\rm full} $ and control input $\mathbf u_{\rm full}$ as below:
    \vspace{-2mm}
    \begin{equation}
        \label{e_full_system_flat}
            \mathbf{x}_{\rm full} = \mathcal{X}_{\rm full}(\mathbf{p}^{(0:3)}) , \quad \mathbf{u}_{\rm full} = \mathcal{U}_{\rm full}(\mathbf{p}^{(1:4)}),
    \end{equation}
    where $\mathcal{X}_{\rm full}(\mathbf{p}^{(0:3)})$ denotes the state flatness function of the flat output and its derivatives up to the third order, and $\mathcal{U}_{\rm full}(\mathbf{p}^{(1:4)})$ denotes the input flatness function of the flat-output derivatives up to the fourth order. 

 	\begin{algorithm}[t!]
 		\caption{Differential flatness transform}
 		\label{alg_proc_diff_flat}
 		\SetAlgoLined
 		\textbf{Given: } a trajectory of flat output $\mathbf{p}^{(0:4)}$ avoiding $\|\dot{\mathbf{v}} - \mathbf{g}\| = {0}$, current surrogate wind speed $\bar{\mathbf{w}}$,  previous body Y axis $\mathbf{y}_b^{\rm prev}$, velocity threshold $v_{\rm min}$, angle threshold $\gamma_{\rm min}$, and the body Z axis $\mathbf{z}_b^{\rm fix}$ fixed just before $\| \mathbf v_a \| < v_{\rm min}$ or $|\gamma| < \gamma_{\rm min}$. \\  
 		\textbf{Calculate the airspeed: } \\
 		$\mathbf{v}_a = \mathbf{v} - \bar{\mathbf{w}}$; \\
 		\textbf{Calculate the states and inputs: } \\
 		\eIf{$\Vert \mathbf{v}_a \Vert < v_{\rm min}$}
 		{
 			Assign $\mathbf{y}_b$ perpendicular to $\dot{\mathbf{v}} - \mathbf{g}$ and $\mathbf{z}_b^{\rm fix}$ (\ref{e_yb_sing_v0});\\
            Determine $\mathbf{x}_b$ (\ref{e_xb_aT_sing_va0}), $a_T$ (\ref{e_xb_aT_sing_va0}) and $\mathbf{R}$ (\ref{e_determine_R});\\
            Calculate $\mathbf{h}$ and $\mathbf{N}$ (\ref{e_h_N_sing_va0});
 		}{
                Calculate $\gamma$ (\ref{e_gamma});\\
 			Solve $\alpha$ (\ref{e_solve_alpha}) and $a_T$ (\ref{e_aT}); \\   
 			\eIf{ $|\gamma| < \gamma_{\rm min}$}
 			{
				Assign $\mathbf{y}_b$ perpendicular to $\dot{\mathbf{v}} \!\!-\!\! \mathbf{g}$ and $\mathbf{z}_b^{\rm fix}$ (\ref{e_yb_sing_v0});\\
                Determine $\mathbf{x}_b$ (\ref{e_xb}) and $\mathbf{R}$ (\ref{e_determine_R});\\
                Calculate $\mathbf{h}$ and $\mathbf{N}$ (\ref{e_h_N_sing_gamma0});
 			}
 			{
 				Determine $\mathbf{y}_b$ (\ref{e_yb}), $\mathbf{x}_b$ (\ref{e_xb}) and $\mathbf{R}$  (\ref{e_determine_R}); \\
                Calculate $\mathbf{h}$ and $\mathbf{N}$ (\ref{e_h_N}); \\
 				Set $\mathbf{z}_b^{\rm fix} = \mathbf{z}_b$;
 			}
 		}
	 	Solve $\boldsymbol{\omega}$ (\ref{e_solve_omega}); \\
	 	Solve $\dot{\boldsymbol{\omega}}$ (\ref{e_d_omega}) and  $\boldsymbol{\tau}$ (\ref{e_tau}); \\		
 		Set $\mathbf{y}_b^{\rm prev} = \mathbf{y}_b$;
 	\end{algorithm}

	\section{System overview}
	\label{sec_ctrl_archi}
	In this section, we present the entire framework of trajectory generation and tracking control for aggressive  flights based on the fundamental differential flatness of the tail-sitter vehicle presented previously. 

	\begin{figure*}[t!] 
		\centering
		\includegraphics[width=1\linewidth]{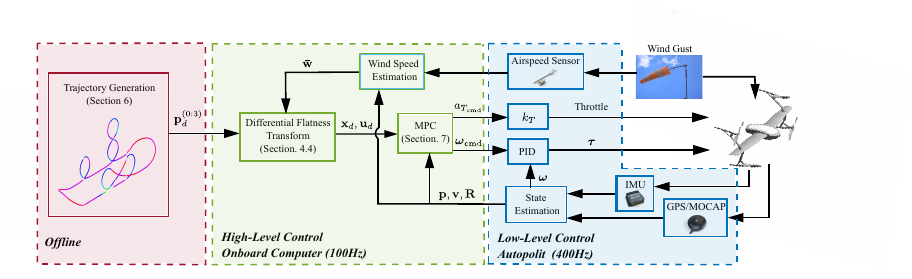} 
		\caption{System overview.} 
		\label{fig_system_overview}
		\vspace{-0mm}
	\end{figure*}
	
	\subsection{System reduction}
	 The full system presented in (\ref{e_vehicle_dyn}) is of dimension 12, comprising the vehicle position, velocity, attitude and angular velocity. Note that the system has a cascaded structure, where the input torque $\boldsymbol{\tau}$ solely affects the angular velocity, and then the angular velocity determines the attitude, hence the velocity and position of the vehicle. Enabled by this cascaded dynamics, we propose to control the angular velocity dynamics (\ref{e_rotational_dyn}) separately (referred to as the ``low-level control"). In the low-level control, the Coriolis term $\boldsymbol{\omega} \times \mathbf{J} \boldsymbol{\omega}$ and aerodynamic moment $\mathbf{M}_a$ can be compensated in a feed forward way, while the remaining dynamics are first order linear systems that can be controlled by linear feedback controller (e.g., PID controller). More systematic and advanced control techniques could also be deployed, such as $\mu$-synthesis \citep{noormohammadi2020system}, $H_{\infty}$ loop shaping \citep{li2020modeling}, Notch filters \cite{xu2019full}, to improve the controller bandwidth and robustness to model uncertainties (e.g., unknown flexible modes) and possible vibrations. Details of our low-level controller is shown in Section \ref{sec_uav_platform}.
	 
	 With a well-designed low-level controller, we assume that the vehicle angular velocity can be instantaneously achieved hence it can be viewed as the control input to the rest vehicle dynamics consisting of (\ref{e_translation_kin}, \ref{e_translation_dyn}, \ref{e_rotation_kin}) (the ``high-level system"). As a result, the state and input of the high-level system are
    \begin{subequations}
        \begin{align}
		\mathbf{x} &= \left(\mathbf{p} \ \ \mathbf{v} \ \ \mathbf{R}\right)  \\
		\mathbf{u} &= \left(a_T \ \  \boldsymbol{\omega}\right) 
	\end{align}
    \end{subequations}
	which are subject to the following system model
	\begin{equation}
		\dot{\mathbf{x}} = f(\mathbf{x}, \mathbf{u}) = \left\{ \begin{array}{l}
			\dot{\mathbf{p}} = \mathbf{v}, \\
			\dot{\mathbf{v}} = \mathbf{g} + a_T\mathbf{R}\mathbf{e}_1 + \frac{1}{m}\mathbf{R}\mathbf{f}_a \\
			\dot{\mathbf{R}} =  \mathbf{R} \lfloor \boldsymbol{\omega} \rfloor 
		\end{array}\right.	
		\label{e_high_level_sys}
	\end{equation}
	Since the state in (\ref{e_high_level_sys}) is a reduced set of the original one in (\ref{e_vehicle_dyn}), the reduced system is still differentially flat. Specifically, the state and input of the high-level system can be written as
	
	\begin{equation}
	    \label{e_reduced_system_flat}
	    \mathbf{x} = \mathcal{X}(\mathbf{p}^{(0:2)}), \quad \mathbf{u} = \mathcal{U}(\mathbf{p}^{(1:3)})
	\end{equation}
	where $\mathcal{X}(\cdot)$ and $\mathcal{U}(\cdot)$ are subsets of $\mathcal{X}_{{\rm full}}(\cdot)$ and $\mathcal{U}_{{\rm full}}(\cdot)$ from (\ref{e_full_system_flat}) and the corresponding state-input trajectory $(\mathbf{x}, \mathbf{u})$ will satisfy the high-level system model (\ref{e_high_level_sys}) subject to the surrogate wind speed $\bar{\mathbf w}$. The high-level system in (\ref{e_high_level_sys}) is of lower dimension and will be used for our trajectory planning and tracking control.
 
	This system reduction presented above has both advantages and disadvantages. One advantage is the reduction of computation complexity in trajectory generation, avoiding the cumbersome derivative of moment $\boldsymbol{\tau}$ with respect to the flat-output and the knowledge of dynamic parameters. Another advantage is decoupling the low-level angular velocity control, which are highly related to the vehicle dynamics (e.g., flexible modes, motor delay, etc.), from the high-level system planning and tracking control. A disadvantage arises that the original dynamical feasibility (i.e., thrust and moment) is approximated as constraints on inputs of the reduced high-level system (i.e., thrust and angular velocity). This approximation becomes less accurate when the vehicle is maneuvering with rapidly varying  angular velocity that necessitates large control moment. Despite the rough approximation, practical quadrotor applications in  drone racing \citep{romero2022time} and aerobatics \citep{kaufmann2020RSS, lu2022manifold} demonstrate that the feasibility can be sufficiently guaranteed by constraining  inputs of the reduced system in most cases.
	

	
	\subsection{System framework}
	With the system reduction above, the overview of our proposed approach is shown in Fig. \ref{fig_system_overview}. A flat-output trajectory up to the third-order smoothness (i.e, $\mathbf{p}_d^{(0:3)}$) is planned offline for the high-level system by a trajectory generation module (Section \ref{sec_traj_gen}). For online trajectory tracking, we propose a two-stage control strategy. The first stage is differential flatness transform (Section \ref{sec_diff_flat_tran}) that maps the flat-out trajectory to the desired state and input trajectory $\mathbf{x}_{d}, \mathbf{u}_{d}$. This transform also incorporates environment wind (if enabled) and fixes the singularity conditions presented in Section \ref{sec_singularities}. The computed state-input trajectories are then tracked in the second stage by an unified global on-manifold MPC, which computes the optimal control inputs $a_{T_{\rm cmd}}$ and $\boldsymbol{\omega}_{\rm cmd}$ (Section \ref{sec_ctrl}). These commands are then sent as reference to the low-level controller. 
	
	\section{Trajectory generation}
	\label{sec_traj_gen}
	
	
	Since the vehicle dynamics is differentially flat in coordinated flight as proved in Section \ref{sec_diff_flat}, all states and inputs can be expressed by flatness functions of the flat output and its derivatives. As a result, the trajectory generation problem reduces to low-dimensional algebra in the flat-output space (i.e., the vehicle position), without any integration of the under-actuated system dynamics in (\ref{e_high_level_sys}). We parameterize the vehicle position as polynomials \citep{bry2015aggressive, mueller2015computationally, ding2019efficient} and minimize the flight time and control efforts computed from the flatness functions (\ref{e_reduced_system_flat}), subject to necessary constraints.

	\subsection{Trajectory optimization}
    \label{sec_planning_coordinated}
	We formulate the trajectory planning as an optimization problem that finds a dynamically-feasible, smooth trajectory $\mathbf{p}(t): \mathbb{R} \in [0, T_f] \mapsto \mathbb{R}^3$ with the minimum flight time $T_f$, control effort $\mathbf{u}$, and passing through a sequence of waypoints $\mathbf{Q} = (\mathbf{q}_0, \  \cdots \  \mathbf{q}_M)$. The purpose of waypoints are two-fold: 1) it could be used to obtain a collision-free trajectory when using with a front-end flight corridor (e.g., \cite{liu2017planning, gao2019flying}); and 2) specifying the location of the waypoints could change the shape of the flight trajectory, so that the desired aerobatic flight trajectories can be obtained. Given the initial state $\mathbf{s}_0$, terminal state $\mathbf{s}_f$ and waypoints $\mathbf{Q}$, the trajectory optimization is formulated as:
	\begin{subequations}
		\begin{align}
			&\displaystyle\min_{\mathbf{p}(t),T_f} \displaystyle \int_{0}^{T_f}  \Vert \mathbf{u}\Vert^2_{\mathbf
				W} dt + \rho T_f  \label{e_opt_cost}\\
			&\mathrm{s.t.} \quad \mathbf{x}(t) = \mathcal{X}\left(\mathbf{p}^{(0:2)}(t)\right), \ \mathbf{u}(t) = \mathcal{U}\left(\mathbf{p}^{(1:3)}(t) \right) \label{e_opt_sys}\\
			& \qquad\ \  \mathbf{p}^{(0:3)}(0) = \mathbf{s}_0 , \ \mathbf{p}^{(0:3)}(T_f) = \mathbf{s}_f \label{e_opt_endpoint}\\
			& \qquad\ \  \mathbf{p}(t_{\mathbf q_i}) = \mathbf{q}_i, \ 0 \leq t_{\mathbf q_0} < \cdots < t_{\mathbf q_M} \leq T_f \label{e_opt_waypoint}\\
			& \qquad\ \   \mathbf{x}(t) \in \mathbb{X},\ \mathbf{u}(t) \in \mathbb{U} \label{e_opt_bound} \\
			& \qquad\ \  \mathcal{S}(\mathbf{x}) \geq \boldsymbol{\epsilon}  \label{e_opt_sing}
		\end{align}
		\label{e_traj_opt}
	\end{subequations}
	where $\mathbf{W} \in \mathbb{R}^{4\times 4}$ is a positive diagonal matrix penalizing the total control effort and $\rho > 0$ is the flight time penalty. $\mathbb{X}$ denotes the kinodynamic constraint that ensures the vehicle to operate within a safe workspace. The state constraints (\ref{e_opt_bound}) in this paper is the velocity condition
	\begin{equation}
		\Vert \mathbf{v}(t) \Vert \leq v_{\rm max}
	\end{equation}
    where $v_{\rm max}$ is the maximum velocity for safe flight. $\mathbb{U} = \{\mathbf{u} \in \mathbb{R}^4 | \ \mathbf{u}_{\rm min} \leq \mathbf{u} \leq \mathbf{u}_{\rm max}\}$ is the boundary of the system inputs (i.e., the thrust acceleration $a_T$ and angular velocity $\boldsymbol{\omega}$). $\mathcal{S}(\mathbf{x})$ denotes the singularity condition. Among the three singularity sub-conditions in Section \ref{sec_singularities}, the conditions $\|\mathbf{v}_a \| < v_{\rm min}$ and $|\gamma| < \gamma_{\rm min}$ have been well treated, hence $\mathcal{S}(\mathbf{x})$ needs only to consider the first sub-condition:		
	\begin{equation}
		\label{e_sing_cond_numerical}
	    \mathcal{S}(\mathbf{x}) = \Vert \dot{\mathbf{v}} - \mathbf{g} \Vert^2 \geq \epsilon^2 
	\end{equation} 		
 	where $\epsilon$ is a small positive value for numerical stability on implementation ($\epsilon = $ \SI{0.1}{m/s^2} in this paper).
	
	The optimization problem in (\ref{e_traj_opt}) optimizes both the flat-output trajectory $\mathbf p(t)$ and the flight time $T_f$, to minimize the total control efforts and time in (\ref{e_opt_cost}). The minimization of control efforts tend to find smooth trajectories that are easier to track and the minimization of total time $T_f$ tends to produce high-speed trajectories. Hence, the optimization (\ref{e_traj_opt}) promises both trajectory smoothness and agility. The system state and control input in (\ref{e_opt_sys}) are characterized as the flatness functions that explicitly exploits the vehicle dynamic and kinematic models. The initial and terminal conditions of the trajectory are specified in (\ref{e_opt_endpoint}). The dynamical feasibility which indicates the actuation capability of the aircraft (or the tracking capability of the low-level control system) is guaranteed by the boundary constraints in (\ref{e_opt_bound}). The collision-free and the shape constraints of the trajectory could be achieved by satisfying the waypoint constraints in (\ref{e_opt_waypoint}). Finally, singularity conditions are incorporated into the constraint in (\ref{e_opt_sing}).
	
	\subsection{Trajectory optimization solving}
	The trajectory optimization (\ref{e_traj_opt}) is a nonlinear constrained optimization problem. We leverage a state-of-the-art flight trajectory planning framework, MINCO \citep{wang2022geometrically}, to parameterize and solve the trajectory. Referring to \cite{wang2022geometrically}, we insert $N$ free control points $\mathbf{d}_i = (\mathbf d_{i_1}, \cdots, \mathbf d_{i_N}) \in \mathbb{R}^{N \times 3}$ between each two consecutive waypoints $\mathbf q_i, \mathbf q_{i+1}$ and create a  waypoint sequence $\mathbf r = (\mathbf q_0, \mathbf d_0, \mathbf q_1, \cdots, \mathbf q_{M-1}, \mathbf d_{M-1}, \mathbf q_M) \in \mathbb{R}^{(M(N+1)+1) \times 3}$. The corresponding passing time for the waypoint sequence is $\mathbf{T} = [t_{\mathbf q_0}, t_{\mathbf d_{0_1}}, \cdots, t_{\mathbf d_{0_N}}, t_{\mathbf q_1}, \cdots, t_{\mathbf q_M}] \in \mathbb{R}^{M(N+1)+1}$. Then we characterize the trajectory by a multi-stage polynomial trajectory, where a $7$th-order polynomial trajectory with $C^4$ continuity is used to connect to two consecutive points $\mathbf r_j, \mathbf r_{j+1} \in \mathbf r$ at their respective passing time $\mathbf{T}_j, \mathbf{T}_{j+1} \in \mathbf{T}$. The entire trajectory is therefore uniquely determined by all points $\mathbf{r}$ and respective passing time $\mathbf{T}$, having the endpoint constraint (\ref{e_opt_endpoint}) and waypoint constraint (\ref{e_opt_waypoint}) naturally satisfied. To deal with remaining boundary constraint (\ref{e_opt_bound}) and the singularity condition (\ref{e_opt_sing}), we relax these constraints to soft penalties in the objective function, hence transforming the constrained nonlinear optimization (\ref{e_traj_opt}) into an unconstrained nonlinear optimization problem. The decision variables of the resultant optimization problem consist of control points $\mathbf D = (\mathbf d_0, \cdots, \mathbf{d}_{M-1}) \in \mathbb{R}^{MN \times 3}$ and  passing time $\mathbf T$, which are solved by a quasi-Newton method \citep{wang2022geometrically}. 
	
	To solve the unconstrained nonlinear optimization with a quasi-Newton method, gradients of the objective and constraints with respect to the decision variables $\mathbf D$ and $\mathbf T$ are needed. The gradients of flat-output (i.e, $\partial \mathbf{p}^{(1:3)}(t) / \partial \mathbf{D}$, $\partial \mathbf{p}^{(1:3)}(t) / \partial \mathbf{T}$) have been derived in detail in \cite{wang2022geometrically}, with which the gradients of the control input $\mathbf{u}$ and singularity condition in (\ref{e_opt_sing}) can be calculated by the chain rule:
	\begin{subequations}
		\begin{align}
			\frac{\partial \mathcal S(\mathbf x)}{\partial \mathbf{D}} &= 2(\dot{\mathbf v} - \mathbf g)^T \frac{\partial \dot{\mathbf v}}{\partial \mathbf D} \\
   \frac{\partial \mathcal S(\mathbf x)}{\partial \mathbf{T}} &= 2(\dot{\mathbf v} - \mathbf g)^T \frac{\partial \dot{\mathbf v}}{\partial \mathbf T}\\
			\frac{\partial \mathbf{u}(t)}{\partial \mathbf{D}} &= \frac{\partial \mathcal{U}\left(\mathbf{p}^{(1:3)}(t)\right)}{\partial \mathbf{p}^{(1:3)}(t)}  \frac{\partial \mathbf{p}^{(1:3)}(t)}{\partial \mathbf{D}} \\ \frac{\partial \mathbf{u}(t)}{\partial \mathbf T} &= \frac{\partial \mathcal{U}\left(\mathbf{p}^{(1:3)}(t)\right)}{\partial \mathbf{p}^{(1:3)}(t)}  \frac{\partial \mathbf{p}^{(1:3)}(t)}{\partial \mathbf T} 
		\end{align}
	\end{subequations}
	where $\partial \mathcal{U}\left(\mathbf{p}^{(1:3)}(t)\right) / \partial \mathbf{p}^{(1:3)}(t)$ is gradients of the flatness functions in Section \ref{sec_diff_flat}. The calculation is provided in Appendix \ref{app_gradient_flat_func}.

	\section{Global control for trajectory tracking}
	\label{sec_ctrl}
	In this section, we develop a global tracking controller that allows a tail-sitter to accurately follow aggressive reference trajectories in real-world environments. Unlike conventional tail-sitter controllers operating in separate flight modes or existing global controllers considering a simplified aerodynamic model, the proposed global controller fully exploits the vehicle aerodynamics, contributing to accurate, agile flights within the entire envelope without encountering control switching or singularity.

	\subsection{The error-state system}
	The goal of the tracking controller is to drive the vehicle state to follow the desired reference state trajectory $\mathbf x_d$, which is computed from the trajectory planned in Section \ref{sec_traj_gen} via the flatness function (\ref{e_reduced_system_flat}). Equivalently, the error between the actual and reference state trajectory should converge to zero. Therefore, we only need to control the error state $\delta \mathbf x$. 

	\subsubsection{Definition of the error state}~
	
	Considering the tail-sitter model in (\ref{e_high_level_sys}), the system state evolves on a compound manifold below
	\begin{align}
		\label{e_state_manifold}
		&\mathcal{M} = \mathbb{R}^3 \times \mathbb{R}^3 \times SO(3), \quad \dim(\mathcal{M}) = 9 \\
		&\mathbf{x} = \begin{pmatrix}
			\mathbf{p} \\  \mathbf{v} \\  \mathbf{R}
		\end{pmatrix}  \in \mathcal{M},\quad  \mathbf{u} = \begin{bmatrix}
			a_T \\ \boldsymbol{\omega}
		\end{bmatrix} \in \mathbb{R}^4	
	\end{align}

	We assume that the trajectory planner generates a full reference trajectory, including the state $\mathbf{x}_d = (
	\mathbf{p}_d \  \mathbf{v}_d \  \mathbf{R}_d
	) \in \mathcal{M}$ and input $\mathbf{u}_d = \begin{bmatrix}
		a_{T_d} & \boldsymbol{\omega}_d^T
	\end{bmatrix}^T \in \mathbb{R}^4$. Note that the state-input trajectory $(\mathbf{x}_d, \mathbf{u}_d)$ satisfies the model (\ref{e_high_level_sys}) subject to the surrogate wind speed $\bar{\mathbf w}$. 
	
	Defining the error between the actual state $\mathbf x$ and the reference one $\mathbf x_d$, both lie on the state manifold $\mathcal{M}$, is not trivial. We adopt the definition in our prior work \citep{lu2022manifold}, which defines the error state in the local homeomorphic space (an open set in Euclidean space) around each point $\mathbf x_d$. This particular error definition on manifold is denoted as $\boxminus$ \citep{hertzberg2013integrating} detailed as below: 
	\begin{subequations}
		\label{e_error_state_def}
		\begin{align}
		\delta \mathbf{x} &\triangleq  \mathbf{x}_d \boxminus \mathbf{x} = \begin{bmatrix}
			\delta \mathbf{p}^T & \delta \mathbf{v}^T & \delta \mathbf{R}^T
		\end{bmatrix}^T \in \mathbb{R}^9 \\
		\delta \mathbf{p} &\triangleq \mathbf{p}_d \boxminus \mathbf{p} = \mathbf{p}_d - \mathbf{p} \in \mathbb{R}^3 \label{e_delta_p}\\
		\delta \mathbf{v} &\triangleq \mathbf{v}_d \boxminus \mathbf{v} = \mathbf{v}_d - \mathbf{v} \in \mathbb{R}^3 \label{e_delta_v}\\ 
		\delta \boldsymbol{\theta} &\triangleq \mathbf{R}_d \boxminus \mathbf{R} = \text{Log}(\mathbf{R}^T\mathbf{R}_d) \in \mathbb{R}^3 \label{e_delta_R}
	\end{align}
	\end{subequations}
	where $\text{Log}(\cdot)$ is the logarithmic map of the manifold $SO(3)$ and also the inverse of the exponential map $\text{Exp}(\cdot)$. The control inputs are in the Euclidean space, so their errors can be defined directly:
	\begin{subequations}
		\label{e_error_input_def}
		\begin{align}
		\label{e_delta_omega}
		\delta\mathbf{u} &\triangleq \mathbf{u}_d - \mathbf{u} = \begin{bmatrix}
			\delta a_T & \delta\boldsymbol{\omega}^T
		\end{bmatrix}^T \in \mathbb{R}^4 \\
		\delta a_T &\triangleq a_{T_d} - a_T \in \mathbb{R} \\
		\delta \boldsymbol{\omega} &\triangleq \boldsymbol{\omega}_d - \boldsymbol{\omega}  \in \mathbb{R}^3
	\end{align}
	\end{subequations} 

	\subsubsection{The error-state system dynamics}~
	\label{sec_error_state}
 
	To control the error state $\delta \mathbf x$ (\ref{e_error_state_def}) to converge to zero, we need to obtain its dynamic model. To do so, we take the derivative of the error state with respect to time. 

	\begin{theorem}
		\label{theorem_error_state}
		Given the error state defined in (\ref{e_error_state_def}), where the actual trajectory $(\mathbf x, \mathbf u)$ satisfies (\ref{e_high_level_sys}) with the actual wind speed $\mathbf w$ and the reference trajectory $(\mathbf x_d, \mathbf u_d)$ satisfies (\ref{e_high_level_sys}) with the surrogate wind speed $\bar{\mathbf w}$, then the dynamics of the error-state system is:
		\begin{subequations}
			\label{e_error_state_sys}
			\begin{align}
				\delta \dot{\mathbf{x}} &=  \begin{bmatrix}
					\delta \dot{\mathbf{p}}^T & \delta \dot{\mathbf{v}}^T & \delta \dot{\boldsymbol{\theta}}^T
				\end{bmatrix}^T  \\
				\delta \dot{\mathbf{p}} &= \delta \mathbf{v} \label{e_d_delta_p}\\
				\delta \dot{\mathbf{v}} \!&=\! \left( a_{T_d}\mathbf{R}_d\mathbf{e}_1 \!\!+\!\! \frac{1}{m}\mathbf{R}_d\mathbf{f}_{a_d} \right) \!\!-\!\! \left( a_T\mathbf{R}\mathbf{e}_1 \!\!+\!\! \frac{1}{m}\mathbf{R}\mathbf{f}_a \right) \label{e_d_delta_v}\\ 
				\delta \dot{\boldsymbol{\theta}} &= \mathbf{A}^T(\delta\boldsymbol{\theta}) \left(-\mathbf{R}_d^T\mathbf{R}\boldsymbol{\omega} + \boldsymbol{\omega}_d \right) \label{e_d_delta_R}
			\end{align}
		\end{subequations}
		where $\mathbf{f}_{a_d}$ and $\mathbf{f}_{a}$ are the aerodynamic forces in terms of the desired and actual state, respectively
        \begin{subequations}
            \begin{align}
                \mathbf{f}_{a_d} & =  \mathbf{f}_a \left(\mathbf{v}_{a_d}^{\mathcal{B}}\right) ,\quad 
		\mathbf{v}_{a_d}^{\mathcal{B}} = \mathbf{R}_d^T \left( \mathbf{v}_d - \bar{\mathbf{w}} \right) \\
                \mathbf{f}_{a} &=  \mathbf{f}_a \left( \mathbf{v}_{a}^{\mathcal{B}} \right) ,\quad 
		\mathbf{v}_{a}^{\mathcal{B}} = \mathbf R \left(\mathbf{v} - {\mathbf{w}} \right)
            \end{align}
        \end{subequations}
        $\mathbf{A}(\cdot)$ denotes the Jacobian of the exponential coordinates of $SO(3)$ \citep{bullo1995proportional}:
            \begin{equation}
                \mathbf{A}(\mathbf{\delta\boldsymbol{\theta}}) \!=\! \mathbf{I}_3 \!+\! \left(\! \frac{1 \!-\! \cos\Vert\delta\boldsymbol{\theta}\Vert}{\Vert\delta\boldsymbol{\theta}\Vert} \! \right) \! \! \frac{\lfloor\delta\boldsymbol{\theta}\rfloor}{\Vert\delta\boldsymbol{\theta}\Vert} \!+\! \left( \! 1 \!-\! \frac{\sin\Vert\delta\boldsymbol{\theta}\Vert}{\Vert\delta\boldsymbol{\theta}\Vert} \! \right) \! \! \frac{\lfloor\delta\boldsymbol{\theta}\rfloor^2}{\Vert\delta\boldsymbol{\theta}\Vert^2}
            \end{equation}
	\end{theorem}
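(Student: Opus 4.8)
\emph{Proof sketch / plan.} The plan is to differentiate each block of the error state $\delta\mathbf{x}=\mathbf{x}_d\boxminus\mathbf{x}$ of (\ref{e_error_state_def}) with respect to time, using the common model (\ref{e_high_level_sys}) for the actual trajectory $(\mathbf{x},\mathbf{u})$ (driven by the true wind $\mathbf{w}$) and for the reference trajectory $(\mathbf{x}_d,\mathbf{u}_d)$ (driven by the surrogate wind $\bar{\mathbf{w}}$). The position and velocity errors live in $\mathbb{R}^3$, so they reduce to ordinary subtraction; the attitude error requires differentiating a logarithmic map on $SO(3)$ and is the only nontrivial part.

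\textbf{Translational blocks.} First I would differentiate $\delta\mathbf{p}=\mathbf{p}_d-\mathbf{p}$ and substitute $\dot{\mathbf{p}}=\mathbf{v}$, $\dot{\mathbf{p}}_d=\mathbf{v}_d$, giving $\delta\dot{\mathbf{p}}=\mathbf{v}_d-\mathbf{v}=\delta\mathbf{v}$, which is (\ref{e_d_delta_p}). Next I would differentiate $\delta\mathbf{v}=\mathbf{v}_d-\mathbf{v}$ and substitute the translational equation of (\ref{e_high_level_sys}) twice: once at $(\mathbf{x}_d,\mathbf{u}_d)$ with $\mathbf{v}_{a_d}^{\mathcal B}=\mathbf{R}_d^{T}(\mathbf{v}_d-\bar{\mathbf{w}})$, and once at $(\mathbf{x},\mathbf{u})$ with $\mathbf{v}_a^{\mathcal B}=\mathbf{R}^{T}(\mathbf{v}-\mathbf{w})$. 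The shared gravity term $\mathbf{g}$ cancels, leaving exactly (\ref{e_d_delta_v}). No attempt is made to simplify the difference of the two aerodynamic terms, since $\mathbf{f}_{a_d}$ and $\mathbf{f}_{a}$ are evaluated at different air velocities; keeping them separate is precisely what later allows the controller to absorb the wind mismatch $\mathbf{w}-\bar{\mathbf{w}}$ as a disturbance.

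\textbf{Rotational block.} For $\delta\boldsymbol{\theta}=\text{Log}(\mathbf{R}^{T}\mathbf{R}_d)$ I would set $\mathbf{Q}=\mathbf{R}^{T}\mathbf{R}_d=\text{Exp}(\delta\boldsymbol{\theta})$ and compute $\dot{\mathbf{Q}}=\dot{\mathbf{R}}^{T}\mathbf{R}_d+\mathbf{R}^{T}\dot{\mathbf{R}}_d$. Using $\dot{\mathbf{R}}=\mathbf{R}\lfloor\boldsymbol{\omega}\rfloor$ (hence $\dot{\mathbf{R}}^{T}=-\lfloor\boldsymbol{\omega}\rfloor\mathbf{R}^{T}$), $\dot{\mathbf{R}}_d=\mathbf{R}_d\lfloor\boldsymbol{\omega}_d\rfloor$, and the conjugation identity $\mathbf{Q}^{T}\lfloor\boldsymbol{\omega}\rfloor\mathbf{Q}=\lfloor\mathbf{Q}^{T}\boldsymbol{\omega}\rfloor$ (a consequence of the skew-symmetric identities in Section \ref{sec_flight_dyn}), this collapses to the trivialized form $\dot{\mathbf{Q}}=\mathbf{Q}\lfloor\boldsymbol{\omega}_d-\mathbf{R}_d^{T}\mathbf{R}\boldsymbol{\omega}\rfloor$; that is, the angular velocity of $\mathbf{Q}$ expressed in its own body frame is $-\mathbf{R}_d^{T}\mathbf{R}\boldsymbol{\omega}+\boldsymbol{\omega}_d$. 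I would then invoke the standard identity relating the time derivative of exponential coordinates to this trivialized angular velocity — encoded by the map $\mathbf{A}(\cdot)$ of \citep{bullo1995proportional} given in the statement — to obtain $\delta\dot{\boldsymbol{\theta}}=\mathbf{A}^{T}(\delta\boldsymbol{\theta})(-\mathbf{R}_d^{T}\mathbf{R}\boldsymbol{\omega}+\boldsymbol{\omega}_d)$, which is (\ref{e_d_delta_R}). As sanity checks I would verify the limit $\delta\boldsymbol{\theta}=\mathbf{0}$, where $\mathbf{A}^{T}=\mathbf{I}_3$ and the expression reduces to $\boldsymbol{\omega}_d-\boldsymbol{\omega}$, and the consistency of $\frac{d}{dt}\|\delta\boldsymbol{\theta}\|$ with the component of the angular-velocity error along the error axis.

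\textbf{Main obstacle.} The only genuine difficulty is the rotational block: differentiating $\text{Log}(\mathbf{R}^{T}\mathbf{R}_d)$ correctly demands careful bookkeeping of conventions — whether $\mathbf{A}$ enters as the left or the right Jacobian of $SO(3)$ (hence whether $\mathbf{A}$, $\mathbf{A}^{T}$, or an inverse appears), which factor of $\boxminus$ occupies which slot in $\mathbf{Q}=\mathbf{R}^{T}\mathbf{R}_d$, and the sign that $\boldsymbol{\omega}$ picks up when passed through the transpose in $\dot{\mathbf{R}}^{T}$. Once the form $\dot{\mathbf{Q}}=\mathbf{Q}\lfloor\cdot\rfloor$ is reached, the remainder is a direct application of the known exponential-coordinate Jacobian, and the translational blocks involve nothing beyond the chain rule and the cancellation of gravity.
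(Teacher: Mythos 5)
Your proposal is correct and follows essentially the same route as the paper's proof: direct differentiation of the Euclidean blocks, and for the attitude block the identity $\delta\dot{\boldsymbol{\theta}}=\mathbf{A}^{T}(\delta\boldsymbol{\theta})\left(\mathbf{Q}^{T}\dot{\mathbf{Q}}\right)^{\vee}$ applied to $\mathbf{Q}=\mathbf{R}^{T}\mathbf{R}_d$ together with the conjugation identity $\mathbf{Q}^{T}\lfloor\boldsymbol{\omega}\rfloor\mathbf{Q}=\lfloor\mathbf{Q}^{T}\boldsymbol{\omega}\rfloor$, which yields exactly the trivialized angular velocity $-\mathbf{R}_d^{T}\mathbf{R}\boldsymbol{\omega}+\boldsymbol{\omega}_d$ as in Appendix H.
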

	\begin{proof}
            The proof is given in Appendix \ref{app_error_sys}.
	\end{proof}
	
	\begin{lemma}
		\label{lemma_linear_error_sys}
		The first-order linearization of the error-state dynamics given in (\ref{e_error_state_sys}) is:
		\begin{equation}
			\delta \dot{\mathbf{x}} = \mathbf{F}_\mathbf{x} \delta\mathbf{x} + \mathbf{F}_\mathbf{u} \delta\mathbf{u} + \mathbf{F}_\mathbf{w} \delta\mathbf{w}
			\label{e_linear_error_sys}
		\end{equation}
		where
		\begin{subequations}
                \label{e_aerodynamic_model_linear_matrix}
			\begin{align}
				&\mathbf{F}_\mathbf{x} \!=\!\! \begin{bmatrix}
					\mathbf{0} \!&\! \mathbf{I}_3 &\!\!\!\! \mathbf{0} \\
					\mathbf{0} \!&\! \mathbf{M}_{\mathbf v} &\!\!\!\! \mathbf{M}_\mathbf{R} \\
					\mathbf{0} \!&\! \mathbf{0} &\!\!\!\! - \lfloor \boldsymbol{\omega}_d \rfloor 
				\end{bmatrix} ,  
				\mathbf{F}_\mathbf{u} \!=\!\! \begin{bmatrix}
					\mathbf{0} & \mathbf{0} \\
					\mathbf{M}_{T} & \mathbf{0} \\
					\mathbf{0} & \mathbf{I}_3
				\end{bmatrix},  \mathbf{F}_{\mathbf w} \!=\!\! \begin{bmatrix}
				\!\mathbf{0} \\ \!-\mathbf{M}_\mathbf{v} \\ \mathbf{0}
			\end{bmatrix} \\
				&\mathbf{M}_{T} = \mathbf{R}_d \mathbf{e}_1 \\
				&\mathbf{M}_\mathbf{v} = \frac{1}{m} \mathbf{R}_d \frac{\partial \mathbf{f}_{a_d}}{\partial \mathbf{v}_{a_d}^\mathcal{B}} \mathbf{R}_d^T \\
				&\mathbf{M}_\mathbf{R} = \mathbf{R}_d  \left(- a_{T_d} \lfloor \mathbf{e}_1 \rfloor  -  \lfloor \frac{\mathbf{f}_{a_d}}{m} \rfloor  + \frac{\partial \mathbf{f}_{a_d}}{\partial \mathbf{v}_{a_d}^\mathcal{B}}  \lfloor \frac{\mathbf{v}^{\mathcal{B}}_{a_d}}{m} \rfloor \right) \\
                &\frac{\partial \mathbf{f}_{a_d}}{\partial \mathbf{v}_{a_d}^\mathcal{B}} = \left. \frac{\partial \mathbf{f}_{a}}{\partial \mathbf{v}_{a}^\mathcal{B}} \right|_{ \mathbf v_{a_d}^{\mathcal{B}}} \quad \quad \text{(see Equation (\ref{e_pfa_pvb}))} \\
                & \delta \mathbf{w} = \mathbf{w} - \bar{\mathbf{w}}
			\end{align}
		\end{subequations}
	\end{lemma}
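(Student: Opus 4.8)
The plan is to linearize the error-state dynamics (\ref{e_error_state_sys}) about the nominal operating point at which the actual trajectory coincides with the reference one, i.e.\ $\delta\mathbf{x}=\mathbf{0}$, $\delta\mathbf{u}=\mathbf{0}$, $\delta\mathbf{w}=\mathbf{0}$ (equivalently $\mathbf{R}=\mathbf{R}_d$, $\mathbf{v}=\mathbf{v}_d$, $a_T=a_{T_d}$, $\boldsymbol{\omega}=\boldsymbol{\omega}_d$, $\mathbf{w}=\bar{\mathbf{w}}$), treating each block row of (\ref{e_error_state_sys}) in turn. The only ingredients needed are the first-order expansion $\text{Exp}(\boldsymbol{\xi})=\mathbf{I}_3+\lfloor\boldsymbol{\xi}\rfloor+O(\|\boldsymbol{\xi}\|^2)$, the value $\mathbf{A}(\mathbf{0})=\mathbf{I}_3$, the skew-symmetric identity $\lfloor\mathbf{a}\rfloor\mathbf{b}=-\lfloor\mathbf{b}\rfloor\mathbf{a}$, and the closed form of $\partial\mathbf{f}_a/\partial\mathbf{v}_a^{\mathcal{B}}$ from Theorem~\ref{theorem_pfa_pvb}. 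Throughout I use $\mathbf{R}=\mathbf{R}_d\,\text{Exp}(-\delta\boldsymbol{\theta})$, obtained by inverting $\delta\boldsymbol{\theta}=\text{Log}(\mathbf{R}^T\mathbf{R}_d)$, so that a body-frame perturbation $\mathbf{R}\mapsto\mathbf{R}\,\text{Exp}(\boldsymbol{\xi})$ corresponds to $\boldsymbol{\xi}=-\delta\boldsymbol{\theta}$ to first order.

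The position row (\ref{e_d_delta_p}) is already linear and contributes only the identity block coupling $\delta\mathbf{v}$ into $\delta\dot{\mathbf{p}}$. For the attitude row (\ref{e_d_delta_R}), the key observation is that its argument $-\mathbf{R}_d^T\mathbf{R}\boldsymbol{\omega}+\boldsymbol{\omega}_d$ vanishes at the nominal point; hence by the product rule the Jacobian of $\mathbf{A}^T(\delta\boldsymbol{\theta})$ multiplies a zero term and drops out, and one may simply set $\mathbf{A}^T(\mathbf{0})=\mathbf{I}_3$ and linearize the argument. Substituting $\mathbf{R}_d^T\mathbf{R}=\text{Exp}(-\delta\boldsymbol{\theta})\approx\mathbf{I}_3-\lfloor\delta\boldsymbol{\theta}\rfloor$ and $\boldsymbol{\omega}=\boldsymbol{\omega}_d-\delta\boldsymbol{\omega}$, expanding, and keeping first-order terms gives $\delta\dot{\boldsymbol{\theta}}\approx\lfloor\delta\boldsymbol{\theta}\rfloor\boldsymbol{\omega}_d+\delta\boldsymbol{\omega}=-\lfloor\boldsymbol{\omega}_d\rfloor\delta\boldsymbol{\theta}+\delta\boldsymbol{\omega}$, producing the $-\lfloor\boldsymbol{\omega}_d\rfloor$ entry of $\mathbf{F}_\mathbf{x}$ and the $\mathbf{I}_3$ entry of $\mathbf{F}_\mathbf{u}$.

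The velocity row (\ref{e_d_delta_v}) is where the real work lies. Its first parenthesis is the reference value and is independent of all perturbations, so only $\phi(\mathbf{R},\mathbf{v},a_T;\mathbf{w})\triangleq a_T\mathbf{R}\mathbf{e}_1+\frac{1}{m}\mathbf{R}\mathbf{f}_a\!\left(\mathbf{R}^T(\mathbf{v}-\mathbf{w})\right)$ needs to be linearized, with $\delta\dot{\mathbf{v}}\approx-\,D\phi|_{d}[\,\cdot\,]$. I will differentiate $\phi$ argument by argument: the $a_T$-derivative is immediate and gives $\mathbf{M}_T=\mathbf{R}_d\mathbf{e}_1$; the $\mathbf{v}$- and $\mathbf{w}$-derivatives act only through $\mathbf{v}_a^{\mathcal{B}}=\mathbf{R}^T(\mathbf{v}-\mathbf{w})$ and produce the block $\mathbf{M}_\mathbf{v}=\frac{1}{m}\mathbf{R}_d\frac{\partial\mathbf{f}_{a_d}}{\partial\mathbf{v}_{a_d}^{\mathcal{B}}}\mathbf{R}_d^T$ (and its sign-flipped counterpart for the wind channel, since $\partial\mathbf{v}_a^{\mathcal{B}}/\partial\mathbf{w}=-\mathbf{R}^T$); and the attitude derivative, taken along $\mathbf{R}\mapsto\mathbf{R}\,\text{Exp}(\boldsymbol{\xi})$, splits into an explicit part giving $-a_{T_d}\mathbf{R}_d\lfloor\mathbf{e}_1\rfloor-\frac{1}{m}\mathbf{R}_d\lfloor\mathbf{f}_{a_d}\rfloor$ after using $\mathbf{R}\lfloor\boldsymbol{\xi}\rfloor\mathbf{a}=-\mathbf{R}\lfloor\mathbf{a}\rfloor\boldsymbol{\xi}$, and an implicit part from $\mathbf{v}_a^{\mathcal{B}}\mapsto\text{Exp}(-\boldsymbol{\xi})\mathbf{R}^T(\mathbf{v}-\mathbf{w})$ giving $\frac{1}{m}\mathbf{R}_d\frac{\partial\mathbf{f}_{a_d}}{\partial\mathbf{v}_{a_d}^{\mathcal{B}}}\lfloor\mathbf{v}_{a_d}^{\mathcal{B}}\rfloor$. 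Summing these and converting $\boldsymbol{\xi}=-\delta\boldsymbol{\theta}$ recovers exactly $\mathbf{M}_\mathbf{R}$, $\mathbf{M}_\mathbf{v}$, $\mathbf{M}_T$, and the wind block of (\ref{e_aerodynamic_model_linear_matrix}).

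The step I expect to be the main obstacle is this velocity-row linearization: one must simultaneously keep the sign conventions straight between the error variables ($\delta\mathbf{x}=\mathbf{x}_d\boxminus\mathbf{x}$, $\delta\mathbf{u}=\mathbf{u}_d-\mathbf{u}$, $\delta\mathbf{w}=\mathbf{w}-\bar{\mathbf{w}}$) and the directions of perturbation of $\mathbf{R}$, $\mathbf{v}$, $a_T$, $\mathbf{w}$, and must correctly combine the two occurrences (explicit rotation, and implicit dependence through $\mathbf{v}_a^{\mathcal{B}}$) of $\mathbf{R}$ in $\phi$, repeatedly invoking $\lfloor\cdot\rfloor$ anticommutation together with the explicit formula for $\partial\mathbf{f}_a/\partial\mathbf{v}_a^{\mathcal{B}}$ from Theorem~\ref{theorem_pfa_pvb}. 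The other two rows are routine once the vanishing of the nominal argument in (\ref{e_d_delta_R}) and $\mathbf{A}(\mathbf{0})=\mathbf{I}_3$ are used.
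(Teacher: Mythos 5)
Your overall route is the same as the paper's: the appendix also linearizes row by row, substituting $\mathbf{R}=\mathbf{R}_d(\mathbf{I}_3+\lfloor\delta\boldsymbol{\theta}\rfloor)^T$, $a_T=a_{T_d}-\delta a_T$, $\mathbf{f}_a=\mathbf{f}_{a_d}-\delta\mathbf{f}_a$, using $\mathbf{A}(\mathbf{0})=\mathbf{I}_3$ for the attitude row and skew anticommutation plus Theorem~\ref{theorem_pfa_pvb} for the velocity row; your reformulation as a single Fr\'echet derivative of $\phi$ is only a cosmetic repackaging of the same computation. The position and attitude rows, and the blocks $\mathbf{M}_T$, $\mathbf{M}_{\mathbf v}$, $\mathbf{M}_{\mathbf R}$, all check out under your sign conventions.

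The one step whose justification does not close is the wind block. You argue that $\partial\mathbf{v}_a^{\mathcal B}/\partial\mathbf{w}=-\mathbf{R}^T$ ``sign-flips'' the velocity block to give $-\mathbf{M}_{\mathbf v}$, but that minus sign is exactly cancelled once you convert from physical perturbations to the error variables: $\delta\mathbf{v}=\mathbf{v}_d-\mathbf{v}$ is oriented reference-minus-actual while $\delta\mathbf{w}=\mathbf{w}-\bar{\mathbf{w}}$ is actual-minus-reference. Since $\mathbf v$ and $\mathbf w$ enter the velocity row only through $\mathbf{v}_{a_d}-\mathbf{v}_a=(\mathbf{v}_d-\mathbf{v})+(\mathbf{w}-\bar{\mathbf{w}})=\delta\mathbf{v}+\delta\mathbf{w}$, the linearized coefficients of $\delta\mathbf{v}$ and $\delta\mathbf{w}$ in $\delta\dot{\mathbf{v}}$ are forced to be \emph{identical}; a consistent computation therefore yields $+\mathbf{M}_{\mathbf v}$ for both channels, not $+\mathbf{M}_{\mathbf v}$ and $-\mathbf{M}_{\mathbf v}$. (The paper's own Appendix~\ref{app_linear_error_sys} reaches $-\mathbf{M}_{\mathbf v}$ only through the line $\delta\mathbf{v}_a=\mathbf{v}_a-\mathbf{v}_{a_d}=\delta\mathbf{v}-\delta\mathbf{w}$, which contradicts its earlier use of $\mathbf{v}_a=\mathbf{v}_{a_d}-\delta\mathbf{v}_a$ and does not follow from the stated definitions.) So your derivation, carried through consistently, would actually disagree with the stated $\mathbf{F}_{\mathbf w}$ unless you silently adopt $\delta\mathbf{w}=\bar{\mathbf{w}}-\mathbf{w}$; you should either flag that sign or redefine $\delta\mathbf{w}$ accordingly rather than attribute the flip to $\partial\mathbf{v}_a^{\mathcal B}/\partial\mathbf{w}$.
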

	\begin{proof}
		The proof is given in Appendix \ref{app_linear_error_sys}.
	\end{proof}

        \begin{remark}
            \label{remark_linear_error_sys}
            The error system in (\ref{e_linear_error_sys}) is valid for any desired state-input trajectory $(\mathbf x_d, \mathbf u_d)$. This is because the system matrix $\mathbf F_{\mathbf x}$, input matrix $\mathbf F_{\mathbf u}$ and $\mathbf F_{\mathbf w}$ can always be calculated properly without encountering any singularities at any desired state $\mathbf R_d$ and $\mathbf v_d$. For the calculation of $\frac{\partial \mathbf{f}_{a_d}}{\partial \mathbf{v}_{a_d}^\mathcal{B}}$, as shown in (\ref{e_pfa_pvb}), it involves the calculation of $\frac{\partial \mathbf{c}}{\partial \alpha}$ and $\frac{\partial \mathbf{c}}{\partial \beta}$, which are invalid when $\mathbf v_{a_d} = \mathbf 0$. Fortunately, regardless of the values of $\frac{\partial \mathbf{c}}{\partial \alpha}$ and $\frac{\partial \mathbf{c}}{\partial \beta}$, we always have $\lim_{\mathbf{v}_a \to \mathbf 0} \frac{\partial \mathbf{f}_{a_d}}{\partial \mathbf{v}_{a_d}^\mathcal{B}} = \mathbf{0}$ according to (\ref{e_pfa_pvb}). Consequently, the linearized error-state system in (\ref{e_linear_error_sys}) has no singularities within the entire flight envelope.
        \end{remark}        
	
	\subsection{On-manifold MPC for trajectory tracking}
	With the error-state dynamics (\ref{e_linear_error_sys}), which is a standard linear time varying system, a MPC that minimizes the state $\delta
	\mathbf{x}$ and $\delta \mathbf{u}$ is utilized for trajectory tracking. Setting the unknown disturbance $\delta \mathbf{w}$ in (\ref{e_linear_error_sys}) to zero, the MPC is an optimization problem as follows:
	\begin{equation}
		\begin{aligned}
			\delta \mathbf{u}^* \!= \!&\arg\min_{\delta\mathbf{u}_k} \sum_{k=0}^{N-1} \! \left(\Vert \delta\mathbf{x}_k\Vert^2_{\mathbf{Q}_k} \!\! + \! \Vert\delta\mathbf{u}_k \Vert^2_{\mathbf{R}_k}\right) \! + \! \Vert \delta\mathbf{x}_N\Vert^2_{\mathbf{P}_N} \\
			&\  \mathrm{s.t.} \quad \delta\mathbf{x}_{k+1} \!=\! \left(\mathbf{I}_9 \!+ \! \Delta t \mathbf{F}_{\mathbf{x}_k}\right) \delta\mathbf{x}_k \!+ \! \Delta t\mathbf{F}_{\mathbf{u}_k} \delta\mathbf{u}_k \\
			&\  \qquad \ 
			\delta \mathbf{x}_0 = \delta\mathbf{x}_{\rm init} \\
			&\  \qquad \ 
			\delta \mathbf{u}_{k} \in \delta \mathbb{U}_k, \quad k = 0, \cdots , N-1 
		\end{aligned}  
		\label{e_error_state_mpc}
	\end{equation}
	where $N$ is the predictive horizon, and $\mathbf{Q}_k,  \mathbf{R}_k, \mathbf{P}_N$ are positive-definite diagonal matrices, denoting the penalty of the stage state, stage input and terminal state, respectively. $\mathbb{U}_k = \{\delta \mathbf{u}\in \mathbb{R}^{m} | \mathbf{u}_{\min} - \mathbf u_k^d \leq \delta \mathbf{u} \leq \mathbf{u}_{\max} - \mathbf u_k^d \}$ is the constraints for the input error that is derived from the actual input constraints $\mathbf{u}_{\min} \leq \mathbf{u}_k \leq \mathbf{u}_{\max}$.  The optimization in (\ref{e_error_state_mpc}) is a standard quadratic programming (QP) problem, which can be solved efficiently by existing QP solvers. Finally, the optimal control command at the current step is
    \begin{equation}
        \mathbf{u}_{\rm cmd} = \mathbf{u}_{d_0} + \delta\mathbf{u}_0^*
    \end{equation}

    \begin{remark}
        The MPC in (\ref{e_error_state_mpc}) is minimally parameterized and singularity-free. The minimal parameterization results from the use of error state $\delta \mathbf x$ in the controlled system (\ref{e_linear_error_sys}), which parameterizes the original state $\mathbf x$ on the state manifold $\mathcal{M}$ in its homeomorphic space.  This space, being  a normal Euclidean space, has the same dimension (i.e., 9) as the state manifold $\mathcal{M}$. The resultant MPC formulation ({\ref{e_error_state_mpc}}) does not have any redundant parameters when compared with existing quaternion-based MPC for UAV control \citep{falanga2018pampc, sun2022comparative}. The singularity-free property of the MPC is two-folds. First, the MPC is not singular to the flight trajectory because the error system (\ref{e_linear_error_sys}) is always valid in the entire flight envelope. Second, the MPC is not singular to the parameterization $\delta \mathbf x$. Common minimal parameterization of manifolds, such as Euler angles \citep{kamel2017linear, nguyen2021model}, parameterizes the manifold with respect to a fixed point on the manifold, the resultant parameterization is singular at certain configurations. In contrast, our error state $\delta \mathbf x$ parameterizes the state manifold with respect to each point on the reference trajectory (as opposed to a fixed point). If the feedback MPC controller is stable (as it always needs to be), the error state is stabilized around zero and hence avoids the singularity effectively. The minimally-parameterized, yet singularity-free nature of our MPC, avoids any switching in parameterization or control scheme, and eventually leads to a global trajectory tracking controller. 
    \end{remark}

    \begin{remark}
        The MPC in (\ref{e_error_state_mpc}) is a model-based controller, where the computation of $\mathbf F_{\mathbf x}$ and $\mathbf F_{\mathbf u}$ requires the knowledge of the aerodynamic model $\mathbf f_a$ and its derivative (see (\ref{e_aerodynamic_model_linear_matrix})). This enables the MPC to exploit a high-fidelity aerodynamic model of the vehicle to achieve high-accuracy tracking control while effectively admits practical constraints, such as the input saturation. 
    \end{remark}

\begin{remark}
	The error-state dynamics  formally derived in Section. \ref{sec_error_state} are globally equivalent to the original system. This equivalence allows to treat the tail-sitter as a formal nonlinear system. The system is further linearized along the reference state-input trajectory at each point, leading to a linear time-varying system in (\ref{e_linear_error_sys}).  The consequent MPC design is standard,  and its convergence analysis  can be studied using established techniques in existing literature like \citep{mayne2000constrained}, and hence will not be further discussed in the rest of paper.
\end{remark}

	\section{Real-world experimental results}
	\label{sec_experiment}
	In this section, we validate the key ideas of the approach presented in this paper via real-world experiments on a quadrotor tail-sitter UAV. The algorithms of trajectory generation, flatness transform and global tracking controller are implemented to enable the vehicle to perform aggressive agile flights. Extensive  challenging indoor and outdoor field tests are demonstrated, including agile $SE(3)$ flight through consecutive narrow windows, typical tail-sitter maneuvers (transition, level flight and loiter), and extremely aggressive aerobatics (Wingover, Loop, Vertical Eight, Cuban Eight, and their combo). All experiments are successfully tested at least three times for initial verification, data collection, and  video record.
	
	\subsection{Tail-sitter UAV platform}
	\label{sec_uav_platform}
	\begin{figure}[t!] 
		\centering
		\includegraphics[width=0.9\linewidth]{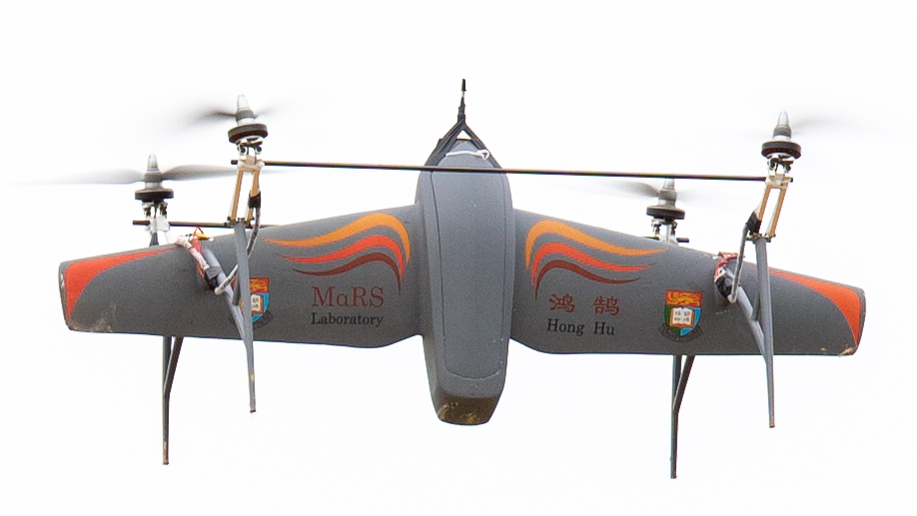} 
		\caption{Our quadrotor tail-sitter UAV prototype: Hong Hu.} 
		\label{fig_honghu_hover}
	\end{figure}

	We validate the presented algorithms on a quadrotor tail-sitter prototype, named ``Hong Hu", based on our previous airframe design  \citep{gu2018coordinate}. As shown in Fig. \ref{fig_honghu_hover}, Hong Hu is manufactured out of carbon fiber, weighs \SI{2.4}{kg}, and has a wingspan of \SI{90}{cm}. The cruise airspeed is \SI{18}{m/s}. It is powered by four T-MOTOR\footnote{https://uav-en.tmotor.com/} MN5006 KV450 motors and APC\footnote{https://www.apcprop.com/} $13\times 10$ propellers, achieving a hovering throttle at $43\%$ of the full throttle. The tail-sitter UAV is equipped with an onboard computer DJI Manifold 2-C\footnote{https://www.dji.com/manifold-2/specs}(\SI{1.8}{GHz} quad-core Intel i7 CPU) and an autopilot PX4 Mini\footnote{https://px4.io/} with a global positioning system (GPS) receiver module. A uni-axial airspeed sensor is mounted on the nose of the airframe. An action camera DJI Action 2\footnote{https://www.dji.com/dji-action-2} is fixed on a carbon rod for first-person-view (FPV) video capturing.
	
	The presented algorithms of trajectory generation and high-level tracking control (i.e., MPC) are implemented on the onboard computer, and communicated via the Robot Operating System (ROS). An open-source QP solver OOQP \citep{gertz2003object} is deployed to solve the MPC problem in (\ref{e_error_state_mpc}) at \SI{100}{Hz}. The predictive horizon is set to 12 in all experiments and the MPC takes 0.85ms in average to compute the optimal commands of thrust acceleration $a_T$ and angular velocity $\boldsymbol{\omega}$, which are then sent to the autopilot PX4 Mini via MAVROS\footnote{http://wiki.ros.org/mavros}. In the autopilot, the thrust acceleration command $a_T$ is mapped to the throttle command by ${\rm thr} = k_T a_T$, where the coefficient $k_T$ is computed as ${\rm thr}_{\rm h}/9.81$ with ${\rm thr}_{\rm h}$ being the throttle at hovering. The angular velocity command $\boldsymbol{\omega}$ is tracked by three PID controllers, each compares the respective angular velocity command with its onboard IMU measurements and calculate a normalized control torque $\boldsymbol{\tau}$ at \SI{400}{Hz}. The three PID controllers, one for each channel, are decoupled, where the coupled Coriolis term $\boldsymbol{\omega} \times \mathbf{J} \boldsymbol{\omega}$ and aerodynamic moment $\mathbf{M}_a$ are all viewed as unknown disturbances and hence ignored in the controller. In the experiments, we found the vehicle exhibited a severe vibration caused by the propeller rotation and attenuate this vibration by a Notch filter added on each PEED controller \citep{xu2019full}. The throttle  and normalized torque  are finally mixed into the four motor pulse-width modulation (PWM) commands using the standard quadrotor configuration. The vehicle state is estimated by an extended Kalman Filter (EKF) also running on the autopilot. External position and heading measurements are obtained by a motion capture system for indoor experiments or the GPS module with magnetometer for outdoor experiments. 
	
	The aerodynamic model is identified by wind tunnel tests in our previous work \citep{lyu2018simulation} and refined by real flight tests due to the new propulsion system and manufacturing. For model refining, we conduct a series of normal and inverted level flight tests in different speeds (and angle of attack), and collect the flight data of motor PWM, vehicle velocity and attitude. To ensure the sideslip angle is zero during the flights, we measure the wind speed using an anemometer, and manually set the vehicle heading along the wind direction prior to each level or inverted flights. We calculate the rotor speed from the motor PWM, the incoming airflow consisting of the measured wind speed and the vehicle's inertial speed, and then obtain the total thrust according to the open-source APC propeller  model\footnote{https://www.apcprop.com/technical-information/performance-data/}. Excluding the propeller thrusts leads to the lift and drag forces exerted on the vehicle and hence the values of $C_L$ and $C_D$ at different angle of attack $\alpha$. We conduct the flight tests from low speed to high speed and iteratively refine the aerodynamic model, to achieve stable flights. For the side force coefficient $C_Y$, we use the model of \citep{lyu2018simulation} without any modification. 
	
	In all outdoor experiments without otherwise specified, the wind speed is estimated and compensated in the differential flatness transform (by setting the surrogate wind $\bar{\mathbf w}$). Referring to \cite{johansen2015estimation}, only the wind speed components in the world frame X-Y plane is estimated by an EKF that propagates a constant wind speed model based on the airspeed sensor measurement and the vehicle inertial velocity and attitude.	To avoid unstable wind speed estimation due to degraded airspeed measurements at low flight speeds, the estimated wind speed is compensated in the differential flatness transform only when the airspeed magnitude $||\mathbf v_a|| > $ \SI{5}{m/s} . When the vehicle speed is below this threshold or in all indoor experiments, no wind speed is compensated in the differential flatness transform (i.e., setting $\bar{\mathbf w} = \mathbf 0$). In all results that follow, the angle of attack and side slip angle are computed based on $\bar{\mathbf w}$ used in the flatness transform, regardless of the actually estimated wind speed.

	\subsection{SE(3) flight through narrow windows}
        \label{sec_se3_flight}
	\begin{figure}[t!] 
		\centering
		\includegraphics[width=1\columnwidth]{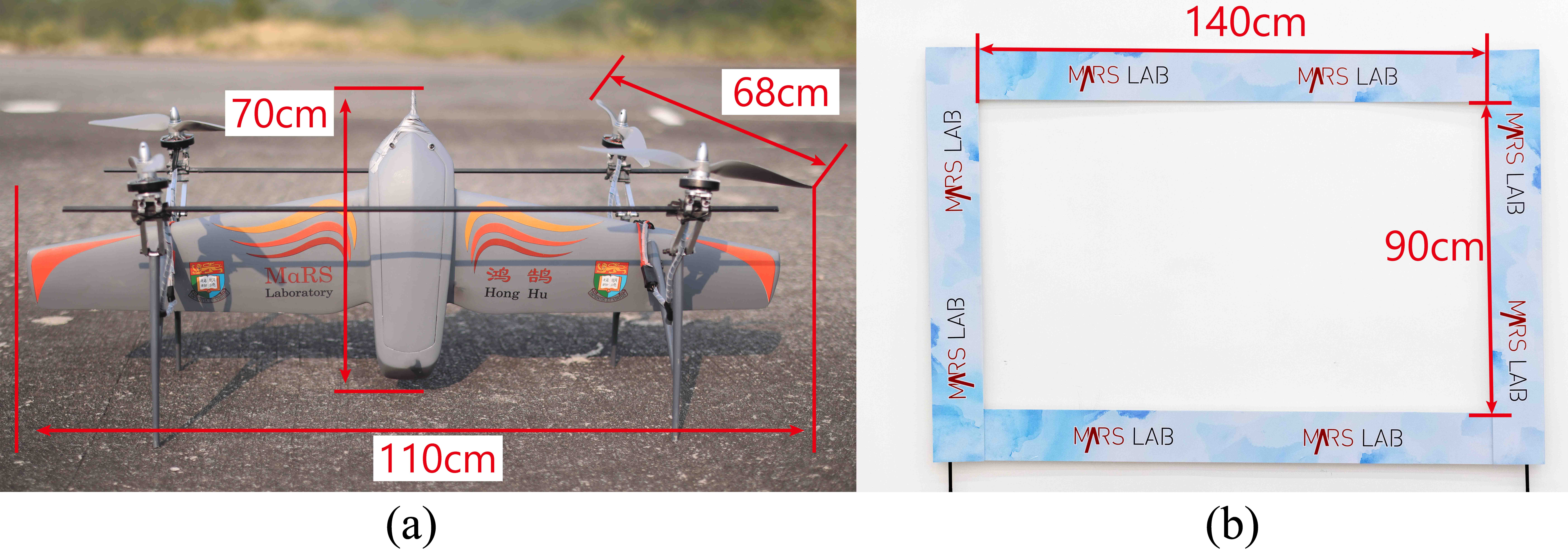} 
		\caption{Sizes of the quadrotor tail-sitter UAV (a) and the narrow window (b). } 
		\label{fig_honghu_frame_size}
	\end{figure}

	\begin{figure}[t!] 
		\centering
		\includegraphics[width=0.75\columnwidth]{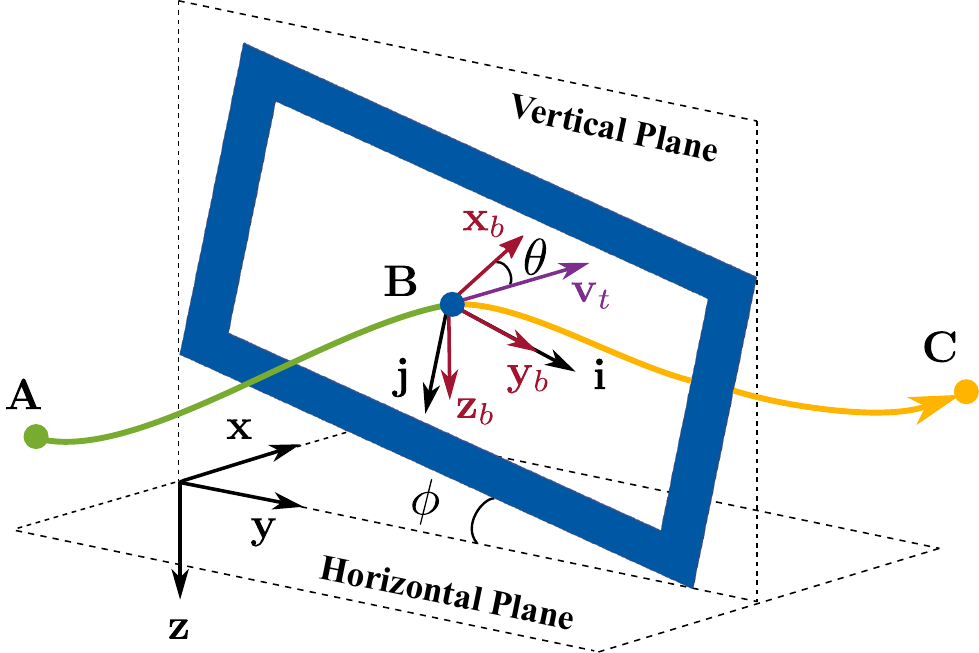} 
		\caption{The traverse trajectory is divided into two pieces: the first one (the green line) connects the start position A and the center of the window B, and the second one (the yellow line) connects B and the target position C. The specified traverse velocity $\mathbf v_t$ is perpendicular to the window plane. The body axis $\mathbf y_b$ is along the long side of the window, and the angle between $\mathbf x_b$ and $\mathbf v_t$ is specified as $\theta = 30^\circ$.} 
		\label{fig_traverse_traj}
	\end{figure}

		\begin{figure*}[ht!] 
		\centering 
		\subfigure[]{   
			\includegraphics[width=1\linewidth]{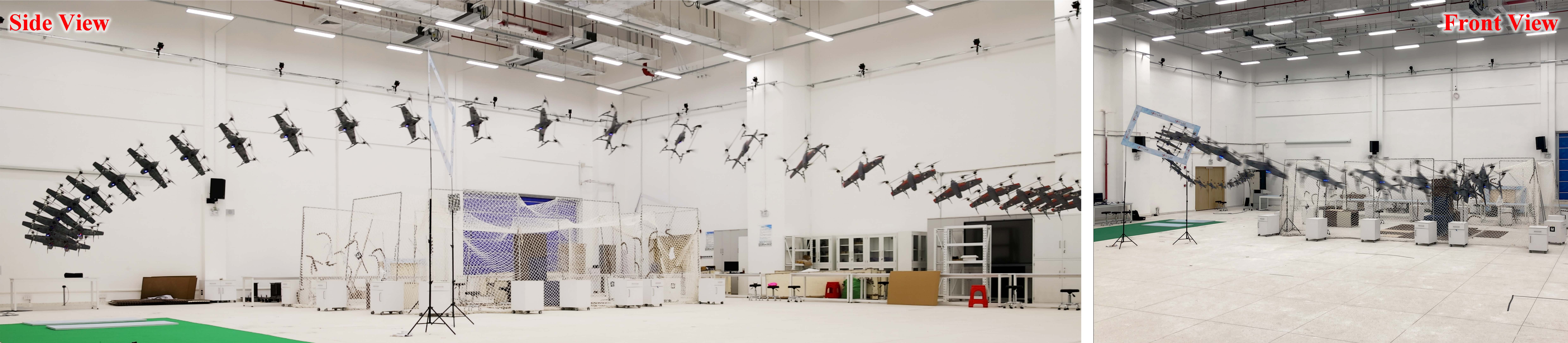} 
			\label{fig_fly_thr_single_window_overlap}    
		}
		\subfigure[]{   
			\includegraphics[width=1\linewidth]{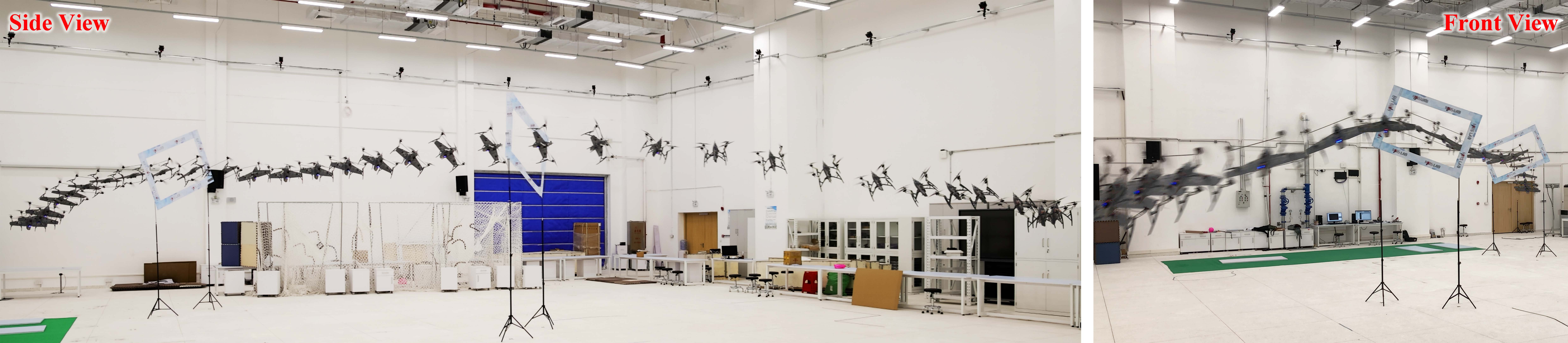}  
			\label{fig_fly_thr_double_window_overlap}   
		}    
		\caption{Snapshot sequences of agile tail-sitter flight through narrow windows. (a) Flying through a single window in \SI{10}{m/s}. (b) Flying through two consecutive windows in \SI{8 }{m/s}.}  
		\label{fig_fly_thr_window_overlap}
	\end{figure*}
	
	\begin{figure*}[h!] 
		\centering
		\includegraphics[width=1\linewidth]{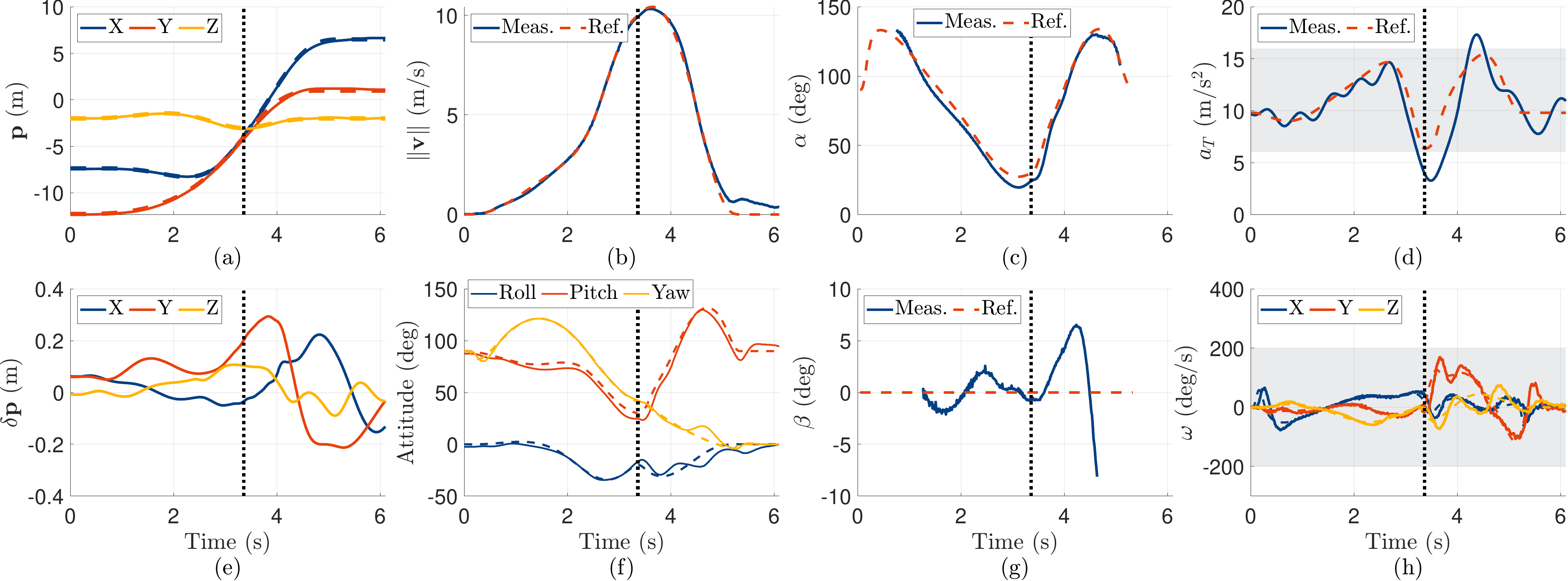} 
		\caption{Flight data of the SE(3) flight through a single window with roll $20^\circ$ and traversing speed \SI{10}{m/s}: (a) position, (b) flight speed, (c) angle of attack, (d) thrust acceleration, (e) position tracking errors, (f) attitude in Euler angles, (g) sideslip angle, (h) angular velocity. In all subplots where applicable, the solid and dashed lines denote the measurement and reference, respectively. For the thrust acceleration, the measurement is obtained from the accelerometer X axis. For the angle of attack and sideslip angle, their measurements are displayed only when the airspeed exceeds \SI{1}{m/s} due to the unstable airspeed measurements at low speeds. The vertical dotted lines denote the moment the vehicle passes the window, and the shaded areas in (d) and (h) denote the feasible  region of the actuation in trajectory optimization (\ref{e_traj_opt}).} 
		\label{fig_thr_single_window_v10}
	\end{figure*}
	
	\begin{figure*}[h!] 
		\centering
		\includegraphics[width=1\linewidth]{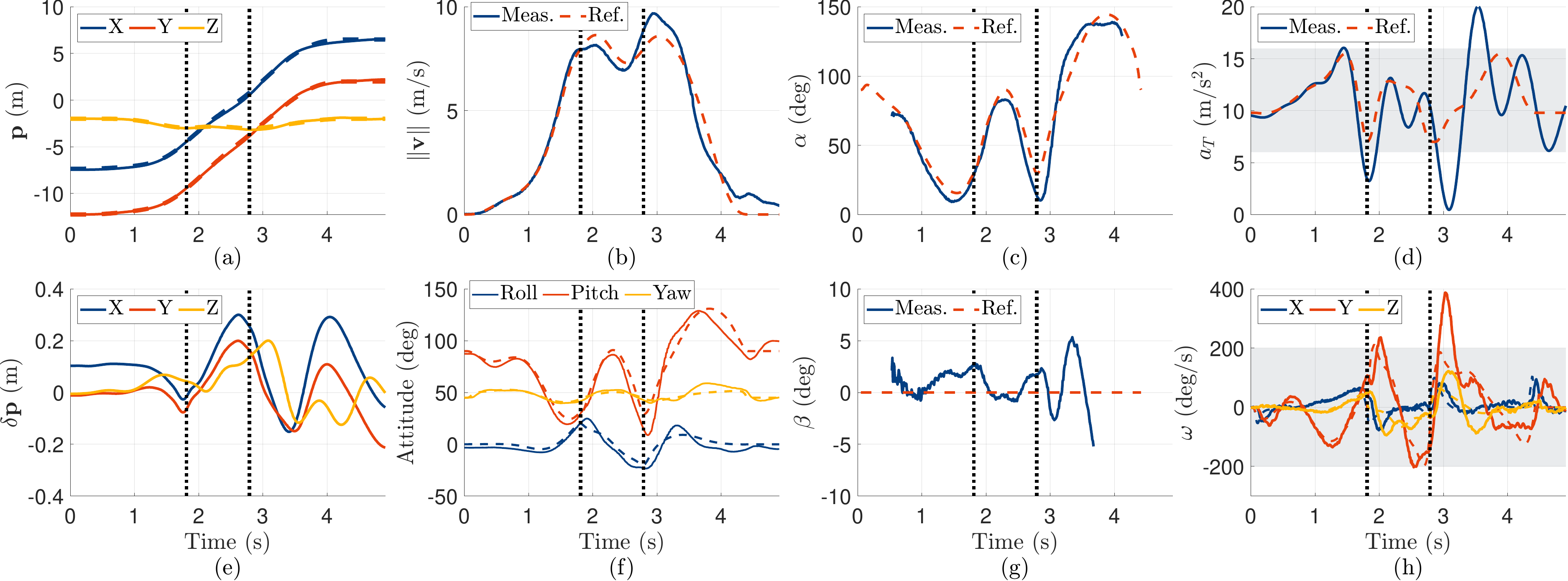} 
		\caption{Flight data of the SE(3) flight through two consecutive windows with roll angles $20^\circ$ and $-20^\circ$ and traversing speeds both at \SI{8}{m/s}: (a) position, (b) flight speed, (c) angle of attack, (d) thrust acceleration, (e) position tracking errors, (f) attitude in Euler angles, (g) sideslip angle, (h) angular velocity. In all subplots where applicable, the solid and dashed lines denote the measurement and reference, respectively. For the thrust acceleration, the measurement is obtained from the accelerometer X axis. For the angle of attack and sideslip angle, their measurements are displayed only when the airspeed exceeds \SI{1}{m/s} due to the unstable airspeed measurements at low speeds. The vertical dotted lines denote the moments when the vehicle passes the windows, and the shaded areas in (d) and (h) denote the feasible  region of the actuation in trajectory optimization (\ref{e_traj_opt}).} 
		\label{fig_thr_double_window_v8}
	\end{figure*}

	Flying through narrow windows is a challenging but potentially worthwhile scenario that a UAV can navigate in obstacle-dense environments, such as searching through thick forest or collapsed buildings after disasters. The main challenge of the problem is that the vehicle can fly through the narrow window only when its body is aligned with the window orientation to fit the limited traversing space as shown in Fig. \ref{fig_honghu_frame_size}. This task requires the UAV to execute a precise, aggressive full body motion on $SE(3)$ (i.e., $SE(3)$ flight). For the sake of flight agility and tracking accuracy, dynamical feasibility of the trajectory should be guaranteed rigorously in planning, such that the tracking error can be reduced when the vehicle executes the maneuver.
	
	To generate a collision-free and dynamically feasible trajectory through a narrow window, we divide the trajectory into two pieces (i.e., before and after passing through the window), and optimize them by (\ref{e_traj_opt}) separately. As shown in Fig. \ref{fig_traverse_traj}, the first trajectory (the green line) connects the UAV start position to a traversing position fixed at the center of the window, and the second trajectory (the yellow line) connects the traversing position to the target position. To determine the boundary conditions for these two trajectories, the speed, acceleration, and jerk at the start and target positions are all set to zeros (i.e., stationary hovering). For the traversing position, the position is the center of the window, velocity $\mathbf v_t$ is normal to the window plane with magnitude manually specified. To determine the acceleration at the traversing position, we specify the body Y axis to be along the window long edge and set the body X axis to form a $\theta = 30^\circ$ angle with the traversing velocity $\mathbf v_t$ (i.e., AoA is $\alpha = 30^\circ$ at the traversing position). Then, we choose the thrust $a_T$ by minimizing the total acceleration $\mathbf a_t$ at the traversing position: 
	\begin{equation}
		\begin{aligned}
			\min_{a_T} \Vert {\mathbf{a}_t} \Vert &= \min_{a_T} \Vert \mathbf{R} ( a_T\mathbf{e}_1 + \mathbf{R}^T\mathbf{g} + \frac{1}{m}\mathbf{f}_a ) \Vert\\
			&= \min_{a_T} \Vert a_T\mathbf{e}_1 + \mathbf{R}^T\mathbf{g} + \frac{1}{m}\mathbf{f}_a \Vert \\
			&= \min_{a_T} \Vert a_T + \mathbf{e}_1^T (\mathbf{R}^T\mathbf{g} + \frac{1}{m}\mathbf{f}_a) \Vert
		\end{aligned}	
	\label{e_uncon_min_acc}	
	\end{equation}
	Taking the thrust constraint into consideration, (\ref{e_uncon_min_acc}) leads to a constrained linear optimization:
	\begin{equation}
		\begin{aligned}
			a_T^* = &\arg\min_{a_T} \Vert a_T + \mathbf{e}_1^T (\mathbf{R}^T\mathbf{g} + \frac{1}{m}\mathbf{f}_a) \Vert \\
			&\mathrm{s.t.} \quad a_{T_{\rm min}} \leq a_T \leq a_{T_{\rm max}}
		\end{aligned}
	\end{equation}
	where $a_{T_{\rm min}}$ and $a_{T_{\rm max}}$ are the boundaries of thrust acceleration. Then, the traversing acceleration can be obtained by substituting the optimal thrust acceleration and the determined attitude into the translational dynamics in (\ref{e_translation_dyn}). Finally,  the traversing jerk is set to zeros for simplicity.

	We validate the algorithms in real-world experiments as shown in Fig. \ref{fig_fly_thr_window_overlap}. 
 The kinodynamic and control input constraints of the planner are $v_{\rm max} = $ \SI{12}{m/s},  $a_{T_{\rm min}} = $ \SI{6}{m/s^2},  $a_{T_{\rm max}} = $ \SI{16}{m/s^2}, and $\omega_{\rm max} = $ \SI{200}{deg/s}. To increase the tracking accuracy for position and attitude, which is crucial for the UAV to pass the window, parameters of the MPC are set as $\mathbf{Q}_k = \text{diag([1800, 1800, 1800, 5, 5, 5, 50, 50, 50])}$, $\mathbf{R}_k = \text{diag([0.3, 0.4 0.4, 0.4])}$, and $\mathbf{P}_N = \mathbf{Q}_k$. All poses of the windows and UAV are measured by a motion capture system. The flying volume is about  $15 \times 15 \times $ \SI{ 4}{ m^3}.
	
	In the first scenario, the tail-sitter performs aggressive $SE(3)$ flights to fly through a single window. Fig. \ref{fig_fly_thr_single_window_overlap} and Fig. \ref{fig_thr_single_window_v10} respectively show the snapshot sequence and experimental data of a successful flight passing through a window with roll angle $\phi = 20^\circ$ and in a traversing speed of $\| \mathbf v_t \| = $ \SI{10}{m/s}. As can be seen in Fig. \ref{fig_thr_single_window_v10}\textcolor{red}{(a-b)}, to fly through the window with the specified speed, the UAV must accelerate from stationary hovering to the traversing speed (i.e., \SI{10}{m/s}) in a time less than 3.4 seconds and a space within $3.6\times 8.1\times $ \SI{1}{m^3}. To achieve this, the UAV performs transition and a banked turn simultaneously (see Fig. \ref{fig_thr_single_window_v10}\textcolor{red}{(f)}). In fact, the planner and controller are not even aware of the transition, but treats the entire flights uniformly. Then the UAV traverses the window with the required pose and velocity at \SI{3.36}{s} (the vertical black dotted line) and finally recover to the hovering status again within a very limited flight space. During the flight, the angle of attack varies up to $113^\circ$ in merely two seconds (see Fig. \ref{fig_thr_single_window_v10}\textcolor{red}{(c)}), indicating a large envelope of angle of attack. Despite this, the overall position error as shown in Fig. \ref{fig_thr_single_window_v10}\textcolor{red}{(e)} is less than \SI{0.3}{m} and the slideslipe angle as shown in Fig. \ref{fig_thr_single_window_v10}\textcolor{red}{(g)} is well stabilized around zero. The seemly large sideslip angle at the beginning and end of the flight is due to the unstable airspeed measurements at very low speeds. Fig. \ref{fig_thr_single_window_v10}\textcolor{red}{(d)} and \textcolor{red}{(h)} show that the trajectory planner effectively bounds the thrust acceleration and angular velocity of the reference trajectory within the nominal actuator constraints (the shaded area).  
	  	
	In the second scenario, the tail-sitter performs more aggressive $SE(3)$ flights to fly through two consecutive windows. Fig. \ref{fig_fly_thr_double_window_overlap} and Fig. \ref{fig_thr_double_window_v8} respectively show the snapshot sequence and experimental data of a successful flight with window roll angles $-20^\circ$ and $20^\circ$ and traversing speeds both at \SI{8}{m/s}. As shown in Fig. \ref{fig_fly_thr_double_window_overlap} and Fig. \ref{fig_thr_double_window_v8}\textcolor{red}{(a), (b) and (f)}, the UAV traverses the first window at \SI{1.81}{s}, then immediately pulls up the pitch angle, which slows down the speed, to gain sufficient lift maintaining the height. After this, the UAV pitches down and accelerates again to fly through the second window safely at \SI{2.79}{s}. The fact that the maneuver in this scenario is more aggressive than the former, is also shown in Fig. \ref{fig_thr_double_window_v8}\textcolor{red}{(d)} and \textcolor{red}{(h)} where the IMU measurements of thrust acceleration and angular velocity reach \SI{20}{m/s^2} and \SI{400}{deg/s}, respectively. The position tracking error in Fig. \ref{fig_thr_double_window_v8}\textcolor{red}{(e)} is consequently larger, but the overall position error remains less than \SI{0.3}{m}. Other phenomenons, such as the large envelope of angle of attack, simultaneous bank turn and transition, and stabilization of the sideslip angles, are all similar to the previous experiment.

	\begin{table}[t!]
		\centering
		\small
		\captionof{table}{Average Pose tracking error of $SE(3)$ flights.}
		\begin{threeparttable} 
			\begin{tabular}{l l l l }
				\toprule [1 pt]
				Roll Angle $\phi$ & $\Vert \mathbf{v}_t \Vert$ (m/s) & $ \delta \mathbf{p}_{\rm RMS} $(cm) & $ \delta \boldsymbol{\theta}_{\rm RMS} (^\circ)$  \\
				\hline
				$0^\circ$  &  8 &  10.8 & 4.7   \\
				$20^\circ$ &  8 &  13.5 & 6.4 \\
				$40^\circ$ &  8 &  9.5 & 4.0 \\
				$20^\circ$ &  3 &  10.7 & 4.5 \\
				$20^\circ$ &  5 &  12.8 & 5.2 \\	
				$20^\circ$ &  10 & 9.5 & 4.0 \\
				
				\hline
				$0^\circ$ \& $20^\circ$ &  8 &  11.6 & 6.2 \\
				$20^\circ$ \& $-20^\circ$ &  8 & 10.0 & 5.5 \\
				$40^\circ$ \& $20^\circ$ &  8 &  12.0 & 6.6 \\
				\toprule [1 pt]
			\end{tabular}
		\end{threeparttable} 
		\label{tab_fly_thr_window}
		\vspace{-4mm}
	\end{table} 

	\begin{figure*}[th!] 
		\centering
		\includegraphics[width=1\linewidth]{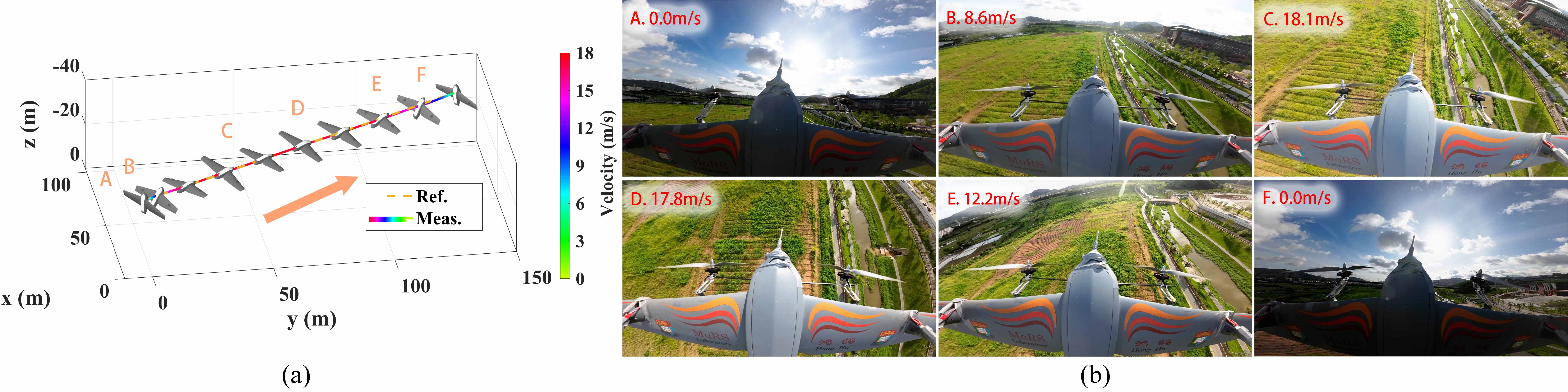} 
		\caption{Forward flight of the straight-line path in \SI{18}{m/s}:  (a) trajectory illustration, (b) images from the FPV camera. Labels A-F denote different flight phases of the vehicle: A. hovering, B. forward transition, C. and D. level flight, E. backward transition, F. hovering.} 
		\label{fig_forward_flight}
		\vspace{-4mm}
	\end{figure*} 
	
	\begin{figure*}[th!] 
		\centering
		\includegraphics[width=1\linewidth]{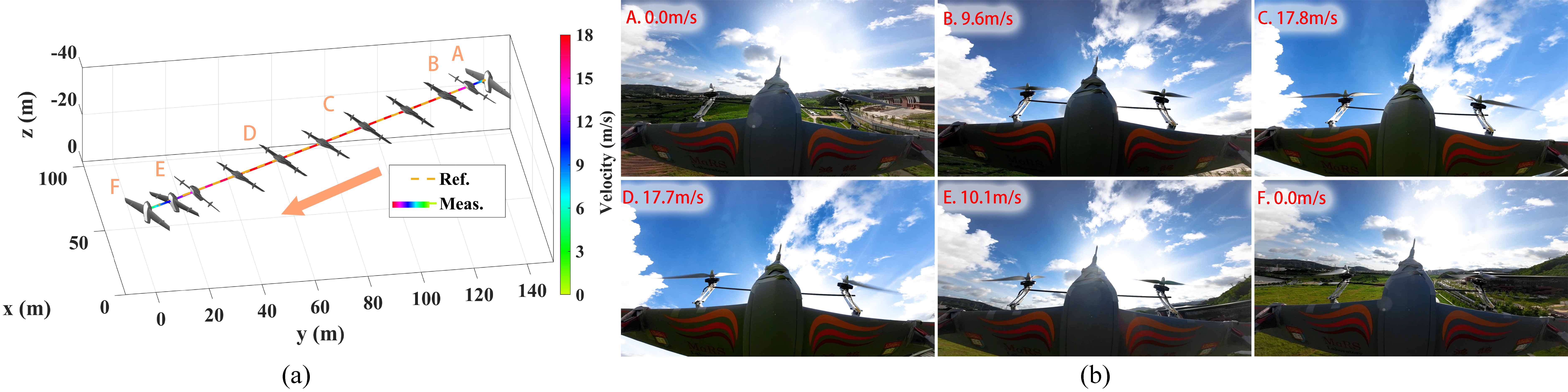} 
		\caption{Inverted flight of the straight-line path in \SI{18}{m/s}: (a) trajectory illustration, (b) images from the FPV camera. Labels A-F denote different flight phases of the vehicle: A. hovering, B. inverted forward transition, C. and D. inverted level flight, E. interted backward transition, F. hovering.} 
		\label{fig_inverted_fligt}
		\vspace{-3mm}
	\end{figure*}

	To provide more convincing results, we conduct two test groups of experiments demonstrating the flights through single and double windows, respectively. The first group consists of six different flight tests with a window roll angle  $\phi \in \{0^\circ, 20^\circ, 40^\circ\}$ and a traversing speed $\| \mathbf v \| \in \{3, 5, 8, 10\}$ m/s.  The second group consists of three $SE(3)$ flights with window angles combinations drawn from $\{-20^\circ, 0^\circ, 20^\circ, 40^\circ\}$ and a traversing speed of \SI{8}{m/s}. All nine experiments are successfully conducted with results summarized in Tab. \ref{tab_fly_thr_window}.  The first group results demonstrate a sufficiently high control accuracy to avoid collision (i.e., the maximum average position and attitude error are 13.5 cm and $6.4^\circ$, respectively). It also shows that the proposed trajectory generation in coordinated flight is applicable to low-speed $SE(3)$ flights (down to \SI{3}{m/s}). For the second group results, the pose tracking error slightly increases at the second window due to the dramatic attitude and velocity variations as mentioned before, but still small enough for the UAV to pass through the window. To sum up, the varioius agile flights through narrow windows demonstrate that the proposed trajectory generation and control framework is capable to execute accurate $SE(3)$ flights, which shows a promising application to  aggressive autonomous flight with obstacle avoidance in cluttered environments. Readers are encouraged to watch the accompanying videos for better visualization of the experiments.

	\subsection{Typical maneuvers in field environments}
        \label{sec_typical_maneuvers}
	In this task, we examine the effectiveness and performance of the proposed algorithms for typical maneuvers in field environments. We test a straight-line maneuver (including hovering, transition and level flight) and loiter flights with speed ranging from \SI{5}{m/s} to \SI{20}{m/s}, and make comparisons to conventional tail-sitter controllers (with details supplied later). We reserve the same parameters of the planner and controller, except decreasing the MPC position penalty (i.e., the first three diagonal elements of $\mathbf{Q}_k$) to $[1200, 1200, 1200]$ to  increase the robustness to uncertainties like unmeasured wind disturbance, and noisy  GPS measurement in outdoor environments.

	\subsubsection{Straight-line flight}~
	\label{sec_straight_line}

	Transition and level flights are two crucial maneuvers for tail-sitter UAVs and are commonly tested for tail-sitter controllers.  We demonstrate the proposed framework on these maneuvers via a forward flight trajectory, which involves three maneuvers: forward transition, level flight, and backward transition (see Fig. \ref{fig_forward_flight}\textcolor{red}{(a)}, and an inverted flight trajectory, which involves another three maneuvers: inverted forward transition, inverted level flight, and inverted backward transition (see Fig. \ref{fig_inverted_fligt}\textcolor{red}{(a)}.  We present the tracking performance on these trajectories with different level-flight speed ranging from \SI{5}{m/s} to \SI{20}{m/s}, and make comparisons against existing works in terms of  transition accuracy.  
	
	We design the forward and inverted flight trajectories along the same straight-line path, where the vehicle first flies forward along the path to a target position and then flies in an inverted pose along the same path back to the origin, as shown in Fig. \ref{fig_transition_18}. Both forward and inverted flight trajectories have the same level-flight phase, which is manually specified as a constant-velocity trajectory (speed ranges from \SI{5}{m/s} to \SI{20}{m/s}) lasting for 4-5 seconds. The trajectories from the initial hovering position to the constant-velocity trajectory and that from the constant-velocity trajectory to the target hovering position are designed by the proposed trajectory optimization method in (\ref{e_traj_opt}), for both forward and inverted trajectories. 
	
		\begin{figure*}[t!] 
		\centering
		\includegraphics[width=1\linewidth]{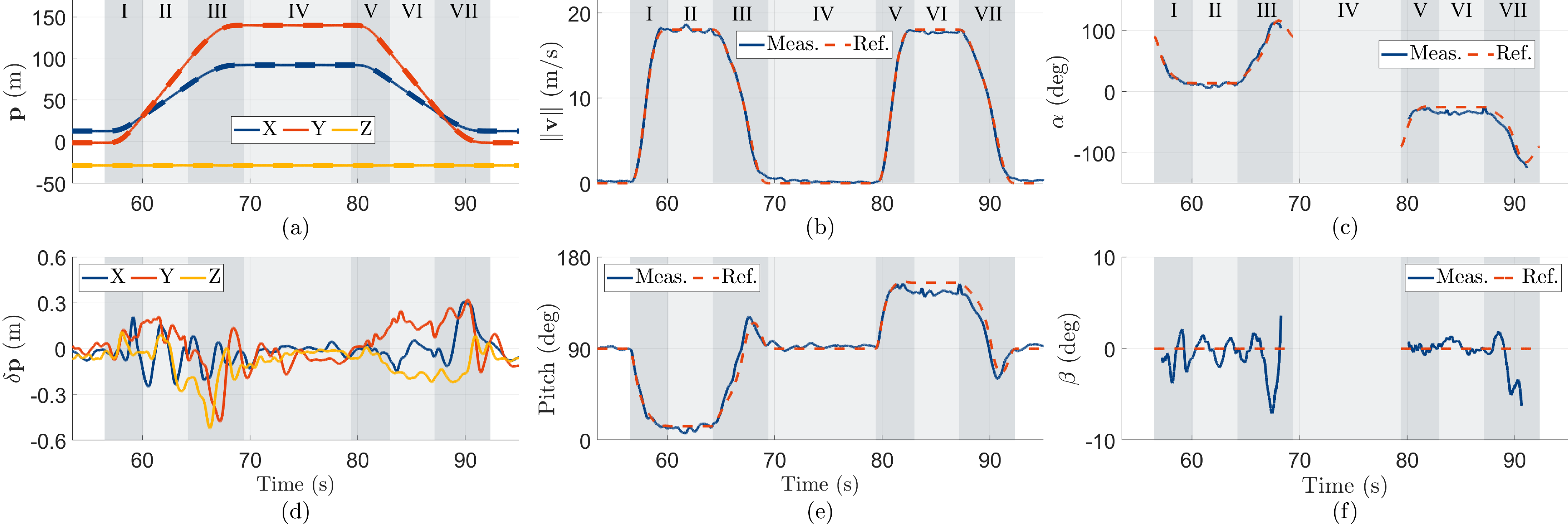} 
		\caption{Flight data of the straight-line flight test consisting of both forward and inverted flights: (a) position, (b) flight speed, (c) angle of attack, (d) position tracking errors, (e) pitch angle, (f) sideslip angle. Flight stages from I to VII divided by shaded areas indicate the I. forward transition, II. level flight, III. backward transition, IV. hovering, V. inverted forward transition, VI. inverted level flight and VII. inverted backward transition. For the angle of attack and sideslip angle, their measurements are displayed only when the airspeed exceeds \SI{2}{m/s} due to the unstable airspeed measurements at low speeds. In all subplots where applicable, the solid and dashed lines respectively denote the measurement and reference.} 
		\label{fig_transition_18}
	\end{figure*}

	\begin{figure*}[h!] 
		\centering
		\includegraphics[width=1\linewidth]{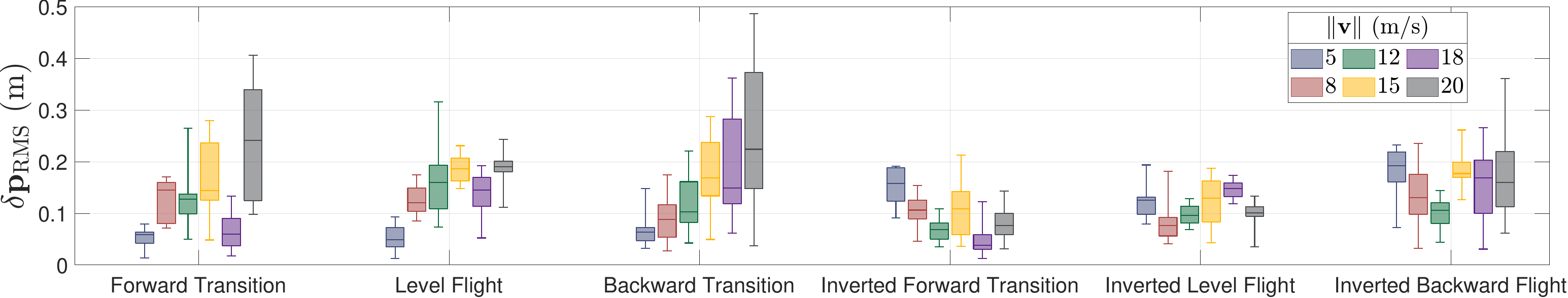} 
		\caption{The position tracking error in six different speeds when the tail-sitter flies in different phases of a straight-line path.} 
		\label{fig_tran_boxplot}
	\end{figure*}

	Fig. \ref{fig_forward_flight} and Fig. \ref{fig_inverted_fligt} show the 3-D trajectory and FPV images of the test with level-flight speed of \SI{18}{m/s}, and the corresponding flight data are detailed in Fig. \ref{fig_transition_18}. As shown in Fig. \ref{fig_transition_18}, the tail-sitter first performs a forward transition (phase I) from hovering to level flight with speed \SI{18}{m/s} (phase II) while the pitch angle decreases from $90^\circ$ to $13^\circ$. After flying \SI{112}{m} over 7.7 seconds, the vehicle performs a backward transition (phase III) to hovering (phase IV). Subsequently, the vehicle performs an inverted forward transition (phase V), where the pitch angle increases from $90^{\circ}$  to $146^\circ$, reaching the inverted level flight with speed of \SI{18}{m/s} (phase VI). Finally the vehicle performs an inverted backward transition (phase VII) to return to the initial hovering position. It is seen that the vehicle feedback trajectory of position, velocity and pitch angle tracks the reference state trajectory precisely throughout the flight. Fig. \ref{fig_transition_18}\textcolor{red}{(d)} shows the position tracking error. As can be seen, the overall tracking error is  \SI{0.13}{m} in average and  \SI{0.52}{m} at most, which is incredibly small considering that the flight speed is up to \SI{18}{m/s}, and the angle of attack varies over $230^{\circ}$ (see Fig. \ref{fig_transition_18}\textcolor{red}{(c)}). 
	
	To provide a more convincing result and demonstrate the effectiveness of the proposed framework in full-envelope flight, we conduct a group of straight-line flights with six different level-flight speeds of $\Vert \mathbf{v} \Vert \in \{5, 8, 12, 15, 18, 20\}$ m/s. The  position tracking error in each flight phase of each flight is statistically analyzed in Fig. \ref{fig_tran_boxplot}. As can be seen, the errors at all times in all 36 groups of data across different flight speeds or phases are less than \SI{0.5}{m}, showing that the proposed framework enables a tail-sitter to fly within the whole envelope in high accuracy. Notably, the tracking error during inverted flight is as low as \SI{0.2}{m}. The increased tracking accuracy in the inverted flight are due to better fitting of the aerodynamic coefficients in negative AoA regions. It is also noted that existing methods based on separated trajectory planners and controllers \cite{frank2007hover, oosedo2017optimal, lyu2017hierarchical, xu2019full} did not demonstrate such inverted flights, because the required AoA is out of the designed envelope. 
	
	 \begin{figure*}[th!] 
		\centering 
		\subfigure{   
			\includegraphics[width=1\linewidth]{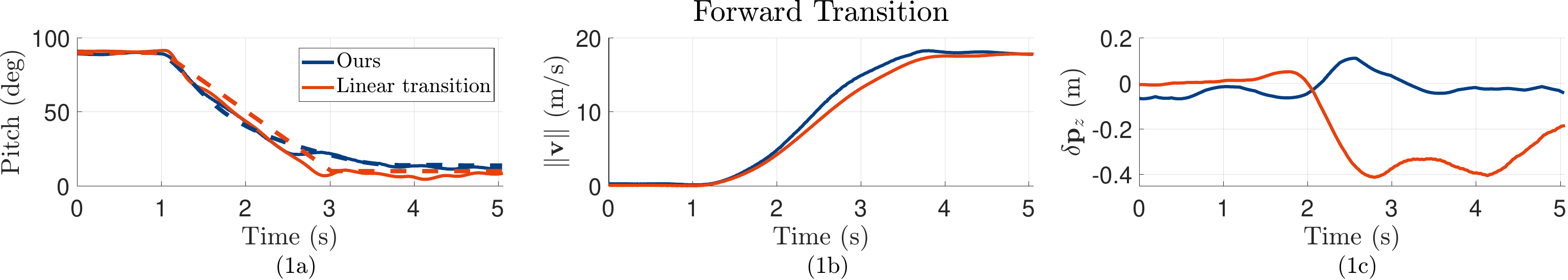} 
			\label{fig_forward_trans_compare}    
		}
		\subfigure{   
			\includegraphics[width=1\linewidth]{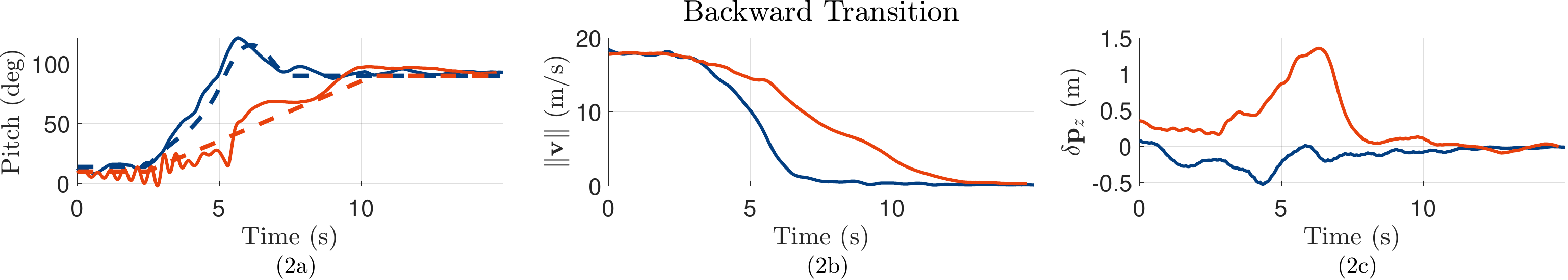}  
			\label{fig_backward_trans_compare}   
		}    
		\caption{Comparison on transition control performance between our presented approach and the linear transition method \citep{lyu2017hierarchical}. (1a)-(1c) are respectively the pitch angle, flight speed and altitude tracking error for the forward transition, and (2a)-(2c) are those for backward transition. The solid and dashed lines in (1a) and (2a) respectively denote the measurement and reference.}  
		\label{fig_linear_tran_comparison}
		\vspace{-3mm}
	\end{figure*}

	Moreover, we make a comparison on the transition accuracy with a traditional linear transition controller \citep{lyu2017hierarchical}, which is the same strategy used by the autopilot PX4. To ensure a fair comparison, we implement both  our MPC and the traditional controller with the same low-level angular velocity controller, on the same vehicle. In addition, the linear transition controller are tuned to the best extent. The cascaded attitude and altitude PID controllers of the linear transition controller are turned by Ziegler-Nichols method, while the linear Pitch reference is determined by a transition duration and angle span. The angle span indicates the Pitch change between hovering and level flight at cruise speed, and is obtained as $13^\circ - 90^\circ$ in the former straight-line flight experiment (see Fig 15(e)). The transition duration is initially set to the same transition time of our method but it failed the transition flight due to the too short transition time. Then, we gradually increase the transition duration  until successful flight is achieved.  We iteratively refine the above attitude and altitude controller, achieving comparable performance demonstrated in existing research \citep{oosedo2017optimal, lyu2017hierarchical, xu2019full}.
 
	 Because the linear transition controllers usually focus on pitch and altitude control, and have no position control in other directions, we focus on the comparison of longitudinal state variables only. Fig. \ref{fig_linear_tran_comparison}\textcolor{red}{(1a)-(1c)} shows the comparison in forward transition. Our method controls the pitch angle to decrease from $90^\circ$ to $13^\circ$ smoothly and speeds up from hovering to \SI{18}{m/s} in merely 3 seconds with altitude error peaking at \SI{0.11}{m}, while the linear method has good performance in pitch control and accelerating but the altitude drops \SI{0.41}{m}. Similarly, in backward transition shown in Fig. \ref{fig_linear_tran_comparison}\textcolor{red}{(2a)-(2c)}, our method tracks the reference pitch angle smoothly and has maximum altitude error of \SI{0.52}{m} only, while the linear method tracks the linear pitch trajectory with significant pitch fluctuations and has large altitude deviation of  \SI{1.35}{m}. It can be also noticed that our method pulls up the pitch angle over $120^\circ$ and then returns to $90^\circ$ for fast deceleration as shown in Fig. \ref{fig_linear_tran_comparison}\textcolor{red}{(2b)}, the resultant backward transition is 4 seconds shorter than the linear method. In this comparison, our model-based framework shows its advantages in tracking accuracy and flight aggressiveness, which outperforms the model-free linear transition control.

	\subsubsection{Loiter flight}~
	\begin{figure*}[h!] 
		\centering
		\includegraphics[width=1\linewidth]{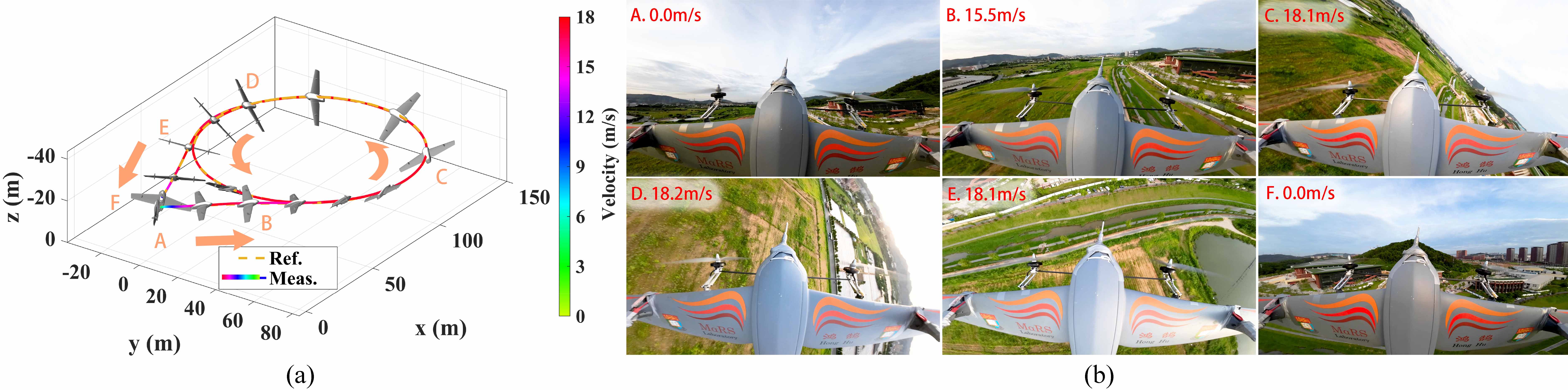} 
		\caption{Loiter flight with flight radius of 50m and velocity of \SI{18}{m/s}: (a) trajectory illustration, (b) images from the FPV camera. Labels A-F denote different states of the vehicle: A. hovering, B. banked forward transition, C and D. loiter flight, E. banked backward transition, F. hovering.} 
		\label{fig_horizontal_circle}
	\end{figure*}
	
	\begin{figure*}[h!] 
		\centering
		\includegraphics[width=1\linewidth]{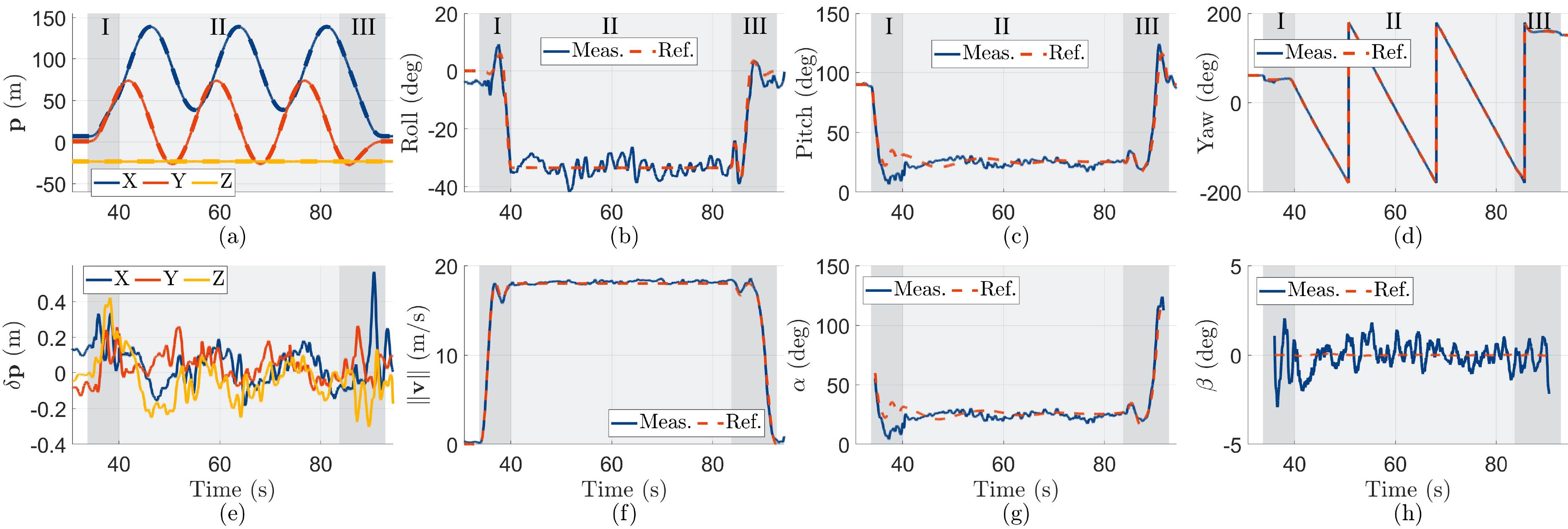} 
		\caption{Flight data of the loiter flight in \SI{18}{m/s}: (a) position, (b-d) attitude in Euler angles, (e) position tracking errors, (f) flight speed, (g) angle of attack, (h) sideslip angle. Flight stages from I to III divided by shaded areas indicate the banked forward transition, loiter, banked backward transition, respectively. For the angle of attack and sideslip angle, their measurements are displayed only when the airspeed exceeds \SI{2}{m/s} due to the unstable airspeed measurements at low speeds. In all subplots where applicable, the solid and dashed lines respectively denote the measurement and reference.} 
		\label{fig_circle_18}
	\end{figure*}

	\begin{figure}[h!] 
		\centering
		\includegraphics[width=1\linewidth]{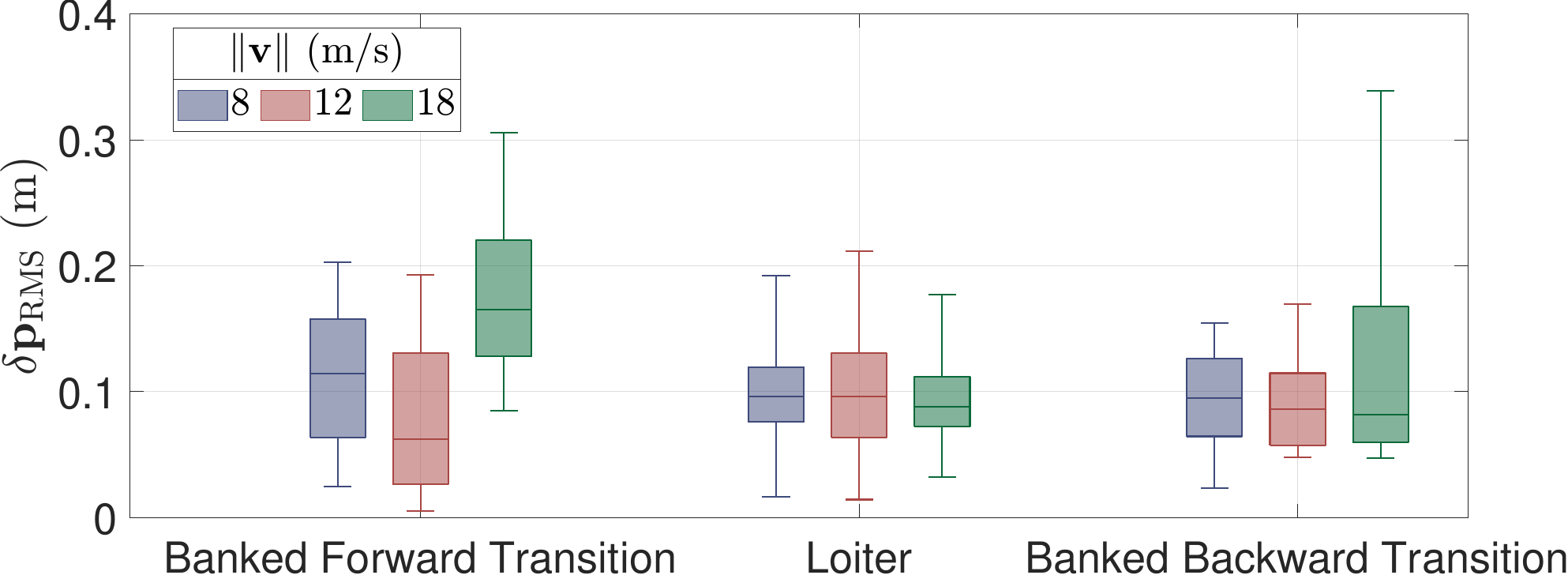} 
		\caption{Position tracking error in three different phases when the tail-sitter flies the loiter trajectory in different speeds.} 
		\label{fig_circle_boxplot}
	\end{figure}

	Loiter flight is another typical trajectory that validates the cruise performance of tail-sitters. As shown in Fig. \ref{fig_horizontal_circle}, the trajectory consists of three phases: banked forward transition from hovering to loiter, loiter flight in constant speed, and banked backward transition from loiter to hovering. 
    The loiter trajectory is designed in three steps. The constant-speed circular trajectory is first determined manually. Then the banked forward transition trajectory is optimized by (\ref{e_traj_opt}) with initial condition as the hovering state and terminal condition as the first point on the circular trajectory. Similarly, the banked backward transition trajectory is optimized by (\ref{e_traj_opt}) to perform a loiter-to-hovering maneuver.  
	
	\begin{figure*}[h!] 
		\centering
		\includegraphics[width=1\linewidth]{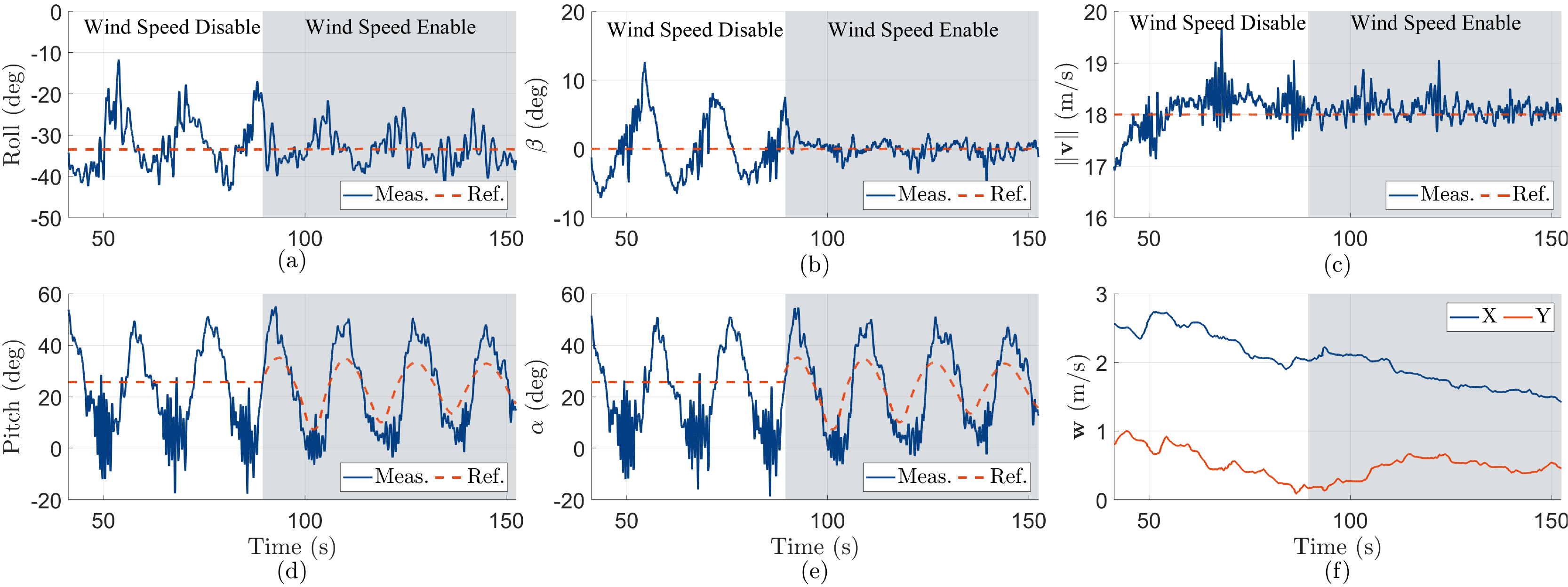} 
		\caption{Loiter flight test in \SI{18}{m/s} with and without wind speed compensation in the controller: (a) roll angle, (b) sideslip angle, (c) flight speed, (d) pitch angle, (e) angle of attack, (f) estimated wind speed. In all subplots, the white area denotes the flight when setting $\bar{\mathbf{w}} = \mathbf 0$ in the flatness function (Algorithm \ref{alg_proc_diff_flat}), while the shaded area denotes the duration when the online-estimated wind speed is used as the $\bar{\mathbf{w}}$ in the flatness transform. Note that both the measured and reference angle of attack and sideslip angle are computed based on $\bar{\mathbf w}$, regardless of the actually estimated wind speed. } 
		\label{fig_wind speed_comparison}
	\end{figure*}
	
	\begin{figure}[h!] 
		\centering
		\includegraphics[width=0.9\linewidth]{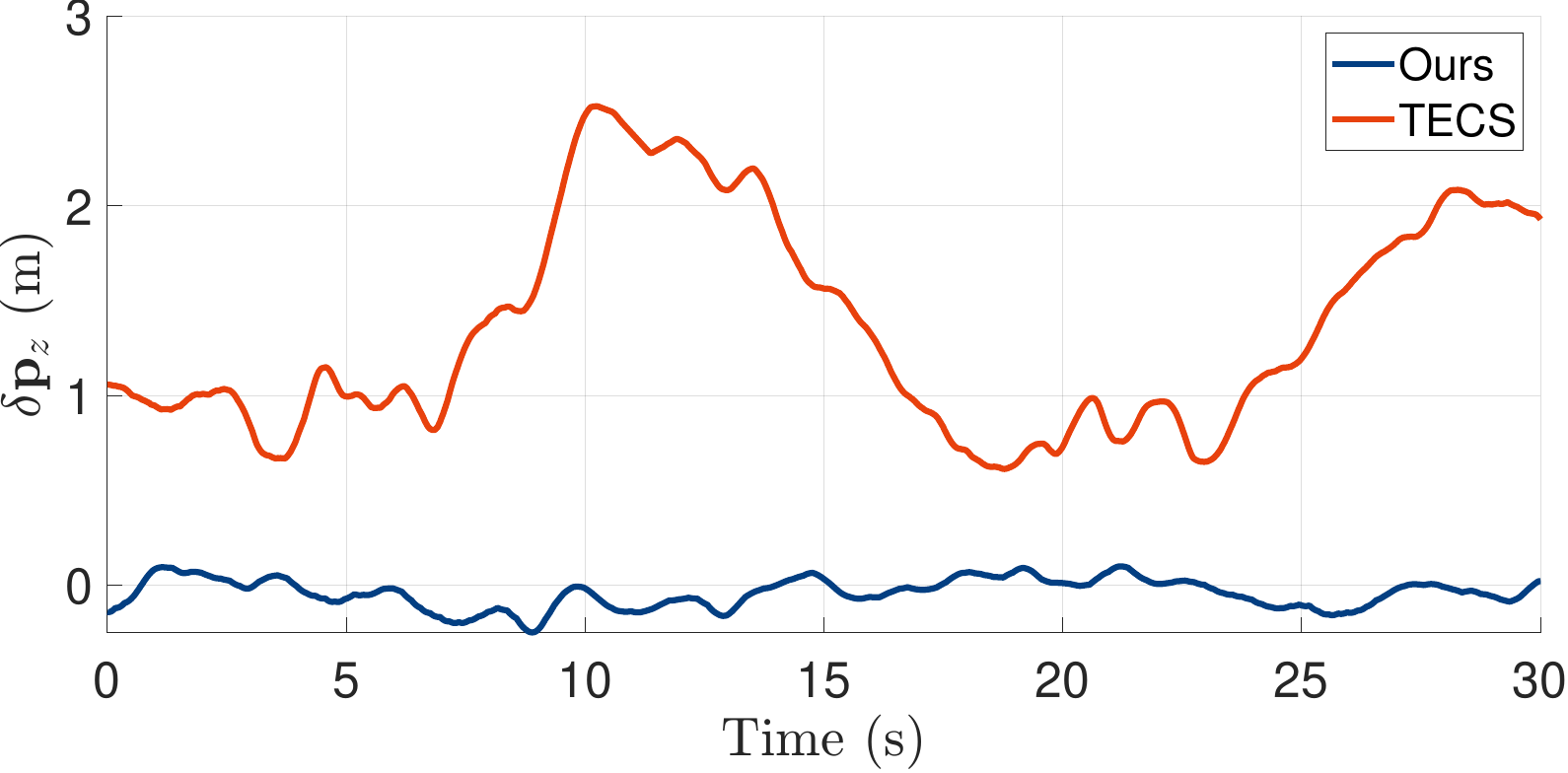} 
		\caption{Comparison on the altitude control performance between our presented approach and the total energy control system (TECS) in a \SI{18}{m/s} loiter flight.} 
		\label{fig_tecs_comparison}
	\end{figure}
		
	Fig. \ref{fig_horizontal_circle} and Fig. \ref{fig_circle_18} respectively show the trajectory and flight data in the loiter test with a flight radius of 50m and speed of \SI{18}{m/s}. As shown in Fig. \ref{fig_horizontal_circle} and Fig. \ref{fig_circle_18}\textcolor{red}{(b-c)}, after a while of stationary hovering, the tail-sitter first performs a coupled roll and pitch rotation to smoothly transition into the circular trajectory, and similarly transitions out of the circular trajectory with coupled roll and pitch rotations. Compared to traditional control methods \citep{verling2016full, lyu2017hierarchical} where a loiter trajectory is separated into straight-line transition followed by a bank turn in level flight, our maneuver is more elegant and time-saving due to less extra flight distance.  It is seen in Fig. \ref{fig_circle_18} that during the entire flight, the feedback of position, velocity and attitude tracks the reference closely. More specifically, Fig. \ref{fig_circle_18}\textcolor{red}{(e)} illustrates the position tracking error, which is less than \SI{0.26}{m} during the 45-second constant-speed loiter and slightly increases to  \SI{0.42}{m} and \SI{0.56}{m} in the two transition phases, respectively. Moreover, we conduct this test with different loiter speed $\Vert \mathbf{v} \Vert \in \{8, 12, 18\}$ m/s. The tracking error statics of each phase of the three tests are summarized in Fig. \ref{fig_circle_boxplot}. Banked transitions in the largest speed \SI{18}{m/s} have the largest worst-case tracking errors (i.e., \SI{0.31}{m} for the forward transition and \SI{0.33}{m} for the backward transition), while all loiter flights have similarly small errors less than \SI{0.21}{m}. The above experimental results demonstrate that the proposed trajectory generation and tracking control framework promises high-accuracy flights in real outdoor environments.

	In order to demonstrate the effectiveness and significance of wind speed compensation in the controller, we conduct a loiter fight in \SI{18}{m/s} with wind speed in the flatness transform enabled and disabled online. As shown in Fig. \ref{fig_wind speed_comparison}\textcolor{red}{(f)}, the wind speed is estimated during the entire flight test, but the control framework only compensates the wind speed after 89s, indicated by the shaded background. When the wind speed is not compensated, the reference pitch angle (and angle of attack) maintains at a constant value $25^\circ$ due to the constant loitering speed (see Fig. \ref{fig_wind speed_comparison}\textcolor{red}{(d, e)}). In contrast, the actual vehicle pitch angle climbs to about $50^\circ$ to increase the lift due to the smaller airspeed when following the wind, and drops to around $10^\circ$ to decrease the lift due to the larger airspeed when against the wind. Moreover, due to the uncompensated wind speed, the vehicle actually does not perform coordinated flight, causing a side force that is then compensated by the vehicle roll angles (see Fig. \ref{fig_wind speed_comparison}\textcolor{red}{(a)}). Furthermore, the uncompensated wind speed contributes to an extra disturbance as shown in (\ref{e_linear_error_sys}), which causes the control error of the measured sideslip angle (which is computed without considering the estimated wind velocity and should be equal to the reference sideslip angle) to fluctuate between $12.5^{\circ}$ and $-7^{\circ}$ (see Fig. \ref{fig_wind speed_comparison}\textcolor{red}{(b)}). On the other hand, when the estimated wind speed is used in the differential flatness transform for the calculation of the state-input trajectory and the subsequent trajectory tracking controller, the reference pitch angle is recalculated to fluctuate according to the wind speed, similarly the reference yaw angle is also adjusted to keep the sideslip angle at zero (i.e., ensuring the coordinated flight condition). As a result, the control errors in pitch, slideslip angle, and flight speed are significantly reduced.
	
	Finally, a comparison between our method and the total energy control system (TECS) is conducted on the loiter flight of \SI{18}{m/s}. As a mature technique for fixed-wing aircraft flight control, TECS also has been widely {used} in tail-sitter level flights. Due to the approximately linear aerodynamic force in low AoA, TECS employs a proportional and integral (PI) control scheme to regulate the airspeed and altitude by controlling the error of the total energy (i.e., the sum of potential and kinetic energy) to zero \citep{lambregts1983vertical}. We use the TECS implemented in the PX4 autopilot and tune its parameters to the best extent. Similar to the previous transition control comparison, both the proposed MPC and TECS utilize the same low-level controller for tracking the angular velocity command. The inner attitude loop, middle energy balance loop, and outer total energy loop of the TECS, which compute the commands for angular velocity, pitch angle, and thrust respectively, are tuned in sequence using the Ziegler-Nichols method. The resulting control performance achieved in the experiment is on par with those demonstrated in related works \citep{lyu2017hierarchical, gu2017development}. As shown in Fig. \ref{fig_tecs_comparison}, the vehicle altitude drops around \SI{1.5}{m} in average and \SI{2.5}{m} in maximum when using TECS for the loiter flight. In comparison, there is no obvious steady-state error for our approach and the maximum altitude error is less than \SI{0.25}{m}. The results are reasonable since TECS does not make use of any aerodynamic models of the vehicle, while our approach fully exploits these information.

	\subsection{Aerobatics}
	
	\begin{figure*}[h!] 
		\centering
		\includegraphics[width=1\textwidth]{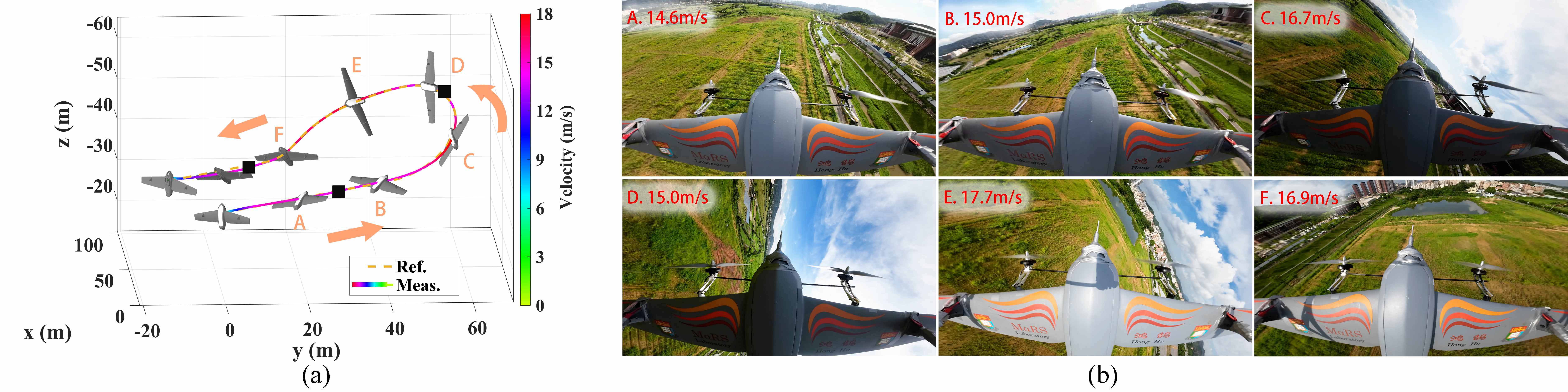} 
		\caption{Wingover: (a) illustration of the trajectory that is divided in two segments by three boundary points (black squares). Each segment is optimized by (\ref{e_traj_opt}), (b) images from the FPV camera. Labels A-F denote different flight phases of the vehicle.} 
		\label{fig_wingover}
		\vspace{5mm}
	\end{figure*}
	
	\begin{figure*}[h!] 
		\centering
		\includegraphics[width=1\linewidth]{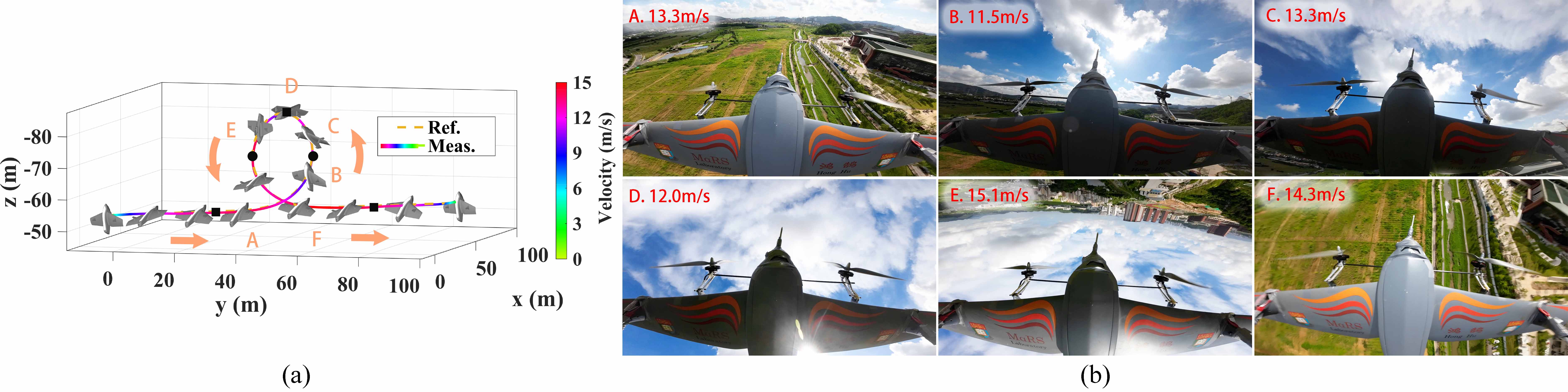} 
		\caption{Loop: (a) illustration of the trajectory, boundary points (black squares), and intermediate waypoints (black dots) that are constrained in (\ref{e_traj_opt}), (b) images from the FPV camera. Labels A-F denote different flight phases of the vehicle.} 
		\label{fig_loop}
		\vspace{5mm}
	\end{figure*}
	
	\begin{figure*}[h!] 
		\centering
		\includegraphics[width=1\linewidth]{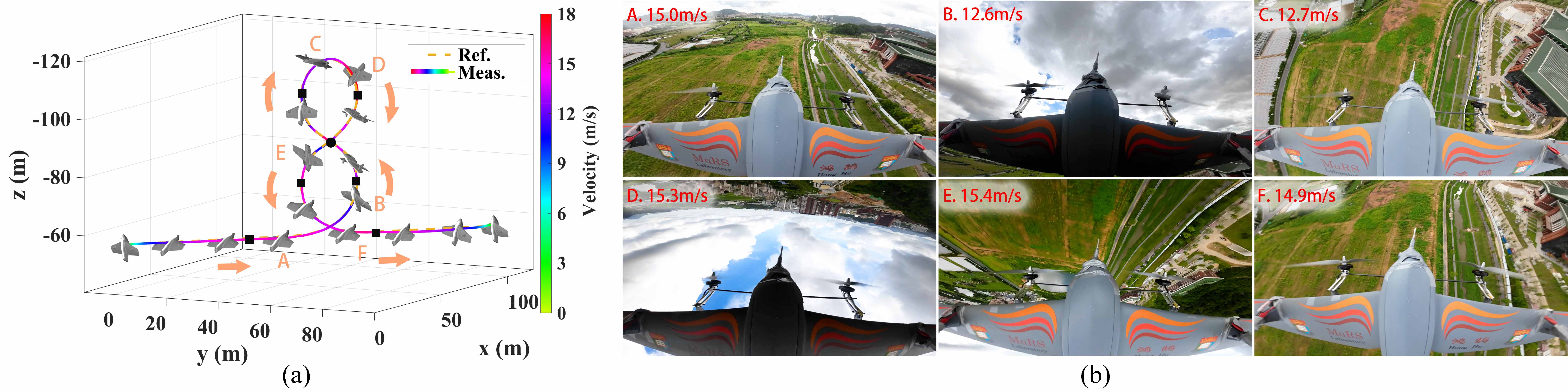} 
		\caption{Vertical Eight: (a) illustration of the trajectory, boundary points (black squares), and intermediate waypoints (black dots), (b) images from the FPV camera. Labels A-F denote different flight phases of the vehicle.} 
		\label{fig_vertical_eight}
		\vspace{5mm}
	\end{figure*}
	
	\begin{figure*}[h!] 
		\centering
		\includegraphics[width=1\linewidth]{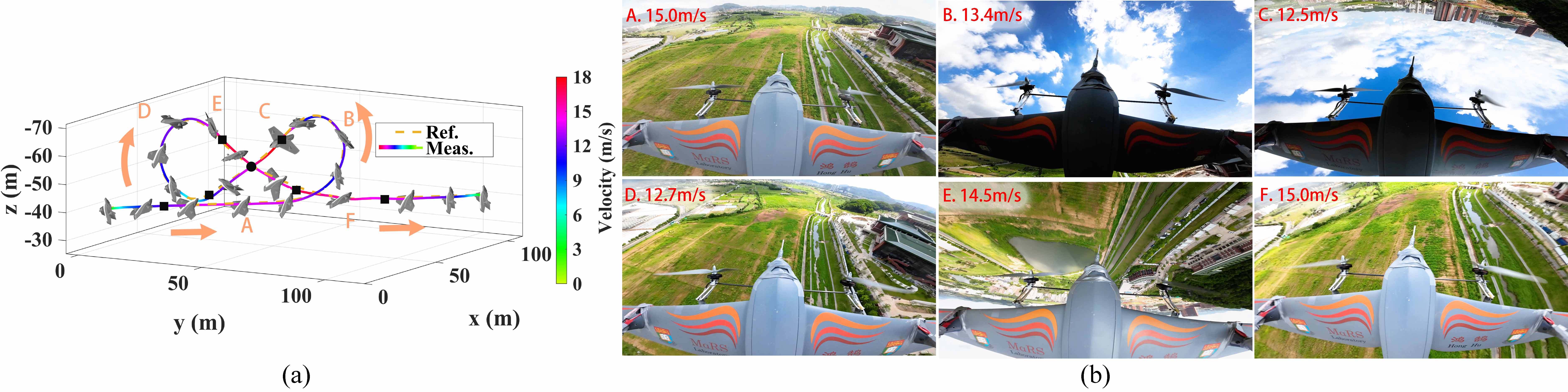} 
		\caption{Cuban Eight: (a) illustration of the trajectory, boundary points (black squares), and intermediate waypoints (black dots), (b) images from the FPV camera. Labels A-F denote different flight phases of the vehicle.} 
		\label{fig_cuban_eight}
		\vspace{5mm}
	\end{figure*}
	
	\begin{figure*}[h!] 
		\centering
		\includegraphics[width=0.8\linewidth]{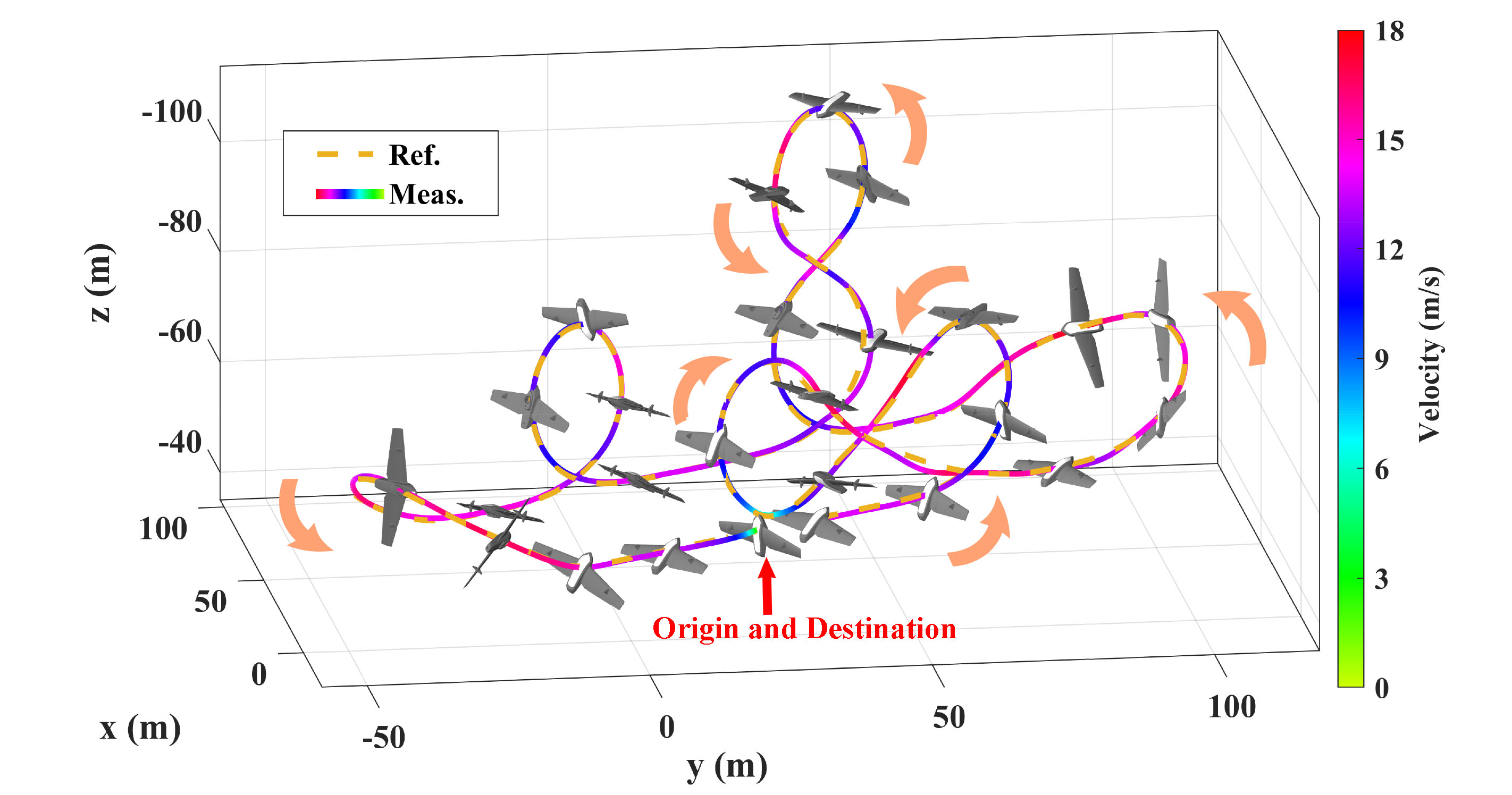} 
		\caption{Combo flight trajectory illustration.} 
		\label{fig_combo}
	\end{figure*}

	In this task, we push the tail-sitter to its physical limits to perform extremely aggressive aerobatics, which further demonstrates the effectiveness and robustness of our proposed methods. Our approach is the first to enable an autonomous tail-sitter to perform a series of aerobatic maneuvers with such agility in real outdoor environments. These maneuvers are highly challenging even for expert human pilots and are listed by increasing difficulty as follows:
	\begin{itemize}
		\item [1)] Wingover: the vehicle makes a $180^\circ$ turn in heading by executing a fast climb and turn, during which the wing swings over the top of the turn (i.e., the roll angle reaches $90^\circ$), as shown in Fig. \ref{fig_wingover}. 
		
		\item [2)] Loop: the vehicle enters a vertical circle and makes a $360^\circ$ flip in pitch angle, as shown in Fig. \ref{fig_loop}. 
		
		\item [3)] Vertical Eight: the vehicle performs a vertical figure-``8" trajectory with pitch angle pulled up and down over $180^\circ$, as shown in Fig. \ref{fig_vertical_eight}. 
		
		\item [4)] Cuban Eight: similarly to the Vertical Eight, the vehicle performs a ``$\infty$"-shape trajectory with pitch angle pulled up and down over $180^\circ$, as shown in Fig. \ref{fig_cuban_eight}. 
		
		\item [5)] Combo: the vehicle starts with Cuban Eight, followed by  Wingover, Vertical Eight, Loop, and ends with another Wingover to fly back to the origin, as shown in Fig. \ref{fig_combo}. The entire maneuver is executed consecutively without any breaks.		
	\end{itemize}
 
 	 As shown in Fig. \ref{fig_wingover}-\ref{fig_cuban_eight}, to specify the shape of the trajectory and the vehicle pose at certain position on the trajectory, we separate the entire trajectory by multiple pieces by boundary points (i.e., the black squares). At the boundary points, the full vehicle states (i.e., position, velocity, and attitude) are specified and transformed to trajectory boundary conditions  $\mathbf p^{(0:3)}$. With these boundary conditions, trajectories within two consecutive boundary points are optimized by our trajectory optimization framework (\ref{e_traj_opt}). To further specify the shape of each trajectory segment, we specify some waypoints (i.e., the black dots) that the trajectory must pass through, which is naturally supported by the optimization framework in (\ref{e_traj_opt}). 
   All the trajectories begins with a forward transition (i.e., the origin to the first black square) and ends with a backward transition to hovering (i.e., the last black square to the destination).  Taking the Wingover in Fig. \ref{fig_wingover}\textcolor{red}{(a)} as example, the trajectory consists of four segments: forward transition, climbing up with $90^\circ$ rotation in both roll and yaw, diving down with reverse heading, and backward transition. The design of the Loop trajectory in Fig. \ref{fig_loop}\textcolor{red}{(a)} is similar, except that the top boundary point is designed to drive the vehicle upside down (i.e., $-180^\circ$ in  pitch angle) and further inserting two waypoints to guarantee the shape of Loop. The Vertical Eight and Cuban Eight trajectories are generated by connecting two Loop trajectories. The boundary points of the connecting trajectories are obtained from the original Loop trajectories and a waypoint in the middle is used to serve the intersection point of the two connecting trajectories.   

    For all aerobatics above, we use the same parameters of the planner and controller as in the indoor $SE(3)$ flights and outdoor typical flights (i.e., Section \ref{sec_se3_flight} and \ref{sec_typical_maneuvers}, respectively), except further decreasing the MPC position penalty (i.e., the first three diagonal elements of $\mathbf{Q}_k$) to $[900, 900, 900]$ to  increase the system robustness in consideration of the highly aggressive maneuvers being executed.
    	
    \begin{figure*}[h!] 
    	\centering
    	\includegraphics[width=1\linewidth]{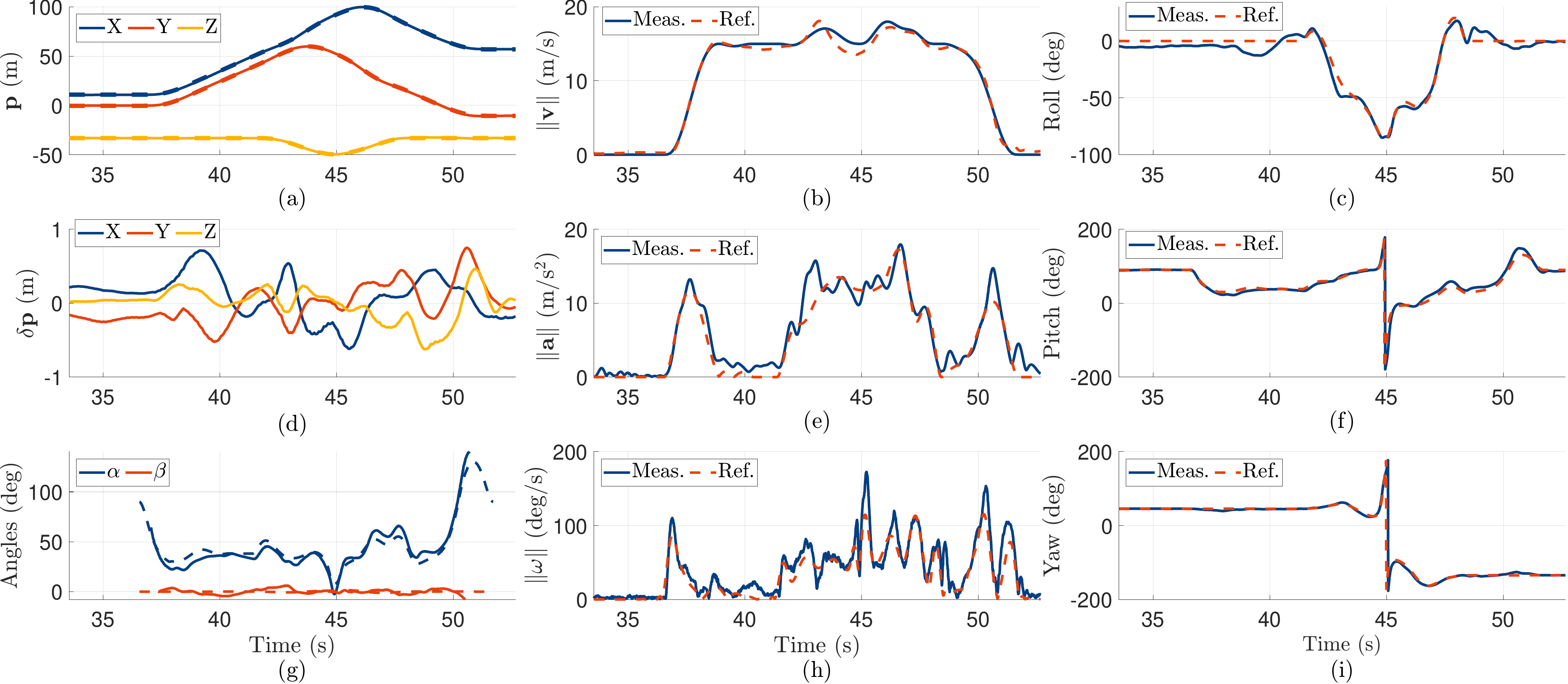} 
    	\caption{Flight results of the aerobatic maneuver Wingover: (a) position, (b) flight speed, (c),(f) and (i) attitude in Euler angles, (d) position tracking errors, (e) acceleration. (g) angle of attack and sideslip angle, (h) angular velocity. For the angle of attack and sideslip angle, their measurements are displayed only when the airspeed exceeds \SI{2}{m/s} due to the unstable airspeed measurements at low speeds. In all subplots, the solid and dashed lines respectively denote the measurement and reference. Note that the ZXY Euler angle representation incur singularity when roll angle reaches $90^\circ$ at \SI{45}{s}.} 
    	\label{fig_wingover_plot}
    \end{figure*}
    
    \begin{figure*}[h!] 
    	\centering
    	\includegraphics[width=1\linewidth]{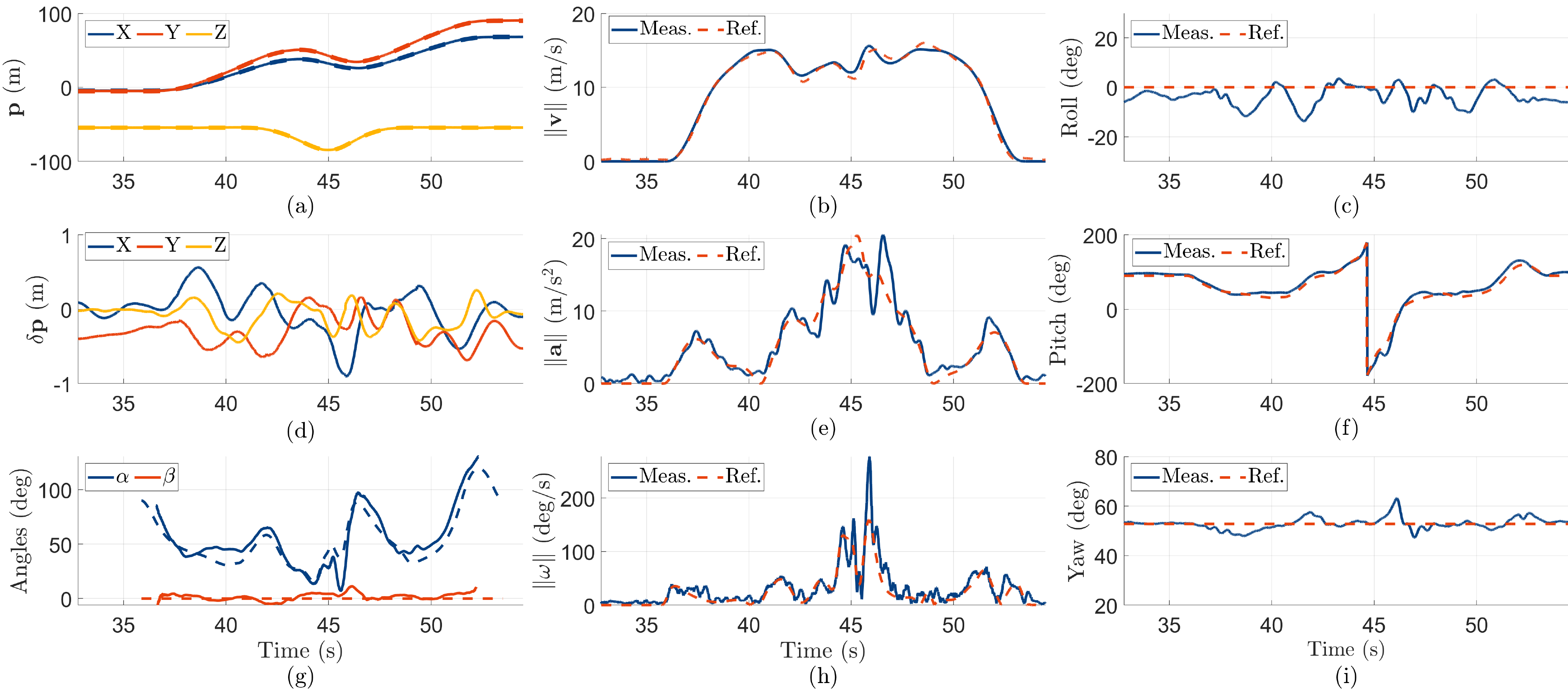} 
    	\caption{Flight results of the aerobatic maneuver Loop: (a) position, (b) flight speed, (c),(f) and (i) attitude Euler angles, (d) position tracking errors, (e) acceleration. (g) angle of attack and sideslip angle, (h) angular velocity. For the angle of attack and sideslip angle, their measurements are displayed only when the airspeed exceeds \SI{2}{m/s} due to the unstable airspeed measurements at low speeds. In all subplot, the solid and dashed lines respectively denote the measurement and reference.} 
    	\label{fig_loop_plot}
    \end{figure*}
    
    \begin{figure*}[h!] 
    	\centering
    	\includegraphics[width=1\linewidth]{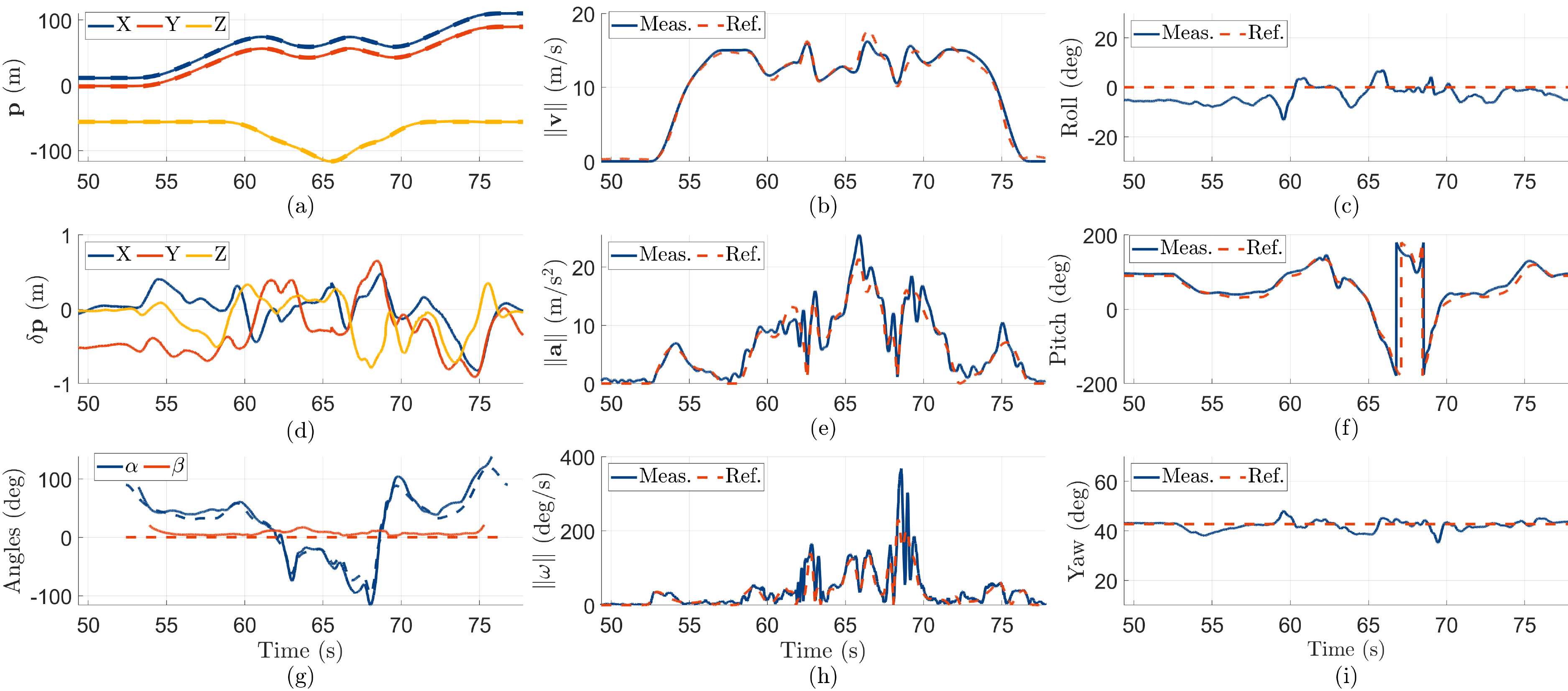} 
    	\caption{Flight results of the aerobatic maneuver Vertical Eight: (a) position, (b) flight speed, (c),(f) and (i) attitude Euler angles, (d) position tracking errors, (e) acceleration. (g) angle of attack and sideslip angle, (h) angular velocity. For the angle of attack and sideslip angle, their measurements are displayed only when the airspeed exceeds \SI{2}{m/s} due to the unstable airspeed measurements at low speeds. In all subplot, the solid and dashed lines respectively denote the measurement and reference.} 
    	\label{fig_vertical_eight_plot}
    \end{figure*}
    
    \begin{figure*}[h!] 
    	\centering
    	\includegraphics[width=1\linewidth]{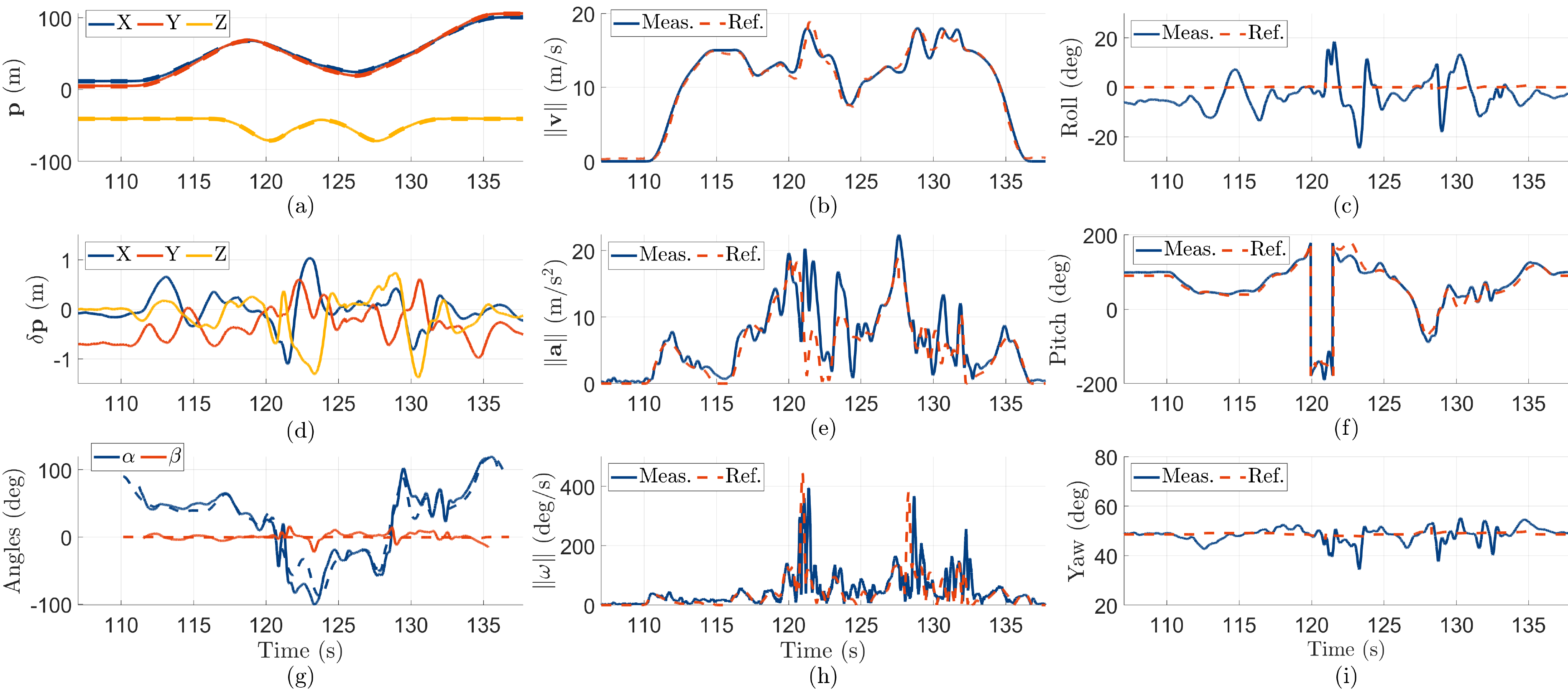} 
    	\caption{Flight results of the aerobatic maneuver Cuban Eight: (a) position, (b) flight speed, (c),(f) and (i) attitude Euler angles, (d) position tracking errors, (e) acceleration. (g) angle of attack and sideslip angle, (h) angular velocity. For the angle of attack and sideslip angle, their measurements are displayed only when the airspeed exceeds \SI{2}{m/s} due to the unstable airspeed measurements at low speeds. In all subplot, the solid and dashed lines respectively denote the measurement and reference.} 
    	\label{fig_cuban_eight_plot}
    \end{figure*}
    
    \begin{figure*}[h!] 
    	\centering
    	\includegraphics[width=1\linewidth]{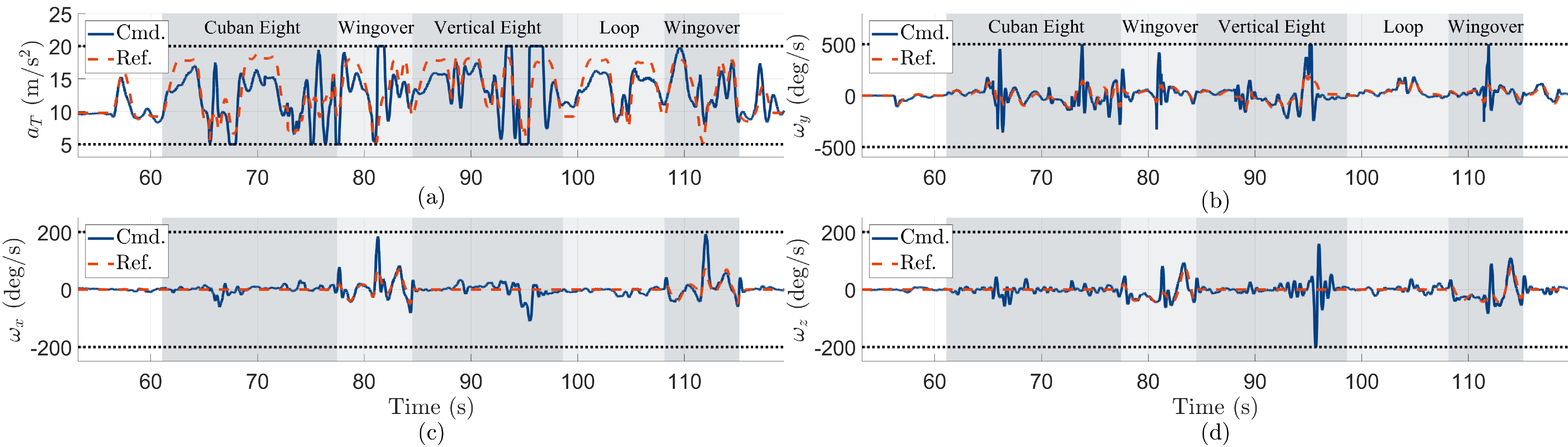} 
    	\caption{Control efforts of the aerobatic maneuver Combo. The shaded areas indicate the comprising maneuvers of Cuban Eight, Wingover, Vertical Eight, Loop and Wingover. The dotted lines denote input constraints in the MPC optimization (\ref{e_error_state_mpc}).} 
    	\label{fig_combo_ctrl}
    \end{figure*}
    
    \begin{figure*}[h!] 
    	\centering
    	\includegraphics[width=1\linewidth]{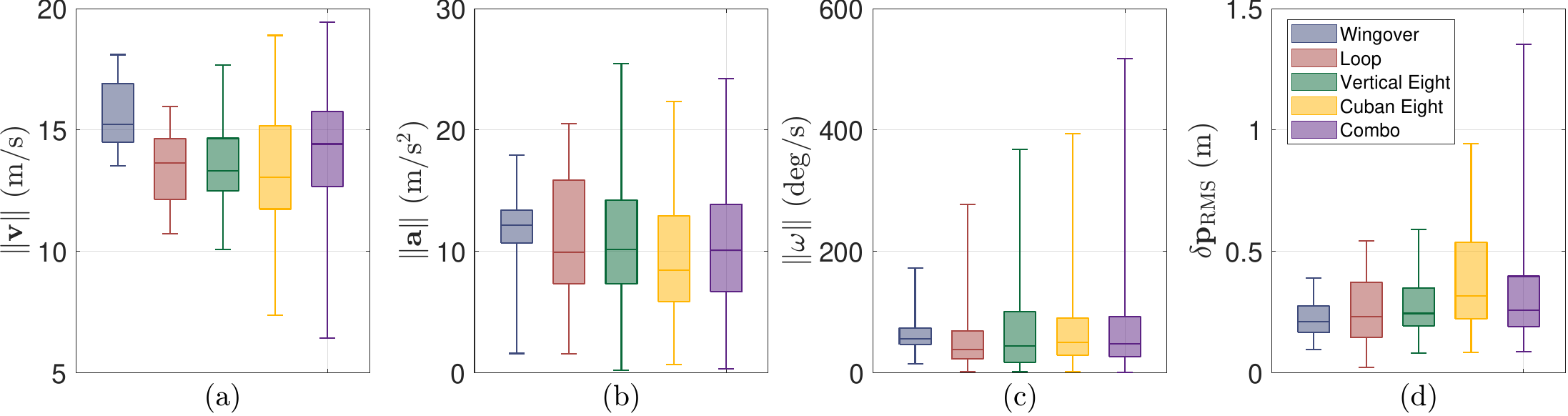} 
    	\caption{The aggressiveness and control performance of five aerobatic maneuvers in the Combo flight. (a-d) show the norm of velocity, acceleration, angular velocity and position control error, respectively.} 
    	\label{fig_aerobatics_box}
    \end{figure*}
    
    \begin{figure}[h!] 
    	\centering
    	\includegraphics[width=1.0\linewidth]{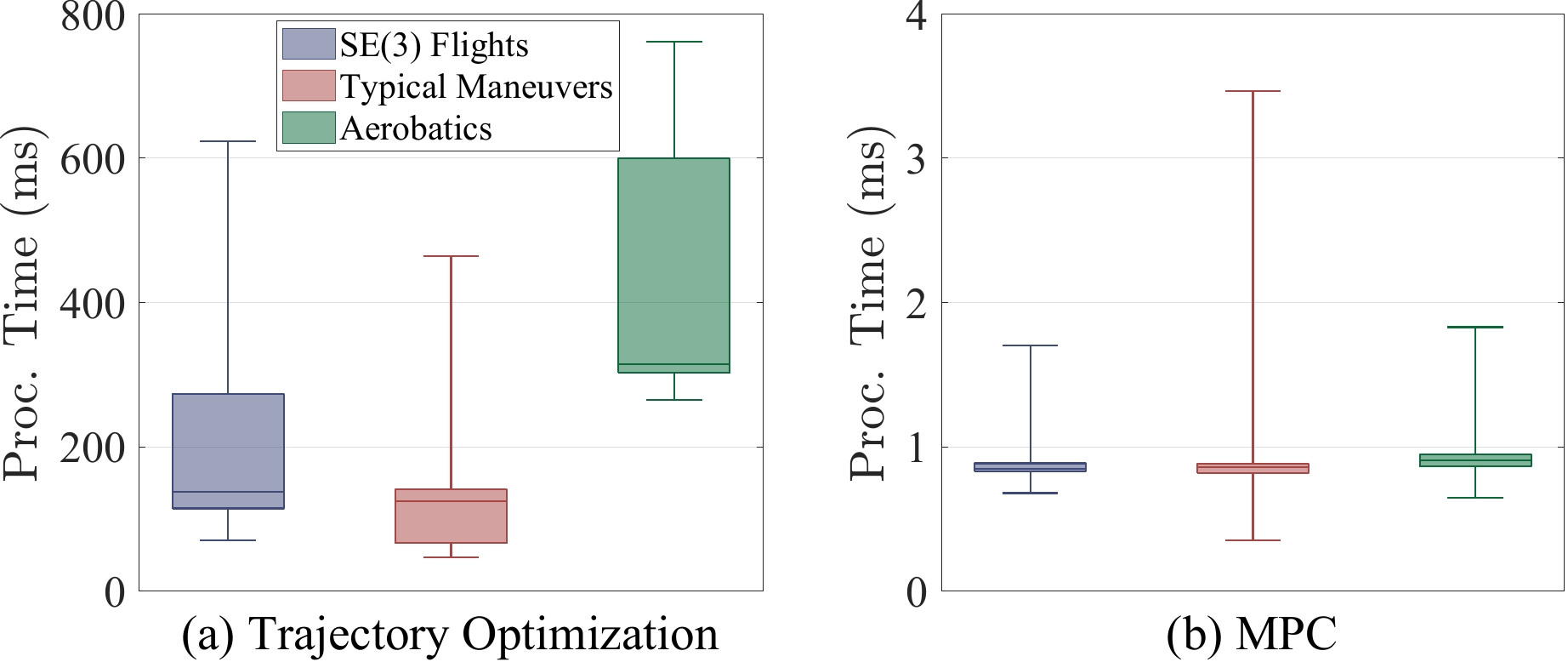} 
    	\caption{Time consumption of (a) solving the proposed trajectory optimization to generating a trajectory segment,  (b) solving the proposed MPC at each control step.}
    	\label{fig_mpc_proc_time}
    \end{figure}
	
	Fig. \ref{fig_wingover_plot} details the flight data of the Wingover. The vehicle first transits from hovering to level flight with speed of \SI{15}{m/s}, then performs the Wingover maneuver in 14-\SI{18}{m/s} and finally ends with a backward transition to hovering. The vehicle climbs \SI{16.5}{m} at the top and achieves the specified $90^\circ$ roll and yaw angle at \SI{45}{s}. Note that the ZXY Euler angle incurs singularity in the visualization, but our global on-manifold MPC has no such singularity as shown in the FPV image in Fig. \ref{fig_wingover}\textcolor{red}{(b)D}. Throughout the flight, the vehicle tracks all of the state trajectories closely: the position error is less than \SI{0.75}{m} in all time and the sideslip angle is well stabilized around zero. This tracking accuracy is not trivial for outdoor UAV aerobatics with such large large span of angle of attack (up to $130^\circ$), acceleration (up to \SI{18}{m/s^2}), and angular velocity (up to \SI{175}{deg/s}). 
		
	The flight results of the Loop is shown in Fig. \ref{fig_loop_plot}. The vehicle transits to \SI{15}{m/s} and successfully finishes a Loop with radius of around \SI{15}{m} in \SI{10}{s}. It is seen that the pitch angle rises to $180^\circ$ at \SI{44.6}{s}, when the vehicle is totally upside-down at the top of the Loop as designed,  which is also shown in Fig. \ref{fig_loop}\textcolor{red}{(b)D}. The vehicle also tracks  all of the state trajectories closely in the coordinated flight condition (i.e., the sideslip angle is shown around zero). The position error also remains below \SI{1}{m} in all directions, even though the maximum acceleration and angular velocity increase to \SI{25.5}{m/s^2} and  \SI{370}{deg/s}, respectively.
	
	As shown in Fig. \ref{fig_vertical_eight_plot}, the vehicle finishes a more aggressive aerobatic maneuver of Vertical Eight also in high tracking accuracy. From \SI{58}{s} in level flight, the vehicle begins to pull up the pitch angle to $145^\circ$ and quickly lowers it to zero at \SI{65}{s}, meanwhile the vehicle simultaneously gains \SI{60}{m} altitude by following a ``S"-shape trajectory (i.e., position A-B-C in Fig. \ref{fig_vertical_eight}\textcolor{red}{(a)}). After that, the vehicle flies another ``S"-shape trajectory to decrease to the original altitude when the pitch angle continues to decrease to $-260^\circ$ (i.e, nearly free falling as shown in Fig. \ref{fig_vertical_eight}\textcolor{red}{(b)E}) and quickly increases to $35^\circ$ to perform a \SI{15}{m/s} level flight again. It is seen that the angle of attack ranges from $-115^\circ$ to $120^\circ$, the largest span among all the demonstrated aerobatics. Moreover, the acceleration and angular velocity respectively peak at \SI{25.5}{m/s^2} and \SI{400}{deg/s}. Despite such large span of angle of attack and high acceleration and angular velocity, the overall position error still remains less than \SI{1}{m}.
	
	Similarly, the vehicle executes the Cuban Eight maneuver in high tracking performance despite the extremely high aggressiveness. The vehicle tracks the $``\infty"$-shape with a width of \SI{65}{m}, a height of \SI{30}{m}, and a time duration of \SI{15}{s}. The pitch angle increases from $36^\circ$ in level flight to $225^\circ$ at position C in Fig \ref{fig_cuban_eight}\textcolor{red}{(a)}, then it decreases to $-88^\circ$ at position E and recovers to $36^\circ$ at position F for level flight. The resulting span of angle of attack is about $220^\circ$. The acceleration and angular velocity peaks at \SI{22}{m/s^2} and \SI{400}{deg/s}, respectively. The maximum position error slightly rises to \SI{1.38}{m} due to the large control actuation but the overall tracking performance for the other state trajectories are still as good as other aerobatic maneuvers.
	 
	Moreover, we demonstrate a Combo trajectory by connecting the above aerobatic maneuvers in sequence, as shown in Fig. \ref{fig_combo}. The vehicle performs the Combo maneuver which requires extremely large control actuation over the entire \SI{62}{s} flight. In Fig. \ref{fig_combo_ctrl}, it is seen that the thrust acceleration and angular velocity commands computed by the MPC frequently touch their limits, but the controller still manages to stabilize the vehicle under such control saturation.
	
	The trajectory aggressiveness and tracking accuracy of the five aerobatic maneuvers in the Combo flight are statistically analyzed in Fig. \ref{fig_aerobatics_box}. The maximum velocity, acceleration and angular velocity reach \SI{19.4}{m/s}, \SI{25.5}{m/s} and \SI{520}{deg/s}, respectively.  Still, the proposed global controller shows a remarkable tracking performance that the average position tracking error is \SI{0.33}{m} and largest position error is only \SI{1.35}{m}. Readers are highly recommended to watch the accompanying videos for better visualization of the experiments.

    \subsection{Time consumption}
    The statistical time consumption of the trajectory planner and MPC in all of the above flight tests including $SE(3)$ flights, typical maneuvers, and aerobatic maneuvers is summarized in Fig. \ref{fig_mpc_proc_time}. For the trajectory generation in (\ref{e_traj_opt}) which runs offline, the computation time is about 45-\SI{750}{ms} to generate one trajectory segment with a length of about 2.5-\SI{50}{m}. The average and maximum time consumption to solve the MPC problem in (\ref{e_error_state_mpc}) in total is \SI{0.84}{ms} and \SI{3.46}{ms}, respectively, showing a high computational efficiency ensuring online implementation at \SI{100}{Hz}.

	\section{Conclusion}
	\label{sec_diss_and_con}
	In this section, we discuss the limitation and extension of the proposed framework, and then draw the conclusion.
	
	\subsection{Limitation}
	Our proposed framework is a model-based approach. Higher tracking accuracy requires a more precise dynamic model, especially the aerodynamic model. However, identifying a high-fidelity aerodynamic model generally requires high-cost and time-consuming wind tunnel tests. The cost and time escalate for tail-sitter UAVs where the envelope of angle of attack is large. In this paper, we leveraged the wind tunnel test data in \citep{lyu2018simulation}. For general tail-sitter UAVs, such aerodynamic model could be identified from onboard sensor data collected in real flights, which would be a promising future research to pursue. 
    
    Another limitation lies in the robustness and computation efficiency of the trajectory planner. In this paper, we adopted the MINCO trajectory optimization framework \citep{wang2022geometrically}, which parameterizes the trajectory by a multi-stage polynomial and penalizes the constraints in the objective function as soft constraints. Softly penalizing the constraints in objective functions could reduce the optimization time by eliminating the hard constraints. However, due to the extremely nonlinear objective function, the solver could easily converge to local minimum violating the constraints. This phenomenon occasionally occurred in the planning of the outdoor aerobatic trajectories when the waypoints locations are poorly specified. Moreover, the optimization time is still quite long, 40-\SI{750}{ms}, preventing it from real-time implementation on current tail-sitter onboard computing devices.

	\subsection{Extension}
	Firstly, the proposed trajectory optimization could potentially be solved more efficiently by leveraging state-of-the-art nonlinear optimization techniques (e.g., \cite{schulman2014motion, gill2005snopt}), the availability of higher-performance onboard computing devices, and the parallelization of the optimization based on Graphic Processing Units (GPUs). With an efficient solution, the proposed trajectory generation could serve as a reliable back-end planner for on-line trajectory planning. Equipped with onboard sensors such as cameras and lidars, and the corresponding front-end corridor generation techniques (e.g., \cite{liu2017planning, gao2019flying}), the tail-sitter could perform autonomous obstacle avoidance in cluttered environments.
	
	Secondly, the tracking accuracy can be further improved by augmenting a low-level controller to the thrust acceleration $a_T$. In the present implementation, we directly mapped the thrust acceleration command $a_T$ to the collective throttle of the four motors. However, the actual propeller thrust is also affected by various other factors, such as the propeller inflow \citep{brandt2011propeller, gill2017propeller} and motor internal dynamics. These factors have caused significant errors between the actual and commanded thrust acceleration as shown in our experiment results. This issue could be mitigated by tracking the thrust acceleration command $a_T$ with a low-level controller based on accelerometer measurements. 
	
	Thirdly, other than the model predictive controller in the present implementation, the flatness function provides a possibility to design a more light-weight cascaded PID controller that runs on low-cost micro processors. The cascaded control architecture could be similar to that of a multicopter: an outer-loop position controller first computes the desired acceleration, then our differential flatness function maps the desired acceleration to the desired attitude and thrust, finally the attitude is tracked by an inner-loop attitude controller. Such a cascaded control structure is also used in existing works \citep{ritz2017global, cheng2022transition}, but based on an over-simplified aerodynamic model. 
	
	Finally, the proposed framework can be extended to other configurations of tail-sitter UAVs, such as the single-propeller configuration \citep{frank2007hover, wang2017design, de2018design} and the shoulder-mounted twin-engine configuration \citep{bapst2015design, ritz2017global, sun2018design}. All of he trajectory generation, flatness transform and global control for the high-level system can be directly applied to the other configurations, while the low-level  controller could be re-designed according to the specific vehicle dynamic parameters and actuator performances. 

	\subsection{Conclusion}
	In this paper, we proposed a trajectory generation and global tracking controller for aggressive agile tail-sitter flights. The foundation of the framework is the differential flatness property that is proved in coordinated flights. The singularity conditions occurred in the flatness function were fully investigated and resolved in the framework. Based on these theoretical results, we developed a trajectory optimization framework for trajectory generation and a model predictive controller for trajectory tracking. The entire approach is tested on a quadrotor tail-sitter prototype in extensive real-world flights. Notably, we demonstrated agile $SE(3)$ flights in indoor environments and aerobatic maneuvers in windy outdoor environments, which were rarely shown in any prior literature works. Extensive flight tests on typical maneuvers of transition, level flight and loiter, have also shown a superior tracking accuracy compared to existing methods. 
	
	\section*{Acknowledgment}
	This work was supported in part by Hong Kong RGC ECS under grant 27202219 and in part by DJI donation. The authors appreciate Dr. Wei Xu and Dr. Haowei Gu for the help on the initial set up of the prototype, and Huirong Cheng for discussion. The author especially thank Dr. Ximin Lyu of Sun Yat-sen University for providing the space and equipment for conducting the indoor experiments. 
	
	\bibliographystyle{SageH}
	\bibliography{reference}
	
	\begin{appendices}
		\section{Proof of theorem \ref{theorem_rank_pfpu}}
		\label{app_rank_pfpu}
		Given the vehicle dynamics in (\ref{e_vehicle_dyn}), the rank of system dynamics derivative w.r.t input can be given by eliminating unrelated items:
        \begin{equation}
		\begin{aligned}
			\text{\rm rank} \left(\frac{\partial \mathbf f(\mathbf x_{\rm full}, \mathbf u_{\rm full})}{\partial \mathbf u_{\rm full}} \right) &= \text{\rm rank} \left( \frac{\partial \left(\dot{\mathbf v}, \dot{\boldsymbol{\omega}}\right)}{\partial (a_T, \boldsymbol{\tau})}\right)\\  &= \text{\rm rank} \left(\begin{bmatrix}
				\frac{\partial \dot{\mathbf v}}{\partial a_T} & \frac{\partial \dot{\mathbf v}}{\partial \boldsymbol{\tau}} \\
				\frac{\partial \dot{\boldsymbol{\omega}}}{\partial a_T} & \frac{\partial \dot{\boldsymbol{\omega}}}{\partial \boldsymbol{\tau}}
			\end{bmatrix} \right)
		\label{e_rank_pfpu_}
		\end{aligned}
        \end{equation}
		where the following elements can be computed directly from the system dynamics in (\ref{e_vehicle_dyn}):
		\begin{align}
			\frac{\partial \dot{\mathbf v}}{\partial a_T} = \mathbf x_b, \quad \frac{\partial \dot{\mathbf v}}{\partial \boldsymbol{\tau}} = \mathbf 0, \quad \frac{\partial \dot{\boldsymbol{\omega}}}{\partial a_T} = \mathbf 0 
			\label{e_pfpu}
		\end{align}	
		and $\frac{\partial \dot{\boldsymbol{\omega}}}{\partial \boldsymbol{\tau}}$ has coupling effect due to the coordinated flight condition that the vehicle has no lateral airspeed:
		\begin{equation}
			\mathbf v_{a_y}^{\mathcal{B}} = \mathbf{e}_2^T\mathbf{R}^T \mathbf{v}_a \equiv 0
		\end{equation}
		which leads to the derivative on the both sides:
		\begin{subequations}
			\begin{align}
				&\qquad - \mathbf{e}_2^T \lfloor\boldsymbol{\omega}\rfloor \mathbf{R}^T \mathbf{v}_a+ \mathbf{e}_2^T \mathbf{R}^T \dot{\mathbf{v}}_a = 0 \\
				&\Rightarrow \quad \mathbf{y}^T_b \dot{\mathbf{v}}_a = \mathbf{v}^T_a \mathbf{R} \lfloor\mathbf{e}_2 \rfloor \boldsymbol{\omega}  \\
				&\Rightarrow \quad \mathbf{y}^T_b \dot{\mathbf{v}}_a = \mathbf v_{a_x}^{\mathcal B} \boldsymbol{\omega}_z - \mathbf v_{a_z}^{\mathcal B} \boldsymbol{\omega}_x
			\end{align}
		\end{subequations}
		It is seen that the body angular velocity elements $\boldsymbol{\omega}_x$ and $\boldsymbol{\omega}_z$ are coupled. Without loss of generality, we consider $\boldsymbol{\omega}_z$ as a function of $\boldsymbol{\omega}_x$. Then we have
		\begin{equation}
			\frac{\partial \dot{\boldsymbol{\omega}}}{\partial \boldsymbol{\tau}} = \begin{bmatrix}
				\frac{\partial \dot{\boldsymbol{\omega}}_x}{\partial \boldsymbol{\tau}} \\
				\frac{\partial \dot{\boldsymbol{\omega}}_y}{\partial \boldsymbol{\tau}} \\
				\frac{\partial \dot{\boldsymbol{\omega}}_z}{\partial \boldsymbol{\tau}} 
			\end{bmatrix} = \begin{bmatrix}
			\frac{\partial \dot{\boldsymbol{\omega}}_x}{\partial \boldsymbol{\tau}} \\
			\frac{\partial \dot{\boldsymbol{\omega}}_y}{\partial \boldsymbol{\tau}} \\
			\frac{\partial \dot{\boldsymbol{\omega}}_z}{\partial \dot{\boldsymbol{\omega}}_x} \frac{\partial \dot{\boldsymbol{\omega}}_x}{\partial \boldsymbol{\tau}}  
			\end{bmatrix}  = \underbrace{\begin{bmatrix}
			1 & 0 & 0 \\
                0 & 1 & 0 \\
                \frac{\partial \dot{\boldsymbol{\omega}}_z}{\partial \dot{\boldsymbol{\omega}}_x} & 0 & 0 
		\end{bmatrix}}_{\mathbf A} \mathbf J^{-1}
                \label{e_pdwptau}
		\end{equation}
	
		With (\ref{e_rank_pfpu_}), (\ref{e_pfpu}) and (\ref{e_pdwptau}), the derivative is finally computed and transformed based on the fact of non-zero vector $\mathbf x_b$ and the full-rank inertia matrix:
        \begin{equation}
		\begin{aligned}
			\begin{bmatrix}
				\frac{\partial \dot{\mathbf v}}{\partial a_T} & \frac{\partial \dot{\mathbf v}}{\partial \boldsymbol{\tau}} \\
				\frac{\partial \dot{\boldsymbol{\omega}}}{\partial a_T} & \frac{\partial \dot{\boldsymbol{\omega}}}{\partial \boldsymbol{\tau}}
			\end{bmatrix} &= \begin{bmatrix}
			    \mathbf x_b & \mathbf 0 \\
                    \mathbf 0 & \mathbf A
			\end{bmatrix}  \begin{bmatrix}
			    1 & \mathbf 0 \\
                    \mathbf 0 & \mathbf J^{-1}
			\end{bmatrix} \sim \begin{bmatrix}
			    \mathbf x_b & \mathbf 0 \\
                    \mathbf 0 & \mathbf A
			\end{bmatrix}
		\end{aligned} 
        \end{equation}
	Therefore, we have its rank
        \begin{equation}
             \text{\rm rank} \left(\frac{\partial \mathbf f(\mathbf x_{\rm full}, \mathbf u_{\rm full})}{\partial \mathbf u_{\rm full}}\right) = \text{\rm rank} \left( \begin{bmatrix}
			    \mathbf x_b & \mathbf 0 \\
                    \mathbf 0 & \mathbf A
			\end{bmatrix} \right) = 3
        \end{equation}
	
		\section{Proof of theorem \ref{theorem_pfa_pvb}}
		\label{app_pfa_pvb}
		Reminding the aerodynamic force in (\ref{e_aero_force}) and the coordinated flight condition that there is no lateral airspeed in (\ref{e_vb_y_zero}) (i.e., $\mathbf{e}_2^T\mathbf{v}_a^\mathcal{B} = 0$) , we have
		\begin{align}
			\frac{\partial \mathbf{f}_a}{\partial \mathbf{v}_a^\mathcal{B}} &= \frac{\rho S}{2} \left(\mathbf{c} \frac{\partial V^2}{\partial \mathbf{v}_a^\mathcal{B}} + V^2 \frac{\partial \mathbf{c}}{\partial \alpha} \frac{\partial \alpha}{\partial \mathbf{v}_a^\mathcal{B}} + V^2 \frac{\partial \mathbf{c}}{\partial \beta} \frac{\partial \beta}{\partial \mathbf{v}_a^\mathcal{B}} \right)
			\label{e_pfa_pvb_temp}
		\end{align}
		where
		\begin{subequations}
			\begin{align}
				\frac{\partial V^2}{\partial \mathbf{v}_a^\mathcal{B}} &= \frac{\partial \Vert \mathbf{v}_a^\mathcal{B} \Vert^2}{\partial \mathbf{v}_a^\mathcal{B}} = 2\mathbf{v}_a^{\mathcal{B}^T} \\
				\frac{\partial \alpha}{\partial \mathbf{v}_a^\mathcal{B}} &= \frac{\partial }{\partial \mathbf{v}_a^\mathcal{B}} \tan^{-1}\frac{\mathbf{e}_3^T\mathbf{v}_a^\mathcal{B}}{\mathbf{e}_1^T\mathbf{v}_a^\mathcal{B}} \nonumber\\
				&= \frac{1}{1 + \left(\frac{\mathbf{e}_3^T\mathbf{v}_a^\mathcal{B}}{\mathbf{e}_1^T\mathbf{v}_a^\mathcal{B}}\right)^2} \frac{\mathbf{e}_1^T\mathbf{v}_a^\mathcal{B}\mathbf{e}_3^T - \mathbf{e}_3^T\mathbf{v}_a^\mathcal{B}\mathbf{e}_1^T}{\left(\mathbf{e}_1^T\mathbf{v}_a^\mathcal{B}\right)^2} \nonumber \\
				&= \frac{\begin{bmatrix}
						-\mathbf{v}_{a_z}^\mathcal{B} & 0 & \mathbf{v}_{a_x}^\mathcal{B}
				\end{bmatrix}}{\mathbf{v}_{a_x}^{\mathcal{B}^2} + \mathbf{v}_{a_z}^{\mathcal{B}^2}} \nonumber\\
				&= \frac{\mathbf{v}_a^{\mathcal{B}^T} \lfloor \mathbf{e}_2 \rfloor }{V^2} \\
				\frac{\partial \beta}{\partial \mathbf{v}^{\mathcal{B}}_a} &= \frac{\partial }{\partial \mathbf{v}^{\mathcal{B}}_a} \left(\sin^{-1}\frac{\mathbf{e}_2^T\mathbf{v}^{\mathcal{B}}_a}{\Vert \mathbf{v}^{\mathcal{B}}_a\Vert}\right)  \nonumber\\
				&= \frac{1}{\sqrt{1 - \left(\frac{\mathbf{e}_2^T\mathbf{v}^{\mathcal{B}}_a}{\Vert \mathbf{v}^{\mathcal{B}}_a\Vert}\right)^2}} \frac{\Vert \mathbf{v}^{\mathcal{B}}_a\Vert\mathbf{e}_2^T - \mathbf{e}_2^T\mathbf{v}_a^{\mathcal{B}} \frac{\mathbf{v}_a^{\mathcal{B}^T}}{\Vert \mathbf{v}^{\mathcal{B}}_a\Vert}}{\Vert \mathbf{v}^{\mathcal{B}}_a\Vert^2}  \nonumber\\
				&= \frac{\mathbf{e}_2^T  }{V}
			\end{align}
			\label{e_p_V_alpha_beta_pvb}	
		\end{subequations}		
		and the aerodynamic coefficient gradients $\frac{\partial \mathbf{c}}{\partial \alpha}$ and $\frac{\partial \mathbf{c}}{\partial \beta}$ of an axially symmetric airframe satisfies (\ref{e_axial_symmetric}). Substituting (\ref{e_p_V_alpha_beta_pvb}) into (\ref{e_pfa_pvb_temp}), we have
		\begin{equation}
			\frac{\partial \mathbf{f}_a}{\partial \mathbf{v}_a^\mathcal{B}} = \frac{\rho S}{2}\left(2\mathbf{c} \mathbf{v}_a^{\mathcal{B}^T} + \frac{\partial \mathbf{c}}{\partial \alpha} \mathbf{v}_a^{\mathcal{B}^T} \lfloor \mathbf{e}_2 \rfloor  + V \frac{\partial \mathbf{c}}{\partial \beta}\mathbf{e}_2^T \right)
			\label{e_pfa_pvb_2}
		\end{equation}
	
		\section{Calculation of matrices $\dot{\mathbf{N}}$ and $\dot{\mathbf{h}}$}
		\label{app_dNH_dt}
		As the matrices $\mathbf{N}$ and $\mathbf{h}$ are broken into block matrices in (\ref{e_solve_omega}), their time derivatives can be taken in block matrices as follows: 
		\begin{equation}
			\dot{\mathbf{h}} = \begin{bmatrix}
			\dot{\mathbf{h}}_1 \\ \dot{\mathbf{h}}_2
			\end{bmatrix} \quad, \dot{\mathbf{N}} = \begin{bmatrix}
			\dot{\mathbf{N}}_1 \\ \dot{\mathbf{N}}_2
			\end{bmatrix}
			\label{e_dh_dN_block_mat}
		\end{equation}
		where
        \begin{subequations}            
			\begin{align}
				\dot{\mathbf{h}}_1 &= \frac{d}{dt} \left(\mathbf{y}^T_b \dot{\mathbf{v}}_a\right) \nonumber\\
				&= \frac{d}{dt} \left(\mathbf{e}_2^T\mathbf{R}^T \dot{\mathbf{v}}_a\right) \nonumber \\
				&= \mathbf{e}_2^T \left(- \lfloor \boldsymbol{\omega} \rfloor \mathbf{R}^T\dot{\mathbf{v}}_a + \mathbf{R}^T\ddot{\mathbf{v}}_a\right) \\
				\dot{\mathbf{h}}_2 &= \frac{d}{dt} \left(\ddot{\mathbf{v}} - \frac{1}{m}\mathbf{R}\frac{\partial \mathbf{f}_a}{\partial \mathbf{v}_a^\mathcal{B}}\mathbf{R}^T\dot{\mathbf{v}}_a\right) \nonumber\\
				&= \dddot{\mathbf{v}} - \frac{1}{m}\mathbf{R}\left( \lfloor \boldsymbol{\omega} \rfloor \frac{\partial \mathbf{f}_a}{\partial \mathbf{v}_a^\mathcal{B}}\mathbf{R}^T\dot{\mathbf{v}}_a + \frac{d}{dt} \left(\frac{\partial \mathbf{f}_a}{\partial \mathbf{v}_a^\mathcal{B}} \right) \mathbf{R}^T\dot{\mathbf{v}}_a\right. \nonumber\\
				& \qquad \left. - \frac{\partial \mathbf{f}_a}{\partial \mathbf{v}_a^\mathcal{B}} \lfloor \boldsymbol{\omega} \rfloor \mathbf{R}^T\dot{\mathbf{v}}_a + \frac{\partial \mathbf{f}_a}{\partial \mathbf{v}_a^\mathcal{B}}\mathbf{R}^T\ddot{\mathbf{v}}_a\right) \\
				\dot{\mathbf{N}}_1 &= \begin{bmatrix}
					0 & \dot{\mathbf{N}}_{12}
				\end{bmatrix} \\
				\dot{\mathbf{N}}_{12} &= \frac{d}{dt}\left(\mathbf{v}^T_a \mathbf{R}  \lfloor \mathbf{e}_2 \rfloor \right) = \left(\dot{\mathbf{v}}_a^T\mathbf{R} + \mathbf{v}_a^T\mathbf{R} \lfloor \boldsymbol{\omega} \rfloor  \right)  \lfloor \mathbf{e}_2 \rfloor  \\
				\dot{\mathbf{N}}_2 &= \begin{bmatrix}
					\dot{\mathbf{N}}_{21} & \dot{\mathbf{N}}_{22}
				\end{bmatrix} \\ 
				\dot{\mathbf{N}}_{21} &= \frac{d}{dt} \left(\mathbf{R}\mathbf{e}_1\right) = \mathbf{R} \lfloor \boldsymbol{\omega} \rfloor \mathbf{e}_1 \\
				\dot{\mathbf{N}}_{22} &= \frac{d}{dt} \left(\mathbf{R}\left(-\left \lfloor \left( a_T \mathbf{e}_1 + \frac{\mathbf f_a }{m} \right) \right\rfloor  + \frac{1}{m} \frac{\partial \mathbf{f}_a}{\partial \mathbf{v}_a^\mathcal{B}} \lfloor \mathbf{v}_a^\mathcal{B} \rfloor \right) \right) \nonumber\\
				&= \mathbf{R} \lfloor \boldsymbol{\omega} \rfloor \left(-\left \lfloor \left( a_T \mathbf{e}_1 + \frac{\mathbf f_a }{m} \right) \right\rfloor  + \frac{1}{m} \frac{\partial \mathbf{f}_a}{\partial \mathbf{v}_a^\mathcal{B}} \lfloor \mathbf{v}_a^\mathcal{B} \rfloor \right)  \nonumber \\
	            &\quad + \mathbf{R} \left(-\left \lfloor \left( \dot{a}_T \mathbf{e}_1 + \frac{1}{m}\left(\frac{\partial \mathbf f_a}{\partial V}\dot{V} + \frac{\partial \mathbf f_a}{\partial \alpha}\dot{\alpha}\right) \right) \right\rfloor   \right. \nonumber\\
				&\quad \left.  + \frac{1}{m} \left(\left(\frac{d}{dt} \left(\frac{\partial \mathbf{f}_a}{\partial \mathbf{v}_a^\mathcal{B}} \right)\right) \lfloor \mathbf{v}_a^\mathcal{B} \rfloor  + \frac{\partial \mathbf{f}_a}{\partial \mathbf{v}_a^\mathcal{B}}  \lfloor \dot{\mathbf{v}}_a^\mathcal{B} \rfloor  \right)  \right) \\
				\frac{\partial \mathbf{f}_a}{\partial V} &=  \rho V S \mathbf{c} \\
				\frac{\partial \mathbf{f}_a}{\partial \alpha} &=  \frac{1}{2} \rho V^2 S\frac{\partial \mathbf{c}}{\partial \alpha} \\
				\dot{\mathbf{v}}_a^\mathcal{B} &= \frac{d}{dt}\left(\mathbf{R}^T\mathbf{v}_a\right) = - \lfloor \boldsymbol{\omega} \rfloor \mathbf{R}^T\mathbf{v} + \mathbf{R}^T\dot{\mathbf{v}}_a \\
				\dot{\alpha} &= \frac{1}{1 + \left(\frac{\mathbf{v}_{a_z}^{\mathcal{B}}}{\mathbf{v}_{a_x}^\mathcal{B}}\right)^2} \frac{\dot{\mathbf{v}}_{a_z}^\mathcal{B} \mathbf{v}_{a_x}^\mathcal{B} - \mathbf{v}_{a_z}^\mathcal{B} \dot{\mathbf{v}}_{a_x}^\mathcal{B}}{\mathbf{v}_{a_x}^{\mathcal{B}^2}} \nonumber\\
                &= \frac{\dot{\mathbf{v}}_{a_z}^\mathcal{B} \mathbf{v}_{a_x}^\mathcal{B} - \mathbf{v}_{a_z}^\mathcal{B} \dot{\mathbf{v}}_{a_x}^\mathcal{B}}{V^2} \\
				\dot{\mathbf{v}}_a &= \dot{\mathbf{v}} - \dot{\mathbf{w}} \\
				\ddot{\mathbf{v}}_a &= \ddot{\mathbf{v}} - \ddot{\mathbf{w}} \\
				\dot{V} &= \mathbf{v}_a^T\dot{\mathbf v}_a / V 
			\end{align}
			\label{e_d_h_N}
        \end{subequations}
		and with (\ref{e_pfa_pvb_2}), we have
		\begin{align}
			\frac{d}{dt}\left(\frac{\partial \mathbf{f}_a}{\partial \mathbf{v}_a^\mathcal{B}}\right) &\!=\! \frac{\rho S}{2} \! \left(2\left(\frac{\partial \mathbf{c}}{\partial \alpha}\dot{\alpha} \mathbf{v}_a^{\mathcal{B}^T} \!+\! \mathbf{c} \dot{\mathbf{v}}_a^{\mathcal{B}^T}\right) \!+\! \left(\frac{\partial^2 \mathbf{c}}{\partial \alpha^2} \dot{\alpha} \mathbf{v}_a^{\mathcal{B}^T}  \right.  \right. \nonumber\\
			& \!\!\!\!\!\!\!\left.\left. \!+\! \frac{\partial \mathbf{c}}{\partial \alpha} \dot{\mathbf{v}}_a^{\mathcal{B}^T}\right)\!  \lfloor \mathbf{e}_2 \rfloor \!+\!\! \left(\dot{V} \frac{\partial \mathbf{c}}{\partial \beta} \!+\! V \frac{\partial^2 \mathbf c}{\partial \beta \partial \alpha} \dot \alpha\right)\!\mathbf{e}_2^T \right)
		\end{align}
				
		\section{Proof of Theorem \ref{theorem_det_N} (determinant of $\mathbf{N}$)} \label{app_det_N}
		We first denote
		\begin{equation}
			\boldsymbol{\Psi} = -\left \lfloor \left( a_T \mathbf{e}_1 + \frac{\mathbf f_a }{m} \right) \right\rfloor  + \frac{1}{m} \frac{\partial \mathbf{f}_a}{\partial \mathbf{v}_a^\mathcal{B}} \lfloor \mathbf{v}_a^\mathcal{B} \rfloor
			\label{e_psi_origin}
		\end{equation}
		
		With (\ref{e_decomp_force}), we have
		\begin{equation}
			a_T\mathbf{e}_1 + \frac{\mathbf{f}_a}{m} =  \|\dot{\mathbf{v}} - \mathbf{g} \| \begin{bmatrix}
				\cos(\gamma - \alpha) & 0 & -\sin(\gamma - \alpha)
			\end{bmatrix}^T
			\label{e_aTe1_fm}
		\end{equation}
		
		With (\ref{e_pfa_pvb}) and (\ref{e_axial_symmetric}), we have
		\begin{align}
			&\frac{\partial \mathbf{f}_a}{\partial \mathbf{v}_a^\mathcal{B}} \lfloor \mathbf{v}_a^\mathcal{B} \rfloor = \frac{\rho S}{2}\left(-\frac{\partial \mathbf{c}}{\partial \alpha}\mathbf{e}_2^T \lfloor \mathbf{v}_a^\mathcal{B} \rfloor^2 + V \frac{\partial \mathbf{c}}{\partial \beta} \mathbf e_2^T \lfloor \mathbf{v}_a^\mathcal{B} \rfloor \right) \nonumber\\
			&\qquad\qquad = \frac{\rho S V^2}{2} \begin{bmatrix}
				0 & \frac{\partial \mathbf{c}_x}{\partial \alpha}  & 0 \\
				\frac{\partial \mathbf{c}_y}{\partial \beta}\sin\alpha & 0 & -\frac{\partial \mathbf{c}_y}{\partial \beta}\cos\alpha \\
				0 & \frac{\partial \mathbf{c}_z}{\partial \alpha} & 0
			\end{bmatrix} 
			\label{e_pfa_pvb_vbskew}
		\end{align}
		
		Therefore, combining (\ref{e_aTe1_fm}) and (\ref{e_pfa_pvb_vbskew}), (\ref{e_psi_origin}) can be rewritten as
		\begin{equation}
			\boldsymbol{\Psi} = \begin{bmatrix}
				0 & \psi_{12}  & 0 \\
				\psi_{21} & 0 & \psi_{23} \\
				0 & \psi_{32} & 0
			\end{bmatrix}
			\label{e_Psi}
		\end{equation}
		where
		\begin{subequations}
			\begin{align}
				\psi_{12} &= -\|\dot{\mathbf{v}} - \mathbf{g} \|\sin(\gamma - \alpha) + \frac{\rho S V^2}{2m}\frac{\partial \mathbf{c}_x}{\partial \alpha} \\
				\psi_{21} &= \|\dot{\mathbf{v}} - \mathbf{g} \|\sin(\gamma - \alpha) + \frac{\rho S V^2}{2m}\frac{\partial \mathbf{c}_y}{\partial \beta}\sin\alpha \\
				\psi_{23} &= \|\dot{\mathbf{v}} - \mathbf{g} \|\cos(\gamma - \alpha) -\frac{\rho S V^2}{2m}\frac{\partial \mathbf{c}_y}{\partial \beta}\cos\alpha \label{e_psi_23}\\
				\psi_{32} &= -\|\dot{\mathbf{v}} - \mathbf{g} \|\cos(\gamma - \alpha) + \frac{\rho S V^2}{2m}\frac{\partial \mathbf{c}_z}{\partial \alpha}
			\end{align}
			\label{e_psi}
		\end{subequations}
		
		Now we calculate the determinant of $\mathbf{N}$. With (\ref{e_N1}) and (\ref{e_N2}), $\mathbf{N}$ can be factorized as
		\begin{equation}
			\mathbf{N} = \begin{bmatrix}
				1 & \mathbf{0} \\ \mathbf{0} & \mathbf{R}
			\end{bmatrix} \underbrace{\begin{bmatrix}
					0 & \mathbf{v}_a^{\mathcal{B}^T}\lfloor\mathbf{e}_2 \rfloor \\ \mathbf{e}_1 & \boldsymbol{\Psi}
			\end{bmatrix}}_{\bar{\mathbf{N}}}
		\end{equation}
		which implies $\det(\mathbf{N}) = \det(\widebar{\mathbf{N}})$. Performing elementary row and column operations on $\widebar{\mathbf{N}}$ produces
		\begin{align}
			\widebar{\mathbf{N}} \!=\! \begin{bmatrix}
				0 & -\mathbf{v}_{a_z}^\mathcal{B} & 0 & \mathbf{v}_{a_x}^\mathcal{B} \\
				1 & 0 & \psi_{12} & 0 \\
				0 & \psi_{21} & 0 & \psi_{23} \\
				0 & 0 & \psi_{32} & 0
			\end{bmatrix} \!\sim\! \begin{bmatrix}
				1 & 0 & 0 & 0 \\
				0 & \psi_{32} & 0 & 0 \\
				0 & 0 & -\mathbf{v}_{a_z}^\mathcal{B} & \mathbf{v}_{a_x}^\mathcal{B} \\
				0 & 0 & \psi_{21} & \psi_{23}
			\end{bmatrix}
			\label{e_sim_bar_N}
		\end{align}
		By substituting (\ref{e_psi}) into (\ref{e_sim_bar_N}), the determinant of ${\mathbf{N}}$ hence can be calculated as follows:
		\begin{align}	
			\det(\mathbf{{\mathbf{N}}}) &= \det({\widebar{\mathbf{N}}}) = -\psi_{32} \left( \mathbf{v}_{a_z}^\mathcal{B} \psi_{23} + \mathbf{v}_{a_x}^\mathcal{B} \psi_{21} \right) \nonumber\\
			&=  -\psi_{32} \| \mathbf{v}_a \| \left( \psi_{23}\sin\alpha +  \psi_{21}\cos\alpha \right) \nonumber \\
			&= -\psi_{32} \|\mathbf{v}_a\| \left( \| 
			\dot{\mathbf{v}} -\mathbf{g} \| \cos(\gamma -\alpha)\sin\alpha \right. \nonumber\\ 
			& \qquad \left. +  \| \dot{\mathbf{v}} -\mathbf{g} \| \sin(\gamma - \alpha) \cos\alpha \right) \nonumber\\
			&= -\psi_{32} \|\mathbf{v}_a\| \| \dot{\mathbf{v}} -\mathbf{g} \| \sin\gamma \nonumber \\
			&= -\psi_{32} \| \mathbf{v}_a \times (\dot{\mathbf{v}} -\mathbf{g}) \|
		\end{align}		
		It is noted that the derivative of (\ref{e_solve_alpha}) w.r.t. $\alpha$ is given as
		\begin{align}
			\frac{\partial F(\alpha)}{\partial \alpha} &= -\frac{2m \| \dot{\mathbf{v}} - \mathbf{g} \|}{\rho S V^2}  \cos(\gamma - \alpha) + \frac{\mathbf{c}_z(\alpha, 0)}{\partial \alpha} \nonumber\\
			&= \frac{2m}{\rho SV^2} \psi_{32}
			\label{e_equivalent_psi_32}
		\end{align}
		Therefore, the determinant of $\mathbf{N}$ is finally arrived at
		\begin{equation}
			\det(\mathbf{N}) = - \frac{\rho SV^2}{2m} \frac{\partial F(\alpha)}{\partial \alpha} \| \mathbf{v}_a \times (\dot{\mathbf{v}} -\mathbf{g}) \|
		\end{equation}
	
		\section{Singularity $\|\mathbf{v}_a\| = 0$}
		\label{app_sing_va0}	
		
		\subsection{Proof of Theorem \ref{theorem_sing_va0}: determinant of $\mathbf{N}$}
		\label{app_sing_v0_det_N}
		 With (\ref{e_N1_sing_va0}) and (\ref{e_N2_sing_va0}), $\mathbf{N}$ can be factorized as
		\begin{equation}
			\mathbf{N} = \begin{bmatrix}
				1 & \mathbf{0} \\ \mathbf{0} & \mathbf{R}
			\end{bmatrix} \underbrace{\begin{bmatrix}
					0 & \| \mathbf{z}_b^{\rm fix} \times (\dot{\mathbf{v}} - \mathbf{g}) \| \mathbf{e}_1^T \\ \mathbf{e}_1 & -a_T \lfloor\mathbf{e}_1\rfloor
			\end{bmatrix}}_{\bar{\mathbf{N}}}
		\end{equation}
		which implies $\det(\mathbf{N}) = \det(\widebar{\mathbf{N}})$. Performing elementary row and column operations on $\widebar{\mathbf{N}}$ produces
		\begin{equation}
			\widebar{\mathbf{N}} \sim {\rm diag}\left(\begin{bmatrix}
				1 & \| \mathbf{z}_b^{\rm fix} \times (\dot{\mathbf{v}} - \mathbf{g}) \|  & a_T & -a_T
			\end{bmatrix}\right)
		\end{equation}
		Hence, the determinant of $\mathbf{N}$ can be calculated as:
		\begin{equation}
			\det(\mathbf{N}) = - a_T^2 \| \mathbf{z}_b^{\rm fix} \times (\dot{\mathbf{v}} - \mathbf{g}) \| 
		\end{equation}
		
		\subsection{Calculation of $\dot{\mathbf{h}}$ and $\dot{\mathbf{N}}$}
		\label{app_sing_v0_dh_dN}
		As $\mathbf{h}$ and $\mathbf{N}$ break into block matrices, their derivatives $\dot{\mathbf{h}}$ and $\dot{\mathbf{N}}$ can be presented as like (\ref{e_dh_dN_block_mat}), where each block is calculated as follows:  
		\begin{subequations}
			\begin{align}
				&\dot{\mathbf{h}}_1 = \left(\lfloor \mathbf{z}_b^{\rm fix} \rfloor \dddot{\mathbf{v}}\right)^T\mathbf{z}_b + \left(\lfloor \mathbf{z}_b^{\rm fix} \rfloor \ddot{\mathbf{v}}\right)^T \mathbf{R} \lfloor \boldsymbol{\omega}\rfloor\mathbf{e}_3 \label{e_d_h1_sing}\\
				&\dot{\mathbf{h}}_2 = \dddot{\mathbf{v}} \\
				&\dot{\mathbf{N}}_1 \!=\! \begin{bmatrix}
					0 \!\!&\!  \frac{-(\dot{\mathbf{v}} \!-\! \mathbf{g})^T \lfloor \mathbf{z}_b^{\rm fix} \rfloor^2 \ddot{\mathbf{v}} \mathbf{e}_1^T}{\|\lfloor \mathbf{z}_b^{\rm fix} \rfloor (\dot{\mathbf{v}} \!-\! \mathbf{g})\|}
				\end{bmatrix} \label{e_d_N1_sing}\\
				&\dot{\mathbf{N}}_2 \!=\! \begin{bmatrix}				\mathbf{R}\lfloor\boldsymbol{\omega}\rfloor\mathbf{e}_1 \!\!&\! -\left(\frac{(\dot{\mathbf{v}} \!-\! \mathbf{g})^T\ddot{\mathbf{v}}}{a_T} \mathbf{R} \!+\! a_T\mathbf{R}\lfloor\boldsymbol{\omega}\rfloor\right) \lfloor \mathbf{e}_1 \rfloor		
				\end{bmatrix}
			\end{align}
		\end{subequations}

		\section{Singularity $\gamma = 0$}
		\label{app_sing_gamma0}	
		
		\subsection{Proof of Theorem \ref{theorem_sing_gamma0}: determinant of $\mathbf{N}$}
		\label{app_sing_gamma0_det_N}
		
		Because $\mathbf{y}_b$ is perpendicular to $\mathbf{v}_a$, so it still holds the lateral airspeed condition $\mathbf{y}_b^T\mathbf{v}_a = 0$. We can leverage the results in (\ref{e_Psi}), (\ref{e_psi}) and (\ref{e_equivalent_psi_32}) in Appendix \ref{app_det_N} to factorize $\det(\mathbf{N})$ as follows:
		\begin{equation}
			\mathbf{N} = \begin{bmatrix}
				1 & \mathbf{0} \\ \mathbf{0} & \mathbf{R}
			\end{bmatrix} \underbrace{\begin{bmatrix}
					0 & \|\mathbf{z}_b^{\rm fix} \times (\dot{\mathbf{v}} - \mathbf{g})\|\mathbf{e}_1^T \\ \mathbf{e}_1 & \boldsymbol{\Psi}
			\end{bmatrix}}_{\bar{\mathbf{N}}}
			\label{e_factorize_N_sing_gamma0}
		\end{equation}
		which implies $\det(\mathbf{N}) = \det(\widebar{\mathbf{N}})$. Performing elementary row and column operations on $\widebar{\mathbf{N}}$ produces
		\begin{equation}
			\widebar{\mathbf{N}} \sim {\rm diag}\left(\begin{bmatrix}
				1 & \|\mathbf{z}_b^{\rm fix} \times (\dot{\mathbf{v}} - \mathbf{g})\| & \psi_{23} & \psi_{32}
			\end{bmatrix}\right)
		\end{equation}
		where $\psi_{23} = \left(\|\dot{\mathbf{v}} - \mathbf{g}\| - \frac{\rho S V^2}{2m}\frac{\partial \mathbf{c}_y}{\partial \beta}\right)\cos\alpha$ is from (\ref{e_psi_23}) by setting $\gamma = 0$, and $\psi_{32} = \frac{\rho SV^2}{2m} \frac{\partial F(\alpha)}{\partial \alpha} $ is from (\ref{e_equivalent_psi_32}). Finally, the determinant of $\mathbf{N}$ is
		\begin{equation}
			\begin{aligned}
				\det(\mathbf{N}) 
				&= \frac{\rho SV^2}{2m} \frac{\partial F(\alpha)}{\partial \alpha} \|\mathbf{z}_b^{\rm fix} \times (\dot{\mathbf{v}} - \mathbf{g})\| \psi_{23}
			\end{aligned}		
		\end{equation}
	
		\subsection{Calculation of $\dot{\mathbf{h}}$ and $\dot{\mathbf{N}}$}
		\label{app_sing_gamma0_d_h_N}
		Similarly, the derivatives $\dot{\mathbf{h}}$ and $\dot{\mathbf{N}}$ can be presented as like (\ref{e_dh_dN_block_mat}), in which block matrices $\dot{\mathbf{h}}_1$ and $\dot{\mathbf{N}}_1$ are given by (\ref{e_d_h1_sing}) and (\ref{e_d_N1_sing}) in Appendix \ref{app_sing_v0_dh_dN}, while $\dot{\mathbf{h}}_2$ and $\dot{\mathbf{N}}_2$ are given by (\ref{e_d_h_N}).

        \section{Gradients of the flatness functions}
		\label{app_gradient_flat_func}
		We denote $\mathcal{P} = \begin{bmatrix}
			\mathbf v^T & \dot{\mathbf v}^T & \ddot{\mathbf v}^T
		\end{bmatrix}^T, \mathbf b = \begin{bmatrix}
			\dot a_T & \boldsymbol{\omega}^T
		\end{bmatrix}^T $ for simplicity. We also split the matrix $\mathbf h$ and $\mathbf N_2$ in (\ref{e_h_N}), (\ref{e_h_N_sing_va0}), (\ref{e_h_N_sing_gamma0}) into $\mathbf h = \begin{bmatrix} \mathbf h_1 & \mathbf h_2^T \end{bmatrix}^T$ and  $\mathbf N_2 = \begin{bmatrix} \mathbf N_{21} & \mathbf N_{22} \end{bmatrix}$, respectively.
	
		\subsection{When in coordinated flight}	
		\label{app_grad_coordinated_flight}
		The flatness functions are presented in Section \ref{sec_flat_func}, and the corresponding gradients are given as follows:
		\begin{subequations}
			\begin{align}
				\frac{\partial a_T}{\partial \mathcal{P}} &= \frac{\partial \Vert\dot{\mathbf{v}} - \mathbf{g}\Vert \cos\left(\gamma \!-\! \alpha\right) - \mathbf{f}_{a_x} / m}{\partial \mathcal{P}} \nonumber\\
				&= \frac{\partial \Vert\dot{\mathbf{v}} - \mathbf{g}\Vert}{\partial \mathcal{P}}\cos(\gamma - \alpha) - \Vert\dot{\mathbf{v}} - \mathbf{g}\Vert\sin(\gamma - \alpha) \nonumber\\
				&\quad  \left(\frac{\partial \gamma}{\partial \mathcal{P}} - \frac{\partial \alpha}{\partial \mathcal{P}} \right) - \frac{\mathbf{e}_1^T}{m}\frac{\partial \mathbf f_a}{\partial \mathcal P} \\
				\frac{\partial \boldsymbol{\omega}}{\partial \mathcal{P}} &= \begin{bmatrix}
					\mathbf{0}_{3\times 1} & \mathbf{I}_3
				\end{bmatrix} \mathbf{N}^{-1} \left(\frac{\partial \mathbf{h}}{\partial \mathcal{P}} - \frac{\partial \mathbf{N}}{\partial \mathcal{P}} \mathbf b\right)
			\end{align}
		\end{subequations}
		where
		\begin{subequations}
			\begin{align}
				&\frac{\partial \Vert\dot{\mathbf{v}} - \mathbf{g}\Vert}{\partial \mathcal{P}} = \frac{(\dot{\mathbf{v}} - \mathbf{g})^T}{\Vert\dot{\mathbf{v}} - \mathbf{g}\Vert} \frac{\partial \dot{\mathbf{v}}}{\partial \mathcal{P}} \\
				&\frac{\partial \mathbf f_a}{\partial \mathcal P} = \frac{\partial \mathbf f_a}{\partial \mathbf v_a^{\mathcal B}} \frac{\partial \mathbf v_a^{\mathcal B}}{\partial \mathcal P}\\
				&\frac{\partial \gamma}{\partial \mathcal{P}} = r \frac{\partial }{\partial \mathcal{P}} {\rm cos^{-1}}\left( \frac{(\dot{\mathbf{v}} - \mathbf{g})^T\mathbf{v}_a}{\|\dot{\mathbf{v}} - \mathbf{g}\| \|\mathbf{v}_a\|}\right)  \nonumber\\
				&\hspace{5.5mm}= \frac{r}{|\sin \gamma|}\left(\left( \frac{\cos\gamma(\dot{\mathbf{v}} -\mathbf{g})^T}{\|\dot{\mathbf{v}} -\mathbf{g}\|^2} - \frac{\mathbf{v}_a^T}{\| \dot{\mathbf{v}} \!-\! \mathbf{g}\| \| \mathbf{v}_a \|} \right) \frac{\partial \dot{\mathbf{v}}}{\partial \mathcal P} \right. \nonumber\\
				&\hspace{10mm} \left. + \left(\frac{\cos\gamma \mathbf{v}_a^T}{\|\mathbf v_a\|^2} - \frac{(\dot{\mathbf{v}} - \mathbf g)^T}{\| \dot{\mathbf{v}} \!-\! \mathbf{g}\| \| \mathbf{v}_a \|}\right) \frac{\partial \mathbf v}{\partial \mathcal P}\right) \\
				&\frac{\partial \alpha}{\partial \mathcal P} = \frac{ \frac{\partial h}{\partial \mathcal P} \sin(\gamma -\alpha) + h\cos(\gamma -\alpha)\frac{\partial \gamma}{\partial \mathcal P}}{h\cos(\gamma - \alpha) - \frac{\partial \mathbf{c}_z}{\partial \alpha}} \\
				&\frac{\partial h}{\partial \mathcal P} = \frac{2m}{\rho S} \left(\frac{(\dot{\mathbf{v}} \!-\! \mathbf{g})^T}{\|\dot{\mathbf v} - \mathbf g\|\|\mathbf{v}_a\|^2} \frac{\partial \dot{\mathbf v}}{\partial \mathcal P} \!-\! \frac{2\|\dot{\mathbf{v}} \!-\! \mathbf{g}\| \mathbf{v}_a^T}{\|\mathbf v_a\|^4}\frac{\partial \mathbf v}{\partial \mathcal P}\right)	\\
				&\frac{\partial \mathbf h}{\partial \mathcal P} = \begin{bmatrix}
					\frac{\partial \mathbf h_1}{\partial \mathcal P}^T & \frac{\partial \mathbf h_2}{\partial \mathcal P}^T
				\end{bmatrix}^T \\
				&\frac{\partial \mathbf h_1}{\partial \mathcal P} = \dot{\mathbf v}_a^T \frac{\partial \mathbf y_b}{\partial \mathcal P} + \mathbf y_b^T \frac{\partial \dot{\mathbf v}}{\partial \mathcal P} \\
				&\frac{\partial \mathbf h_2}{\partial \mathcal P} \!\!=\!\! \left.\frac{\partial \ddot{\mathbf v}}{\partial \mathcal P} \!-\! \frac{1}{m} \left(\!\frac{\partial \mathbf R \boldsymbol \xi}{\partial \mathcal P}\right\rvert_{\boldsymbol \xi = \frac{\partial \mathbf{f}_a}{\partial \mathbf{v}_a^\mathcal{B}}\mathbf{R}^T\dot{\mathbf{v}}_a} \!\!\!\!+\! \mathbf{R}\left.\frac{\!\partial\! \left(\frac{\partial \mathbf{f}_a}{\partial \mathbf{v}_a^\mathcal{B}\!}\right)\!\boldsymbol \xi}{\partial \mathcal P}\right\rvert_{\boldsymbol \xi = \mathbf{R}^T\dot{\mathbf{v}}_a} \right.\nonumber\\
				&\hspace{10mm} \left. + \mathbf R\frac{\partial \mathbf{f}_a}{\partial \mathbf{v}_a^\mathcal{B}}\left.\frac{\partial \mathbf R^T     \boldsymbol \xi}{\partial \mathcal P}\right\rvert_{\boldsymbol \xi = \dot{\mathbf{v}}_a} + \mathbf{R}\frac{\partial \mathbf{f}_a}{\partial \mathbf{v}_a^\mathcal{B}}\mathbf{R}^T\frac{\partial \dot{\mathbf{v}}}{\partial \mathcal P}\right)\\
				&\frac{\partial \mathbf R \boldsymbol \xi}{\partial \mathcal P} = \boldsymbol \xi_1 \frac{\partial \mathbf x_b}{\partial \mathcal P} + \boldsymbol \xi_2 \frac{\partial \mathbf y_b}{\partial \mathcal P} + \boldsymbol \xi_3 \frac{\partial \mathbf z_b}{\partial \mathcal P}\\
				&\frac{\partial \mathbf R^T \boldsymbol \xi}{\partial \mathcal P} = \begin{bmatrix}
					\left(\boldsymbol \xi^T \frac{\partial \mathbf x_b}{\partial \mathcal P}\right)^T & \left(\boldsymbol \xi^T \frac{\partial \mathbf y_b}{\partial \mathcal P}\right)^T & \left(\boldsymbol \xi^T \frac{\partial \mathbf z_b}{\partial \mathcal P}\right)^T
				\end{bmatrix}^T\\
				&\frac{\partial \mathbf y_b}{\partial \mathcal P} = r
				\frac{\|\lfloor\mathbf{v}_a\rfloor  \left(\dot{\mathbf{v}} \!-\! \mathbf{g}\right)\|^2\mathbf I_3 \!-\! \lfloor\mathbf{v}_a\rfloor  \left(\dot{\mathbf{v}} \!-\! \mathbf{g}\right)\left(\dot{\mathbf{v}} \!-\! \mathbf{g}\right)^T\lfloor\mathbf{v}_a\rfloor}{\|\lfloor\mathbf{v}_a\rfloor \left(\dot{\mathbf{v}} \!-\! \mathbf{g}\right)\|^3} \nonumber\\
				&\hspace{10mm} \frac{\partial \left(\lfloor\mathbf{v}_a\rfloor \left(\dot{\mathbf{v}} - \mathbf{g}\right)\right)}{\partial \mathcal P} \\
				&\frac{\partial \left(\lfloor\mathbf{v}_a\rfloor \left(\dot{\mathbf{v}} - \mathbf{g}\right)\right)}{\partial \mathcal P} = -\lfloor\dot{\mathbf v} - \mathbf g\rfloor \frac{\partial \mathbf v}{\partial \mathcal P} + \lfloor\mathbf v_a\rfloor \frac{\partial \dot{\mathbf v}}{\partial \mathcal P} \\
				&\frac{\partial \mathbf x_b}{\partial \mathcal P} = \lfloor\mathbf y_b \rfloor {\rm Exp}(\alpha \mathbf y_b)\frac{\mathbf v_a}{\|\mathbf v_a\|} \frac{\partial \alpha}{\partial \mathcal P} + \left.\frac{\partial {\rm Exp}(\alpha \mathbf y_b)\boldsymbol \xi}{\partial \mathbf y_b}\right\rvert_{\boldsymbol \xi = \frac{\mathbf v_a}{\|\mathbf v_a\|}}   \nonumber\\
				&\hspace{10mm} + {\rm Exp}(\alpha \mathbf y_b) \frac{\|\mathbf v_a\|^2\mathbf I_3 - \mathbf v_a \mathbf v_a^T}{\|\mathbf v_a\|^3} \frac{\partial \mathbf v}{\partial \mathcal P}\\
				&\frac{\partial {\rm Exp}(\alpha \mathbf y_b) \boldsymbol \xi}{\partial \mathbf y_b} \!=\! \frac{\partial }{\partial \mathbf y_b} \!\!\left(\mathbf I_3 \!+\! \lfloor\mathbf y_b\rfloor \sin\alpha \!+\! \lfloor\mathbf y_b\rfloor^2(1\!-\!\cos\alpha) \right)\boldsymbol \xi \nonumber\\
				&\hspace{5.5mm} = -\lfloor\boldsymbol \xi\rfloor\sin\alpha - \left(\lfloor \lfloor\mathbf y_b\rfloor \boldsymbol \xi\rfloor + \lfloor\mathbf y_b\rfloor\lfloor\boldsymbol \xi \rfloor \right) (1- \cos\alpha) \\
				&\frac{\partial \mathbf z_b}{\mathcal P} = \lfloor\mathbf x_b\rfloor \frac{\partial \mathbf y_b}{\partial \mathcal P} - \lfloor\mathbf y_b\rfloor \frac{\partial \mathbf x_b}{\partial \mathcal P}\\
                &\frac{\partial}{\partial \mathcal P} \left(\frac{\partial \mathbf f_a}{\partial \mathbf v_a^{\mathcal B}} \boldsymbol \xi\right) = \frac{\rho S}{2} \left(\left(2\frac{\partial \mathbf{c}}{\alpha} \mathbf{v}_a^{\mathcal{B}^T} + \frac{\partial^2 \mathbf{c}}{\partial \alpha^2} \mathbf{v}_a^{\mathcal{B}^T} \lfloor \mathbf{e}_2 \rfloor  \right.\right.\nonumber\\
				&\hspace{5.5mm} \left.\left. + V \frac{\partial^2 \mathbf{c}}{\partial \beta \partial \alpha}\mathbf{e}_2^T \right)\boldsymbol \xi \frac{\partial \alpha}{\partial \mathcal P} + \left( 2\mathbf{c} \boldsymbol \xi^T - \frac{\partial \mathbf{c}}{\partial \alpha} \boldsymbol \xi^T \lfloor \mathbf{e}_2 \rfloor \right.\right.\nonumber\\
				&\hspace{5.5mm} \left.\left. + \frac{\partial \mathbf{c}}{\partial \beta}\frac{\mathbf{e}_2^T \boldsymbol \xi \mathbf v_a^{\mathcal B^T}}{V}\right) \frac{\partial \mathbf v_a^{\mathcal B}}{\partial \mathcal P} \right) \\
				&\frac{\partial \mathbf v_a^{\mathcal B}}{\partial \mathcal P} =  \left.\frac{\partial \mathbf R^T \boldsymbol{\xi}}{\partial \mathcal P}\right\rvert_{\boldsymbol \xi = \mathbf v_a} + \mathbf R^T \frac{\partial \mathbf v_a}{\partial \mathcal P}\\
				&\frac{\partial \mathbf N \mathbf b}{\partial \mathcal P} = \begin{bmatrix}
					\mathbf b^T \frac{\partial \mathbf N_1^T}{\partial \mathcal P} \\
					\mathbf b_1 \frac{\partial \mathbf N_{21}}{\partial \mathcal P} + \sum_{i = 2}^4 \left(\mathbf b_i \frac{\partial \mathbf N_{22}\mathbf e_{i-1}}{\partial \mathcal P}\right)
				\end{bmatrix} \\
				&\frac{\partial \mathbf N_1^T}{\partial \mathcal P} = \begin{bmatrix}
					\mathbf 0 & \left(-\lfloor\mathbf e_2\rfloor \frac{\partial \mathbf v_a^{\mathcal B}}{\partial \mathcal P}\right)^T
				\end{bmatrix}^T \\
			 	&\frac{\partial \mathbf N_{21}}{\partial \mathcal P} = \frac{\partial \mathbf x_b}{\partial \mathcal P} \\
			 	&\frac{\partial \mathbf N_{22}\mathbf e_j}{\partial \mathcal P} = \frac{\partial \mathbf R}{\partial \mathcal P} \mathbf R^T\mathbf N_{22}\mathbf e_j + \mathbf R\left( -\lfloor\mathbf e_1\rfloor\mathbf e_j \frac{\partial a_T}{\partial \mathcal P} + \right. \nonumber\\
			 	&\hspace{5.5mm} \left. \frac{1}{m}\left(\lfloor\mathbf e_j\rfloor \frac{\partial \mathbf f_a}{\partial \mathcal P} + \frac{\partial^2 \mathbf f_a}{\partial \mathbf v_a^{\mathcal B} \partial \mathcal P} \lfloor\mathbf v_a^{\mathcal B}\rfloor \mathbf e_j - \frac{\partial \mathbf f_a}{\partial \mathbf v_a^{\mathcal B}} \lfloor\mathbf e_j\rfloor \frac{\partial \mathbf v_a^{\mathcal B}}{\partial \mathcal P}\right) \right)
			\end{align} 
		\end{subequations}
		
		\subsection{When in singularity condition $\|\mathbf v_a\| = 0 $}
		\label{app_grad_sing_va0}
		The flatness functions are rewritten in Section \ref{sec_sing_va0} when $\|\mathbf v_a\| = 0 $. The corresponding modified gradients that are different from Appendix \ref{app_grad_coordinated_flight} are given as follows:
		\begin{subequations}
			\begin{align}
				&\frac{\partial a_T}{\partial \mathcal P} = \frac{\partial \|\dot{\mathbf v}-\mathbf g\|}{\partial \mathcal P}\\
				&\frac{\partial \mathbf x_b}{\partial \mathcal P} = \frac{\|\dot{\mathbf v}-\mathbf g\|^2\mathbf I_3 - (\dot{\mathbf v}-\mathbf g)(\dot{\mathbf v}-\mathbf g)^T}{\|\dot{\mathbf v}-\mathbf g\|^3} \frac{\partial \dot{\mathbf v}}{\partial \mathcal P} \\ 
				&\frac{\partial \mathbf y_b}{\partial \mathcal P} = \frac{\|\lfloor \mathbf z_b^{\rm fix}\rfloor(\dot{\mathbf v}-\mathbf g)\|^2 \mathbf I_3 - (\lfloor \mathbf z_b^{\rm fix}\rfloor(\dot{\mathbf v}-\mathbf g))(\lfloor \mathbf z_b^{\rm fix}\rfloor(\dot{\mathbf v}-\mathbf g))^T}{\|\lfloor \mathbf z_b^{\rm fix}\rfloor(\dot{\mathbf v}-\mathbf g)\|^3} \nonumber\\
				&\hspace{10mm}  \lfloor \mathbf z_b^{\rm fix}\rfloor \frac{\partial \dot{\mathbf v}}{\partial \mathcal P} \\
				&\frac{\partial \mathbf h_1}{\partial \mathcal P} = \left(\lfloor \mathbf{z}_b^{\rm fix} \rfloor \ddot{\mathbf{v}}\right)^T\frac{\partial \mathbf{z}_b}{\partial \mathcal P} + \mathbf{z}_b^T\lfloor \mathbf{z}_b^{\rm fix} \rfloor \frac{\partial \ddot{\mathbf v}}{\partial \mathcal P} \\ 
				&\frac{\partial \mathbf h_2}{\partial \mathcal P} = \frac{\partial \ddot{\mathbf v}}{\partial \mathcal P} \\
				&\frac{\partial \mathbf N_1^T}{\partial \mathcal P} \begin{bmatrix}
					\mathbf 0 & \left(-\mathbf e_1\frac{(\dot{\mathbf{v}} - \mathbf{g})^T \lfloor\mathbf{z}_b^{\rm fix}\rfloor^2}{\|\lfloor\mathbf{z}_b^{\rm fix}\rfloor (\dot{\mathbf{v}} - \mathbf{g})\|} \frac{\partial \dot{\mathbf v}}{\partial \mathcal P}\right)^T
				\end{bmatrix}^T \\
				&\frac{\partial \mathbf N_{22}\mathbf e_j}{\partial \mathcal P} = -a_T\frac{\partial \mathbf R}{\partial \mathcal P} \lfloor \mathbf{e}_1 \rfloor \mathbf e_j + \mathbf R \lfloor \mathbf{e}_1 \rfloor \mathbf e_j \frac{\partial a_T}{\partial \mathcal P}
			\end{align}
		\end{subequations} 
	
		\subsection{When in singularity condition $|\gamma| = 0 $}
		The flatness functions are rewritten in Section \ref{sec_sing_va0} when $|\gamma| = 0 $. Since $a_T$, $\mathbf h_2$ and $\mathbf N_2$ are the same as those presented in Section \ref{sec_flat_func}, while $\mathbf h_1$ and $\mathbf N_1$ are the same as those in Section \ref{sec_sing_va0}, their gradients are the identical to those given respectively in Appendix \ref{app_grad_coordinated_flight} and \ref{app_grad_sing_va0}. 

        \section{Proof of Theorem \ref{theorem_error_state} (the error-state dynamics)}
        \label{app_error_sys}
        The dynamics of (\ref{e_d_delta_p}) and (\ref{e_d_delta_v}) simply take the time derivative to (\ref{e_delta_p}) and (\ref{e_delta_v}), respectively. Denoting $\boldsymbol{\theta} = {\rm Log}(\mathbf{R})$, the exponential map holds $\dot{\boldsymbol{\theta}} = \mathbf{A}^T(\boldsymbol{\theta}) \boldsymbol{\omega}$, where $\boldsymbol{\omega} = (\mathbf{R}^T\dot{\mathbf{R}})^\vee$, $(\cdot)^\vee$ the inverse of $\lfloor \cdot \rfloor$ that maps a skew-symmetric matrix to a vector, and $\mathbf{A}(\cdot)$ denotes the Jacobian of the exponential coordinates of $SO(3)$ \citep{bullo1995proportional}:
        \begin{equation}
            \mathbf{A}(\mathbf{\boldsymbol{\theta}}) \!=\! \mathbf{I}_3 \!+\! \left(\frac{1 \!-\! \cos\Vert\boldsymbol{\theta}\Vert}{\Vert\boldsymbol{\theta}\Vert}\right) \! \frac{\lfloor\boldsymbol{\theta}\rfloor}{\Vert\boldsymbol{\theta}\Vert} \!+\! \left(1 \!-\! \frac{\sin\Vert\boldsymbol{\theta}\Vert}{\Vert\boldsymbol{\theta}\Vert} \right) \! \frac{\lfloor\boldsymbol{\theta}\rfloor^2}{\Vert\boldsymbol{\theta}\Vert^2}
        \end{equation}
        By substituting (\ref{e_delta_p}) into the above rules, we have
        \begin{align}
            \delta \dot{\boldsymbol{\theta}} &= \mathbf{A}^T (\delta \boldsymbol{\theta})\left( \left( \mathbf{R}^T\mathbf{R}_d \right)^T \frac{d}{dt} \left( \mathbf{R}^T\mathbf{R}_d \right) \right)^\vee \nonumber\\
            &=\mathbf{A}^T (\delta \boldsymbol{\theta}) \left(\mathbf{R}_d^T\mathbf{R} \left(-\lfloor\boldsymbol{\omega}\rfloor \mathbf{R}^T\mathbf{R}_d + \mathbf{R}^T\mathbf{R}_d \lfloor\boldsymbol{\omega}_d \rfloor\right)\right)^\vee \nonumber\\      
            &=\mathbf{A}^T(\delta\boldsymbol{\theta}) \left(-\mathbf{R}_d^T\mathbf{R}\boldsymbol{\omega} + \boldsymbol{\omega}_d \right)
        \end{align}
        which is the error attitude dynamics in (\ref{e_d_delta_R}). 

	\section{Proof of Lemma \ref{lemma_linear_error_sys} (the linearized  error-state dynamics)}
	\label{app_linear_error_sys}
		The position error dynamics in (\ref{e_d_delta_p}) is linear, and the velocity error dynamics in (\ref{e_d_delta_v}) can be linearized along the reference trajectory. Specifically, since $\mathbf{R}^T\mathbf{R}_d = {\rm Exp}(\delta \boldsymbol{\theta}) \approx \mathbf{I}_3 + \lfloor \delta \boldsymbol{\theta}\rfloor$, (\ref{e_error_state_def}) implies:
		\begin{equation}
			\label{e_vel_err_dyn_tmp}	
			\begin{aligned}
				\delta \dot{\mathbf{v}}
				&= \left({a}_{T_d}\mathbf{R}_d\mathbf{e}_1 + \frac{1}{m}\mathbf{R}_d \mathbf{f}_{a_d}\right) - \left({a}_{T}\mathbf{R}\mathbf{e}_1 + \frac{1}{m}\mathbf{R} \mathbf{f}_{a}\right) \\
				&\approx \left({a}_{T_d}\mathbf{R}_d - ({a}_{T_d} - \delta a_T)\mathbf{R}_d (\mathbf{I}_3 +  \lfloor \delta \boldsymbol{\theta} \rfloor )^T\right) \mathbf{e}_1 \\
				&\qquad + \frac{1}{m}\mathbf{R}_d \left(\mathbf{f}_{a_d} - (\mathbf{I}_3 +  \lfloor \delta \boldsymbol{\theta} \rfloor )^T (\mathbf{f}_{a_d} - \delta \mathbf{f}_a) \right) \\
				&\approx \mathbf{R}_d\mathbf{e}_1 \delta a_T - \mathbf{R}_d \left(a_{T_d} \lfloor \mathbf{e}_1 \rfloor  +   \lfloor \frac{\mathbf{f}_{a_d}}{m} \rfloor \right)  \delta \boldsymbol{\theta} + \mathbf{R}_d \frac{\delta \mathbf{f}_a}{m} 
			\end{aligned}
		\end{equation}
		where 
        \begin{subequations}
			\begin{align}
				\delta \mathbf{f}_a &= \mathbf{f}_{a_d} - \mathbf{f}_a \approx \frac{\partial \mathbf{f}_{a_d}}{\partial \mathbf{v}_{a_d}^\mathcal{B}} \delta\mathbf{v}_a^\mathcal{B}
				\label{e_delta_f_a} \\
				\delta \mathbf{v}^{\mathcal{B}}_a & = \mathbf v_{a_d}^{\mathcal{B}} - \mathbf v_a^{\mathcal{B}} = \mathbf{R}_d^T\mathbf{v}_{a_d} - \mathbf{R}^T\mathbf{v}_a \nonumber\\
				&\approx \mathbf{R}_d^T\mathbf{v}_{a_d} - (\mathbf{I}_3 +  \lfloor \delta \boldsymbol{\theta} \rfloor ) \mathbf{R}_d^T (\mathbf{v}_{a_d} - \delta \mathbf{v}_a ) \nonumber\\
				&\approx  -  \lfloor \delta \boldsymbol{\theta} \rfloor  \mathbf{R}_d^T \mathbf{v}_{a_d} + \mathbf{R}_d^T \delta \mathbf{v}_a  \nonumber\\
				&\approx  \lfloor \mathbf{v}^{\mathcal{B}}_{a_d} \rfloor  \delta \boldsymbol{\theta} + \mathbf{R}_d^T \delta \mathbf{v}_a 
				\label{e_delta_v_ab}
			\end{align}
        \end{subequations}
        and the partial derivative $\partial \mathbf{f}_{a_d} / \partial \mathbf{v}_{a_d}^\mathcal{B}$ in (\ref{e_delta_f_a}) is given in (\ref{e_pfa_pvb_2}): 
        \begin{equation}
            \frac{\partial \mathbf{f}_{a_d}}{\partial \mathbf{v}_{a_d}^\mathcal{B}} = \left. \frac{\partial \mathbf{f}_{a}}{\partial \mathbf{v}_{a}^\mathcal{B}} \right|_{ \mathbf v_{a_d}^{\mathcal{B}}} \quad \quad \text{(see Equation (\ref{e_pfa_pvb_2}))} 
        \end{equation}
        
        In (\ref{e_delta_v_ab}), $\mathbf v_a = \mathbf v - \mathbf w$ is the actual air velocity and $\mathbf v_{a_d} = \mathbf v_d - \widebar{\mathbf w}$ is the air velocity used to calculate the reference trajectory. Hence, 
        \begin{subequations}
                    \begin{align}
                        \delta \mathbf v_a = \mathbf v_a - \mathbf v_{a_d} = \delta \mathbf v - \delta \mathbf w;\quad \delta \mathbf w = \mathbf w - \bar{\mathbf w}
                    \end{align}
        \end{subequations}
        and
        \begin{subequations}
                    \begin{align}
                        \delta \mathbf{f}_a & \approx \frac{\partial \mathbf{f}_{a_d}}{\partial \mathbf{v}_{a_d}^\mathcal{B}} \left( \lfloor \mathbf{v}^{\mathcal{B}}_{a_d} \rfloor  \delta \boldsymbol{\theta} + \mathbf{R}_d^T (\delta \mathbf{v} - \delta \mathbf{w}) \right)
                    \end{align}
                    \label{e_delta_v_ab_w}
        \end{subequations}
        By substituting (\ref{e_delta_f_a}) and  (\ref{e_delta_v_ab_w}) into (\ref{e_vel_err_dyn_tmp}), the velocity error dynamics can be given by
		\begin{equation}
			\delta \dot{\mathbf{v}} = \mathbf{M}_{T} \delta a_T + \mathbf{M}_\mathbf{v} \delta \mathbf{v} + \mathbf{M}_\mathbf{R} \delta \boldsymbol{\theta} + \mathbf{M}_\mathbf{w} \mathbf{w}
		\end{equation}
		where
        \begin{subequations}
			\begin{align}
				\mathbf{M}_{T} &= \mathbf{R}_d \mathbf{e}_1 \\
				\mathbf{M}_\mathbf{v} &= \frac{1}{m} \mathbf{R}_d \frac{\partial \mathbf{f}_{a_d}}{\partial \mathbf{v}_{a_d}^\mathcal{B}} \mathbf{R}_d^T \\
				\mathbf{M}_\mathbf{R} &= \mathbf{R}_d  \left(- a_{T_d} \lfloor \mathbf{e}_1 \rfloor  -  \lfloor \frac{\mathbf{f}_{a_d}}{m} \rfloor  + \frac{\partial \mathbf{f}_{a_d}}{\partial \mathbf{v}_{a_d}^\mathcal{B}}  \lfloor \frac{\mathbf{v}^{\mathcal{B}}_{a_d}}{m} \rfloor \right)  \\
				\mathbf{M}_\mathbf{w} &= -\frac{1}{m}\mathbf{R}_d \frac{\partial \mathbf{f}_{a_d}}{\partial \mathbf{v}_{a_d}^\mathcal{B}}\mathbf{R}_d^T
			\end{align}
        \end{subequations}
		
		To linearize the attitude error dynamics, we substitute (\ref{e_delta_R}) in (\ref{e_d_delta_R}) and approximate $\mathbf{A}(\delta \mathbf{R}) \approx \mathbf{I}_3$. Thus we have
		\begin{align}
			\dot{\delta \mathbf{R}} &\approx  -\mathbf{R}_d^T\mathbf{R}\boldsymbol{\omega} + \boldsymbol{\omega}_d \nonumber\\
			&\approx -\left(\mathbf{I} + \lfloor \delta\mathbf{R} \rfloor \right)^T \left(\boldsymbol{\omega}_d - \delta\boldsymbol{\omega}\right) + \boldsymbol{\omega}_d\nonumber\\
			&\approx  \delta\boldsymbol{\omega} -  \lfloor \boldsymbol{\omega}_d \rfloor  \delta \mathbf{R} 
		\end{align}

	\end{appendices}
	
\end{document}